\def\eqref#1{equation~\ref{#1}}
\def\1{\bm{1}}
\def\mT{{\bm{T}}}
\def\mV{{\bm{V}}}
\DeclareMathAlphabet{\mathsfit}{\encodingdefault}{\sfdefault}{m}{sl}
\SetMathAlphabet{\mathsfit}{bold}{\encodingdefault}{\sfdefault}{bx}{n}
\newcommand{\E}{\mathbb{E}}
\newcommand{\softmax}{\mathrm{softmax}}
\DeclareMathOperator*{\argmax}{arg\,max}
\DeclareMathOperator*{\argmin}{arg\,min}
\newcolumntype{C}[1]{>{\centering\arraybackslash}m{#1}}
\newcolumntype{L}[1]{>{\raggedright\arraybackslash}m{#1}}
\newcolumntype{R}[1]{>{\raggedleft\arraybackslash}m{#1}}
\title{Adaptive Hopfield Network: Rethinking Similarities in Associative Memory}
\author{%
Shurong Wang$^{1,2}$,
Yuqi Pan$^{2}$,
Zhuoyang Shen$^{1}$,
Meng Zhang$^{1}$,\\
\textbf{Hongwei Wang$^{1}\footnotemark[1]$ \,\&
Guoqi Li$^{2}\footnotemark[1]$}\\
$^1$ Zhejiang University\\
$^2$ Institute of Automation, Chinese Academy of Sciences\\ 
\url{shurong.22@intl.zju.edu.cn} \quad \url{hongweiwang@intl.zju.edu.cn} \\ \url{guoqi.li@ia.ac.cn}
}
\let\follows\sim
\def\sim{{\operatorname{sim}}}
\def\sep{{\operatorname{sep}}}
\def\mod{{\operatorname{mod}}}
\def\ftpt{{\operatorname{ftpt}}}
\def\dis{{\operatorname{dis}}}
\def\dot{{\operatorname{dot}}}
\def\sgn{{\operatorname{sgn}}}
\def\lse{{\operatorname{lse}}}
\def\diag{{\operatorname{diag}}}
\def\softmax{{\operatorname{softmax}}}
\def\sparsemax{{\operatorname{sparsemax}}}
\def\bx{{\mathbf x}}
\def\by{{\mathbf y}}
\def\bs{{\mathbf s}}
\def\bp{{\mathbf p}}
\def\bq{{\mathbf q}}
\def\bu{{\mathbf u}}
\def\bv{{\mathbf v}}
\def\bw{{\mathbf w}}
\def\bd{{\mathbf d}}
\def\bb{{\mathbf b}}
\def\bc{{\mathbf c}}
\def\bxi{{\boldsymbol\xi}}
\def\bXi{{\boldsymbol\Xi}}
\def\bsig{{\boldsymbol\sigma}}
\def\bPhi{{\boldsymbol\Phi}}
\def\bA{{\mathbf A}}
\def\bI{{\mathbf I}}
\def\bU{{\mathbf U}}
\def\mT{{\mathcal T}}
\def\mV{{\mathcal V}}
\newcommand{\stat}[2]{#1{\tiny$\pm${#2}}}
\newcommand{\btat}[2]{\textbf{#1}{\tiny$\pm${#2}}}
\newcommand{\utat}[2]{\underline{#1}{\tiny$\pm${#2}}}
\definecolor{brightmaroon}{rgb}{0.76, 0.13, 0.28}
\definecolor{mediumelectricblue}{rgb}{0.01, 0.31, 0.59}
\colorlet{mylinkcolor}{brightmaroon}
\colorlet{mycitecolor}{mediumelectricblue}
\colorlet{myurlcolor}{brightmaroon}
\definecolor{mybrown}{rgb}{0.4745, 0.3333, 0.2824}
\definecolor{mydarkbrown}{rgb}{0.3647, 0.2510, 0.2157}
\definecolor{mygray}{rgb}{0.6196, 0.6196, 0.6196}
\definecolor{mydarkgray}{rgb}{0.3804, 0.3804, 0.3804}
\definecolor{myblue}{rgb}{0.3765, 0.4902, 0.5451}
\definecolor{mydarkblue}{rgb}{0.2157, 0.2784, 0.3098}
\definecolor{myindigo}{rgb}{0.2470, 0.3176, 0.7098}
\definecolor{mygreen}{rgb}{0.0000, 0.5882, 0.5333}
\newtcolorbox{theoremrestate}[2]{%
  colback=myblue!12,
  colframe=myblue!12,
  coltitle=mydarkblue!85,
  fonttitle=\bfseries,
  title={Theorem~\ref{#1}: {\normalfont #2}},
  boxrule=0mm,
  arc=2mm,
  toptitle=0.75mm,
  top=-0.3mm,
  left=1mm,
  right=1mm,
  bottom=0.5mm,
  before skip=1.5mm,
  after skip=1.5mm,
}
\begin{document}

\newcommand{\revise}[1]{#1}

\maketitle

\begin{abstract}
Associative memory models are content-addressable memory systems fundamental to biological intelligence and are notable for their high interpretability.
However, existing models evaluate the quality of retrieval based on proximity, which cannot guarantee that the retrieved pattern has the strongest association with the query, failing correctness.
We reframe this problem by proposing that a query is a generative variant of a stored memory pattern, and define a variant distribution to model this subtle context-dependent generative process.
Consequently, correct retrieval should return the memory pattern with the maximum a posteriori probability of being the query's origin.
This perspective reveals that an ideal similarity measure should approximate the likelihood of each stored pattern generating the query in accordance with variant distribution, which is impossible for fixed and pre-defined similarities used by existing associative memories.
To this end, we develop adaptive similarity, a novel mechanism that learns to approximate this insightful but unknown likelihood from samples drawn from context, aiming for correct retrieval.
We theoretically prove that our proposed adaptive similarity achieves optimal correct retrieval under three canonical and widely applicable types of variants: noisy, masked, and biased.
We integrate this mechanism into a novel adaptive Hopfield network (\texttt{A-Hop}), and empirical results show that it achieves state-of-the-art performance across diverse tasks, including memory retrieval, tabular classification, image classification, and multiple instance learning.
Our code is publicly available \href{https://anonymous.4open.science/r/Adaptive-Hopfield-Network-C137/}{here}.
\end{abstract}

\renewcommand{\thefootnote}{\fnsymbol{footnote}}
\footnotetext[1]{Corresponding authors.}
\renewcommand{\thefootnote}{\arabic{footnote}}

\section{Introduction}

Associative memory represents a fundamental paradigm in information storage and retrieval, functioning as a content-addressable memory system that serves as a cornerstone of biological intelligence \citep{AM-learn, AM-animal}, particularly in the hippocampus and neocortex \citep{AM-hippo}.
Unlike conventional computer memory, which retrieves data based on a specific address, associative memory retrieves stored patterns by using a partial or noisy variant of the pattern itself as a cue.
This memory paradigm enables robust pattern completion, error correction, and fault-tolerant information processing, making it a compelling model for both understanding biological cognition and developing artificial intelligence systems.

The computational modeling of associative memory has evolved dramatically since its inception.
\cite{Hopfield} pioneered this field by introducing a recurrent neural network, dubbed Hopfield network, capable of storing and retrieving patterns through energy minimization.
Subsequent work \citep{D-Hop, Exp-Hop} extended memory capacity using a steeper energy function.
A pivotal breakthrough came with the establishment of a profound connection between the modern Hopfield network and the attention mechanism \citep{Attention}, achieved by using the $\softmax(\cdot)$ function to further separate memories \citep{M-Hop}.
This insight not only unified two previously disparate fields but also inspired further refinements that strengthened associative memory's performance from different perspectives \citep{U-Hop, S-Hop, K-Hop}, and broadened applications to tasks like clustering \citep{AM-cluster}, time series prediction \citep{STanHop}, and more \citep{AM-survey}.

Despite these significant advances, a critical and unaddressed limitation pervades the literature: the absence of a rigorous framework for assessing retrieval accuracy.
Current evaluations typically rely on proximity-based criteria, such as $\epsilon$-retrieval \citep{M-Hop, S-Hop, K-Hop, Opt-K-Hop, N-Hop}, which deem retrieval successful if the retrieved pattern is sufficiently close to a certain stored pattern.
However, proximity does not establish correctness; ensuring the retrieval is a valid memory provides no guarantee that it is the \textit{correct} one, that is, the one that has the strongest association with the query.
This oversight leads to a universal reliance on fixed, pre-defined similarity measures (e.g., inner product or Euclidean distance between two memory patterns).
Such one-size-fits-all metrics fail to capture the nuanced, context-dependent \textit{association}, or \textit{similarity}, between the query and the stored memory patterns.
For instance, the word \textit{click} is semantically similar to \textit{tap}, phonetically similar to \textit{clique}, and orthographically to \textit{clock} --- illustrating that an appropriate notion of similarity is context and task dependent while fixed metrics cannot adapt to such context, nor can they certify correctness.

Our central premise is that correctness is inherently generative: a query $\bx$ emerges as a \textit{variant} of an unknown stored pattern $\bxi_k$.
So, to properly define and achieve correct retrieval, we should model the generative process that transforms a stored pattern $\bxi_k$ into a query $\bx$.
To this end, we encapsulate the context-dependent and application-related subtleness into a probabilistic framework centered on the concept of \textit{variant distribution} $\mV(\bxi_{1 \cdots N})$, a joint distribution over stored patterns $\bxi_{1 \cdots N}$ and memory variants $\bx$, where the likelihood $p_{\mV}(\bxi_k, \bx)$ captures how probable that we observe $\bxi_k$ and it coincidentally generates $\bx$ for $(\bxi_k, \bx) \follows \mV(\bxi_{1 \cdots N})$.
Under this view, a \textit{correct retrieval} returns the memory pattern $\bxi_k$ maximizing the posterior $p_{\mV}(\bxi_k | \bx)$, that is, the likelihood of $\bx$ originates from $\bxi_k$ when observed $\bx$ as query.
With further decomposition, maximizing $p_{\mV}(\bxi_k | \bx)$ is equivalent to maximizing $p_{\mV}(\bx | \bxi_k)$, i.e., given $\bxi_k$, how probable would it generates $\bx$ as its variant.
The correct retrieval is therefore finding the pattern $\bxi_k$ that is most likely to produce $\bx$ by varianting itself.
This perspective yields an insight that optimal correct retrieval can be achieved by forcing the similarity measure to mimic the behavior of $p_{\mV}(\bx | \bxi_k)$.

However, it is not possible to derive the variant distribution $\mV(\bx_{1 \cdots N})$ and the likelihood $p_{\mV}(\bx | \bxi_k)$ on most occasions.
Thus, we need to reconstruct the unknown by mining deeply from what is observable: the query $\bx$, stored patterns $\bxi_{1 \cdots N}$, and samples matching the context that vaguely describe $\mV$.
Building on these motivations, we introduce an \textit{adaptive similarity} framework that learns to approximate $p_{\mV}(\bx | \bxi_k)$ from samples observed from the variant distribution, without assuming the variant type is known a priori.
Integrating this novel similarity measure into the Hopfield energy yields an adaptive Hopfield network that strives for correct retrieval by capturing the underlying variant distribution.
Our key contributions are as follows:
\begin{itemize}
\item We introduce the \textbf{variant distribution} to model how queries emerge from stored patterns, and formalize \textbf{correct retrieval} as a robust and meaningful criterion for evaluating the theoretical accuracy of associative memories.
\item We propose \textbf{adaptive similarity} derived from this framework and prove its optimality for three canonical and widely applicable types of memory variants: noisy, masked, and biased.
\item We build a novel \textbf{adaptive Hopfield network} (\texttt{A-Hop}) that incorporates learnable adaptive similarity, achieving state-of-the-art performance among computational associative memories on tasks including memory retrieval, tabular classification, image classification, and multiple instance learning.
\end{itemize}

\section{Background}

We consider an associative memory that stores $N$ memory patterns denoted by the memory matrix $\bXi = \begin{bmatrix} \bxi_1 ; \bxi_2 ; \cdots ; \bxi_N \end{bmatrix} \in \mathbb R^{d \times N}$, where each column vector $\bxi_i \in \mathbb R^d$ represents a memory pattern.
Given a memory variant (query) $\mathbf x \in \mathbb R^d$, the goal is to retrieve the stored memory that is most associated with it.
For simplicity, we denote $[n] \triangleq \{ k \in \mathbb Z \mid 1 \le k \le n \}$, and $\bxi \in \bXi$ means $\bxi$ is one of the column vectors of the memory matrix $\bXi$.
Appendix~\ref{apd:notations} contains a collection of notations.

\subsection{Hopfield Networks}

\begin{table}[!htbp]
\centering
\caption{Summary of all Hopfield network by components; \texttt{Hop} \citep{Hopfield}, \texttt{D-Hop} \citep{D-Hop}, \texttt{E-Hop} \citep{Exp-Hop}, \texttt{M-Hop} \citep{M-Hop}, \texttt{U-Hop} \citep{U-Hop}, \texttt{S-Hop} \citep{S-Hop}, \texttt{K-Hop} \citep{K-Hop}, and \texttt{A-Hop} (Ours).}
\vspace{0.5\baselineskip}
\label{tab:summary}
\setlength{\extrarowheight}{1pt}
\small
\vspace{-\baselineskip}
\begin{tabular}{l|c|c|c|c}
\toprule
Model & 						$\sim(\bxi, \bx)$ & 			$\sep(\bs)$ & 						$\mod(\bxi)$ & 	$E(\bx)$ \\
\midrule
\texttt{Hop} 	(Original) &	$\bxi^\top \bx$ & 				$\bs$ & 							$\bxi$ & 		$-\frac{1}{2} \bx^{\!\top} \bXi \bXi^\top \! \bx$ \\
\texttt{D-Hop} 	(Dense) &	 	$\bxi^\top \bx$ & 				$\bs^k$ & 							$\bxi$ & 		$-(\mathbf 1^{\!\top} \bXi^\top \! \bx)^{k+1}$  \\
\texttt{E-Hop} 	(Exponential) &	$\bxi^\top \bx$ & 				$\exp(\bs)$ & 						$\bxi$ & 		$-\exp(\mathbf 1^{\!\top} \bXi^\top \! \bx)$  \\
\texttt{M-Hop} 	(Modern) & 		$\bxi^\top \bx$ & 				$\softmax(\bs)$ & 					$\bxi$ & 		$\frac12 \bx^{\!\top} \! \bx - \lse(\bXi^\top \! \bx)$ \\
\texttt{U-Hop} 	(Universal)
							& 	$-\Vert \bxi - \bx \Vert_1$ & 	$\argmax(\bs)$ & 					$\bxi$ & 		/ \\
\texttt{S-Hop} 	(Sparse) & 		$\bxi^\top \bx$ & 				$\sparsemax(\bs)$ & 				$\bxi$ & 		$\frac12 \bx^{\!\top} \! \bx - \Psi^\star_2(\beta \, \bXi^\top \! \bx)$ \\
\texttt{K-Hop} 	(Kernelized) & 	$\bxi^\top \bx$ & 				$\alpha\mathrm{-entmax}(\bs)$ &		$\bPhi^{\!\top}\!\bPhi\bxi$ & 	$\frac12 \bx^{\!\top} \! \bPhi^{\!\top} \! \bPhi \bx - \Psi^\star_\alpha(\beta \, \bXi^{\!\top} \! \bPhi^{\!\top} \! \bPhi \bx ) $ \\
\rowcolor{gray!20}
\texttt{A-Hop} 	(Adaptive) & 	$\bw^\top \ftpt(\bxi, \bx)$ & 	multiple & 							$\bxi$  &	 	$-\lse(\bs(\bXi, \bx))$ \\
\bottomrule
\end{tabular}
\vspace{-\baselineskip}
\end{table}

Hopfield network is a line of associative memory that retrieves the most relevant stored memory through a similarity-based matching process.
The original Hopfield network \citep{Hopfield} uses $d$ binary neurons $\bsig \in \{-1, +1\}^d$ to represent the states of the memory system that is limited to storage of binary values.
For retrieving, the model sets the query as the initial state (i.e., $\bsig^{(0)} = \bx$), and updates one or more neuron(s) iteratively through the following dynamics until convergence:
$$
\bsig^{(t + 1)}_i = \sgn\!\left( \sum_{j = 1}^d \mathbf T_{i, j} \bsig^{(t)}_j \right),
\qquad \text{where}\;
\mathbf T_{i, j} = \sum_{k = 1}^N \bxi_{k, i} \cdot \bxi_{k, j}.
$$
A vectorized retrieval dynamics exists when updating all neurons simultaneously in one iteration:
$$
\bsig^{(t + 1)} = \sgn\!\left( \bXi \, \bXi^\top \bsig^{(t)} \right).
$$
Years later, \cite{D-Hop} improves the memory capacity of Hopfield network from $\mathcal O(d)$ to $\mathcal O(2^{d/2})$ when storing random samples.
They adopted higher-order polynomial or exponential function \citep{Exp-Hop} to distinguish each stored memory to alleviate the fuzzy memory problem:
$\bsig^{(t + 1)} = \sgn\!\left( \bXi (\bXi^\top \bsig^{(t)})^k \right)$ or $\bsig^{(t + 1)} = \sgn\!\left( \bXi \exp(\bXi^\top \bsig^{(t)}) \right)$.

This concept has evolved significantly and was extended to memories with continuous value.
Modern Hopfield networks abstract retrieval as a one-iteration update \citep{M-Hop}, and their retrieval dynamics $\mT(\bx)$ can be unified under a three-step procedure \citep{U-Hop}:

(1) \textbf{Similarity} [$\bs = \sim(\bXi, \bx)$]:
The query $\bx$ is compared against all stored patterns $\bxi_{1 \cdots N}$ using the similarity function $\sim(\cdot, \cdot)$, obtaining a vector of similarity scores $\bs \in \mathbb R^N$, where $\bs_k = \sim(\bxi_k, \bx)$.

(2) \textbf{Separation} [$\bp = \sep(\bs)$]:
The similarity scores (logits) $\bs$ are transformed by a separation function $\sep(\cdot)$ into a probability distribution $\bp$. The separation function sharpens the scores and emphasizes patterns with high similarity.

(3) \textbf{Readout} [$\mathbf y = \bXi \, \bp$]:
The final retrieved pattern $\by$ is computed as a weighted combination of stored memory patterns, using the weights $\bp$ provided by the separation function.

Combining each step gives the unified retrieval dynamics:
$$
\by = \mT(\bx) = \bXi \, \mathrm{sep}(\mathrm{sim}(\bXi, \bx)).
$$
Concretely, \cite{M-Hop} proposed using  $\softmax(\cdot)$ function as the separation function, which further enlarges the memory capacity and draws a tight connection between associative memory and attention mechanism, with the retrieval dynamics being $\mT(\bx) = \bXi \, \softmax\left( \bXi^\top \bx \right)$.
Later, \cite{S-Hop} proposed sparse Hopfield network substituting $\softmax(\cdot)$ with $\sparsemax(\cdot)$, for inducing sparse selection while retaining differentiability.
More recently, \cite{K-Hop} attempted to store memory patterns in a kernel space with greater separation among patterns, giving rise to adding a new modulation step to the existing three-step unified framework.
For clarity, we use the modulation function $\mod(\cdot)$ to describes how memory patterns are stored or pre-trained for better retrieval and larger capacity.
So, it broadens the unified framework to:
$$
\by = \bXi \, \sep(\sim(\mod(\bXi), \bx)).
$$
The kernelized Hopfield network \citep{K-Hop} adopted $\mod(\bXi) = \bPhi^\top \bPhi \, \bXi$ for a learnable matrix $\bPhi \in \mathbb R^{D_\bPhi \times d}$, that projects memory patterns into a kernel space with the retrieval dynamics being $\mT(\bx) = \bXi \, \sep((\bPhi \, \bXi)^\top (\bPhi \, \bx)) = \bXi \, \sep((\bPhi^\top \bPhi \, \bXi)^\top \bx)$.
The kernel $\bPhi$ is trained to minimize a separation loss defined on $\bXi$, so that the expected Euclidean distance between any two memory patterns is maximized.
A succeeding work \citep{Opt-K-Hop} uses spherical codes to find the optimal kernel $\bXi$ that maximizes the capacity of the kernelized Hopfield network.

Furthermore, the energy-based view is a defining feature of Hopfield networks: memory retrieval can be viewed as descending on a \textit{energy landscape} (a Lyapunov function) $E(\cdot)$ whose minima coincide with stored patterns (or their modulated version).
Formally, the retrieval dynamics $\mT(\bx)$ and the corresponding energy function $E(\bx)$ are jointly and carefully designed such that each update monotonically decreases the energy (i.e., $E(\mT(\bx)) < E(\bx)$), and successful retrieval occurs when being sufficiently close to a generalized fixed point near a specific memory pattern $\bxi_k \in \bXi$ (i.e., $\Vert \mT(\bx) - \bxi_k \Vert_2 < \epsilon$).
This principled linkage between dynamics and energy ensures convergence and provides a powerful interpretable model of memory retrieving for Hopfield networks.
Connecting to the previous unified framework, the separation function decides the direction of the retrieval dynamics $\mT(\cdot)$, and the modulation function reshapes the geometry of the energy landscape $E(\cdot)$, and we organize all existing Hopfield networks' components and energy function in Table~\ref{tab:summary}.

However, across existing formulations, the energy and retrieval dynamics are anchored to a fixed, task-agonistic similarity measure (typically the dot product).
Apart from that, the energy and dynamics are solely determined by stored memories $\bXi$ that overlook the nature of the subtle, nuanced, context-specific \textit{association} between queries and memories required for ``correct'' retrieval.
This fundamental gap motivates our work: to refine the energy and retrieval dynamics around a learnable, adaptive similarity measure while preserving the precious interpretability of Hopfield networks.

\section{Methods}

In this section, we first establish a rigorous probabilistic framework to define \textit{correct retrieval}, eliminating limitations of conventional proximity-based metrics (Section~\ref{sec:var_dis_correct_ret}).
To make this concept practical, we develop the similarity footprint (Section~\ref{sec:sim_ftpt}), a multi-scale descriptor measuring association between queries and memory patterns, and use it to learn an adaptive similarity integrated into Adaptive Hopfield Network (\texttt{A-Hop}) that achieves optimal correct retrieval for noisy, masked, and biased types of variants, \revise{and has a decreasing, convergent, and bounded energy} (Section~\ref{sec:ada_sim_a_hop}).

\subsection{Variant Distribution and Correct Retrieval} \label{sec:var_dis_correct_ret}

Conventional analyses of associative memory \citep{M-Hop, S-Hop, K-Hop, Opt-K-Hop} mostly focus on $\epsilon$-retrieval:
\begin{definition}{$\epsilon$-retrieval \citep{S-Hop, K-Hop, N-Hop}}{eps-ret}
Given a query $\bx \in \mathbb R^d$ and the retrieval result $\by \in \mathbb R^d$ given by the memory system, a memory pattern $\bxi \in \bXi$ is said to be $\epsilon$-retrieved if $\Vert \by - \bxi \Vert_2 \le \epsilon$.
\end{definition}
While $\epsilon$-retrieval ensures the retrieval result $\by$ lies near a certain stored memory pattern $\bxi_k \in \bXi$, it provides no guarantee that $\bxi_k$ is the most appropriate match for query $\bx$.
The query $\bx$ may have stronger associations with a different pattern $\bxi_j$ ($j \ne k$), denoting that $\bxi_j$ could be the more appropriate match for $\bx$.
This identifies that proximity alone is an insufficient proxy for correctness.

To address this limitation, we use a probability distribution to model the generative process of the query $\bx$.
We posit that a query $\bx$ is not an arbitrary vector but a \textit{variant} of a specific stored memory pattern $\bxi \in \bXi$ generated by a context-dependent process.
We formalize this via the \textit{variant distribution}, which models the relation of memory patterns $\bxi \in \bXi$ and queries $\bx \in \mathbb R^d$ as variants:

\begin{definition}{Variant distribution}{var-dis}
A variant distribution $\mV(\bXi)$ is a joint distribution over pair $(\bxi, \bx) \in \bXi \times \mathbb R^d$ where $\bxi \in \bXi$ is one of the stored memory patterns and $\bx \in \mathbb R^d$ is an arbitrary query.

For $(\bxi, \bx) \follows \mV(\bXi)$, the probability density function $p_{\mV(\bXi)}(\bxi, \bx)$ (or $p_{\mV}(\bxi, \bx)$ when unambiguous) measures the likelihood of observing $\bxi$ and $\bx$ at the same time.
\end{definition}

Additionally, the posterior $p_{\mV}(\bxi | \bx)$ represents the likelihood that query $\bx$ originates from memory pattern $\bx$, and the likelihood $p_{\mV}(\bx | \bxi)$ models how probable that $\bxi$ generates $\bx$.
This leads to a rigorous definition of the context-dependent \textit{correct retrieval}:

\begin{definition}{Correct retrieval}{correct-ret}
A query $\bx$ is said to be correctly retrieved under $\mV(\bXi)$, if the retrieval result $\by$ satisfies that:
\begin{equation} \label{eqn:correct_ret}
\argmin_{\bxi' \in \bXi}\left\{ \Vert \by - \bxi' \Vert_2 \right\} = \argmax_{\bxi' \in \bXi}\left\{ p_{\mV}(\bxi' | \bx) \right\}.
\end{equation}
\end{definition}

In Equation~\ref{eqn:correct_ret}, the left-hand side identifies the closest memory pattern to the retrieval result $\by$ (given by the memory system), while the right-hand side is ground truth (the most probable origin of query $\bx$ given by variant distribution $\mV(\bXi)$).
Thus, intuitively, correct retrieval requires that the closest memory pattern coincides with ground truth.
With further derivation,
\begin{equation} \label{eqn:likelihood}
\argmax_{\bxi' \in \bXi} \{ p_{\mV}(\bxi' | \bx) \}
= \argmax_{\bxi' \in \bXi} \left\{ p_{\mV}(\bx | \bxi') \cdot \frac{p_{\mV}(\bxi')}{p_{\mV}(\bx)} \right\}
= \argmax_{\bxi' \in \bXi} \{ p_{\mV}(\bx | \bxi') \cdot p_{\mV}(\bxi') \}.
\end{equation}
This reformulation is necessary as modeling the likelihood $p_{\mV}(\bx | \bxi)$ is more tractable than directly estimating the posterior $p_{\mV}(\bxi | \bx)$.
The likelihood $p_{\mV}(\bx | \bxi)$ is conditioned on a single, finite, known memory $\bxi$, while the posterior $p_{\mV}(\bxi | \bx)$ requires estimating a complex function that maps the entire query space $\mathbb R^d$ to a discrete distribution over $\bXi$.
Given that the prior $p_{\mV}(\bxi)$ is typically uniform or can be easily estimated from samples, the central challenge of achieving correct retrieval reduces to accurately modeling $p_{\mV}(\bx | \bxi)$, in other words, how probable does $\bx$ generate $\bxi$ under $\mV(\bXi)$?
With this in hand, it is possible to model three canonical and common variant types rigorously:

\begin{definition}{Noisy variant}{noisy-var}
A query $\bx$ is a noisy variant if it is generated by adding Gaussian noise to a certain memory pattern $\bxi \in \bXi$.
Formally, $(\bx - \bxi) \follows \mathcal N(\mathbf 0, \diag(\bsig))$ holds for $(\bxi, \bx) \follows \mV_{\text{noisy}}(\bXi)$, where $\diag(\bv)$ transform vector $bv$ to a diagonal matrix.
The likelihood of noisy variant is:
$$
p_{\mV_{\text{noisy}}}(\bx | \bxi) = \frac{1}{(2\pi)^{d/2} |\diag(\bsig)|^{1/2}} \exp\left( -\frac{1}{2} (\bx - \bxi)^\top \diag(\bsig)^{-1} (\bx - \bxi) \right).
$$
\end{definition}
Noisy variants have been widely studied \citep{D-Hop, S-Hop, K-Hop}, and it occurs in scenarios such as sensor noise.
Specially, under isotropy $\bsig = \sigma \mathbf 1$, the respective likelihood reduces to:
$p_{\mV_{\text{noisy}}}(\bx | \bxi) = (2\pi\sigma)^{-d/2} \exp(-\Vert \bx - \bxi \Vert_2^2 \,/\, 2\sigma)$.

\begin{definition}{Masked variant}{masked-var}
A masked variant of a memory pattern $\bxi \in \bXi$ is obtained by changing values in each dimension with probability $p_{\text{masked}}$ to numbers generated by $\mathcal G$.
The likelihood of masked variant is:
$$
p_{\mV_{\text{masked}}}(\bx | \bxi) = \exp\left( \ln p_{\text{masked}} \cdot \sum_{i=1}^d [1 - \delta(\bx_i - \bxi_i)] \right) \times \prod_{i=1}^d p_{\mathcal G}(\bx_i)^{[1 - \delta(\bx_i - \bxi_i)]}.
$$
\end{definition}
Masked variants arise in real-world scenarios such as information loss during transmission, the same object appearing in different background, and more.

\begin{definition}{Biased variant}{biased-var}
Adding a global bias to memory patterns gives the biased variant.
Formally, $\bx - \bxi = \bd$ holds for $(\bxi, \bx) \follows \mV_{\text{biased}}(\bXi)$ and a constant vector $\bd \in \mathbb R^d$.
The likelihood of biased variant:
$$
p_{\mV_{\text{biased}}}(\bx | \bxi) = \delta\left[ d - \sum_{i=1}^d \delta(\bx_i - \bxi_i - \bd_i) \right],
\quad\text{where}\;
\delta(x) = \begin{cases} 1 & x = 0 \\ 0 & \text{otherwise} \end{cases}.
$$
\end{definition}
Biased variants occur as a systematic difference, such as changes in light conditions or use of filters.

\begin{figure}[!htbp]
\centering
\includegraphics{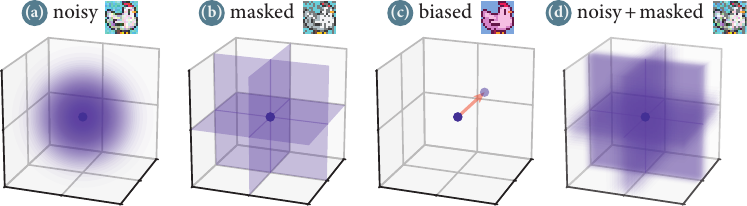}
\caption{Visualization of probability density function $p_{\mV}(\bx | \bxi)$ for noisy, masked, biased, and noisy + masked variants. Darker regions indicate larger $p_{\mV}(\bx | \bxi)$, and the central dark point represents $\bxi$.}
\label{fig:cloud}
\vspace{-\baselineskip}
\end{figure}

We visualize the conditional probability density function $p_{\mV}(\bx | \bxi)$ in Fig.~\ref{fig:cloud}, providing intuition akin to an \textit{electron cloud}, with a memory pattern $\bxi$ as the atom nucleus and its variants as orbiting electrons.
A direct observation is that $p_{\mV}(\bx | \bxi)$ varies significantly across contexts, and may be analytically intractable.
For instance, even though visualizing the noisy + masked variant (Fig.~\ref{fig:cloud} (d)) is possible by composing these two operations, deriving its likelihood $p_{\mV}(\bx | \bxi)$ is analytically cumbersome.
Consequently, although $\mV(\bXi)$ is a principled tool to link queries with memory patterns, it poses two challenges for correct retrieval: (1) the underlying variant type is generally unknown a priori;
and (2) the resulting variant distribution $\mV(\bXi)$ can be too complex to model explicitly.

\subsection{Similarity Footprint} \label{sec:sim_ftpt}

In the previous section, we established correct retrieval as selecting the pattern that maximizes $p_{\mV}(\bx | \bxi)$.
\revise{While $p_{\mV}(\bx | \bxi)$ constitutes the ideal similarity metric (guiding the memory system to return $\argmax_{\bxi \in \bXi} \{ p_{\mV}(\bxi | \bx) \}$ as per Eq.~\ref{eqn:likelihood}), it is often analytically intractable or unknown in real-world scenarios.}
\revise{To address this, we propose mining richer evidence from observable quantities to construct a tractable surrogate that mimics the selectivity of the true likelihood $p_{\mV}(\bx | \bxi)$.}
We introduce \textit{similarity footprint}, a multi-scale descriptor capturing the relation between query $\bx$ and memory patterns $\bXi$ \revise{in multiple subspaces}.
\revise{By replacing proximity-based scalar similarity measure (e.g., $\bxi^\top \bx$) with a structured descriptor (the footprint, a $d$-dimensional vector), the model could form a accurate and robust decision based on the comprehensive subspatial evidence.}

We begin with a \textit{base similarity} measure (e.g., dot product $\bxi^\top \bx$ or negative squared Euclidean distance $-\Vert \bxi - \bx \Vert_2^2$), and define the \textit{$k$-optimal similarity} between $\bxi$ and $\bx$:
$$
\sim^{(k)}(\bxi, \bx) \triangleq \max_{D \subseteq [d], |D| = k}\left\{ \sim(\bxi_D, \bx_D) \right\},
\qquad\text{where}\;
\bv_{D} \triangleq \begin{bmatrix} \bv_{D_1}, \bv_{D_2}, \cdots, \bv_{D_{|D|}} \end{bmatrix}^\top.
$$
Here, $\mathbf v_D$ is a sub-vector of $\mathbf v$ containing only the elements corresponding to indices in $D$.
Intuitively, $\sim_k(\bxi, \bx)$ quantifies the \revise{largest alignment} between $\bx$ and $\bxi$ \revise{within their most consistent $k$-dimensional subspace}.
\revise{This formulation inherently filters out corruption or pure noise by focusing on the most informative dimensions while ignoring outliers.}
Consequently, we define the \textit{similarity footprint} as the vector aggregating all $k$-optimal similarity $\sim^{(k)}(\cdot, \cdot)$ across all dimensionalities:
$$
\ftpt_{\sim}(\bxi, \bx) \triangleq \begin{bmatrix} \sim^{\revise{(1)}}(\bxi, \bx), \; \sim^{\revise{(2)}}(\bxi, \bx), \; \cdots, \; \sim^{\revise{(d)}}(\bxi, \bx) \end{bmatrix}^\top
$$

\revise{This vector serves as the rich multi-scale descriptor of the subspatial relation between $\bxi$ and $\bx$, offering more evidence for measuring similarity.}
While a naïve computation of the footprint requires evaluating all $2^d - 1$ subspaces (impractical), an efficient computation is possible for \textit{decomposable similarity} measures \revise{(Eq.~\ref{eqn:decomposable})}.
\revise{For such measures, whose result is the aggregation of similaritiy in each dimension, the footprint computation reduces to $\mathcal O(d \log d)$ via sorting, and the procedure is visualized in Fig.~\ref{fig:main} bottom part.}
Let $\bq$ be the dimension-wise similarity vector \revise{(Fig.~\ref{fig:main} (1))}, where $\bq_i = \sim(\bxi_i, \bx_i)$ for $i \in [d]$, and let $\tilde \bq$ be the vector $\bq$ sorted in \revise{decreasing} order \revise{(Fig.~\ref{fig:main} (2))}.
Then, the similarity footprint can be calculated as \revise{(Fig.\ref{fig:main} (3))}:
\begin{equation} \label{eqn:ftpt}
\ftpt_{\sim}(\bxi, \bx) = \bU \tilde\bq.
\end{equation}
where $\bU$ is the \revise{lower-left} triangular matrix of $\mathbf 1_{d \times d}$, (i.e., $\bU_{i, j} = 1$ if $1 \le j \le i \le d$, and $\bU_{i, j} = 0$ otherwise).
\revise{Such simplication is valid because $\sim^{(k)}(\bxi, \bx) = \sum_{i=1}^k \tilde\bq_i$ holds for decomposable similarity measures, and a rigorous proof is provided in Appendix \ref{apd:theorems} Theorem~\ref{thm:equiv}.}

\begin{figure}[!htb]
	\centering
	\vspace{-2.5mm}
	\includegraphics{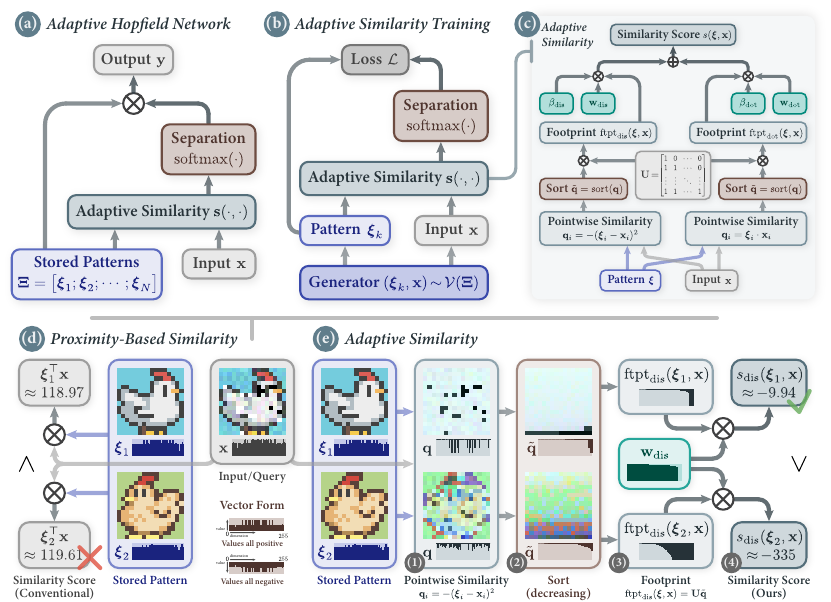}
	\caption{
	\revise{%
	\textbf{Top}: The architecture of adaptive Hopfield network (a), the training procedure for adaptive similarity (b) and its design choice (c).
	\textbf{Bottom}: An illustrative example of memory retrieval procedure involving two $16\times16$ image patterns and one query.
	The conventional proximitiy-based similarity (d) fails to retrieve the \textit{correct} pattern while the adaptive similarity (e) succeeds.}}
	\label{fig:main}
	\vspace{-3mm}
\end{figure}

\subsection{Adaptive Similarity and Adaptive Hopfield Network} \label{sec:ada_sim_a_hop}

The similarity footprint provides a structured, multi-scale descriptor of the association between a query $\bx$ and a memory pattern $\bxi$.
To exploit these subspatial evidences and construct a similarity measure that adapts to the underlying variant distribution $\mV(\bXi)$, we introduce \textit{adaptive similarity} as a learnable linear combination of the footprint elements: $s_{\sim}(\bxi, \bx) = \bw^\top \ftpt_{\sim}(\bxi, \bx) = \bw^\top \bU \tilde\bq$ \revise{(Fig.~\ref{fig:main} (4))}, for some weight vector $\bw \in \mathbb R^d$.
This formulation enables the model to learn the relative importance of subspaces with varing sizes and pay extra attention on more informative subspaces.
As an example, for masked variants, the model could assign larger weights to the first $m$ terms in the footprint, \revise{effectively ignoring the ``tail'' of the footprint containing corrupted dimensions},
whereas for noisy variants, it might distribute weights to larger subspaces for a global view. 

To further enhance the model's expressiveness, we combine footprints of multiple base similarities \revise{(Fig.~\ref{fig:main} (c))} and derive the final similarity function $s(\bxi, \bx)$ and its vectorized form $\bs(\bxi, \bx)$:
$$
s(\bxi, \bx) = \sum_{k=1}^B \beta_k \cdot \bw_k^\top \ftpt_{\sim_k}(\bxi, \bx)
\quad\;\text{and}\quad\;
\bs(\bXi, \bx) = \begin{bmatrix} s(\bxi_1, \bx), \; s(\bxi_2, \bx), \; \cdots, \; s(\bxi_N, \bx) \end{bmatrix}^\top,
$$
where $\beta_{1 \cdots B}$ are learnable scalars weighting $B$ different base similarities.
In this work, we adopt two simple and common base similarities:
$\dis(\bxi, \bx) = -\Vert \bxi - \bx \Vert_2^2$ and $\dot(\bxi, \bx) = \bxi^\top \bx$.
Integrating this adaptive similarity into the unified associative memory framework (Table~\ref{tab:summary}) using the $\softmax(\cdot)$ as the separation function yields \textit{Adaptive Hopfield Network} (\texttt{A-Hop}, \revise{complete achitecure illustrated in Fig.~\ref{fig:main} (a)(c)}), and its retrieval dynamics can be formulated as:
\begin{equation} \label{eqn:ret-dyn}
\vspace{-0.1\baselineskip}
\by = \mT(\bx) = \bXi \, \sep(\bs(\bXi, \bx)) = \bXi \, \softmax\!\left( \beta_{\dis} \cdot \bs_{\dis}(\bXi, \bx) + \beta_{\dot} \cdot \bs_{\dot}(\bXi, \bx) \right).
\end{equation}
The parameters $\bw$'s and $\beta$'s are optimized to \revise{align the model's behavior with the underlying variant distribution.}
\revise{We construct a training set by drawing samples from the variant distribution $\mV(\bXi)$ and} minimize the discrepancy loss between the model's predicted likelihood $\tilde p_{\mV}(\bx | \bxi_k) \triangleq \sep(\bs(\bXi, \bx))_k$ and the underlying ground-truth likelihood $p_{\mV}(\bx | \bxi_k)$ \revise{(see Fig.~\ref{fig:main} (b) for training procedure)}:
\begin{equation} \label{eqn:loss}
\mathcal L(\bXi, \mV)
= \E_{(\bxi_k, \bx) \follows \mV(\bXi)} \left[ -\log \tilde p_{\mV}(\bx | \bxi_k)  \right].
\end{equation}

\revise{For an intuitive understanding of adaptive similarity, we demonstrates a comprehensive procedure of calculating the adaptive similarity score between $\bxi_1, \bxi_2$ (pixel art of two chicken) and the query $\bx$ (a noisy + masked variant of $\bxi_1$) in the botton part of Fig.~\ref{fig:main}.
While rigorously, Adaptive Hopfield Network satisfies the following theoretical properties.}

\begin{definition}{Optimal correct retrieval}{opt-correct-ret}
We say a retrieval dynamics $\mT(\bx)$ achieves optimal correct retrieval under $\mV(\bXi)$, if for any $(\bxi, \bx) \follows \mV(\bXi)$ it achieves correct retrieval for query $\bx$.
\end{definition}
\begin{theorem}{\texttt{A-Hop}'s retrieval dynamics for optimal correct retrieval}{ret-dyn}
The following retrieval dynamics adopted by \texttt{A-Hop} achieves optimal correct retrieval for noisy, masked, and biased variants, with a careful design of $\bs(\bXi, \bx)$:
$$
\by = \mT(\bx) = \bXi \, \sep(\bs(\bXi, \bx))
$$
\end{theorem}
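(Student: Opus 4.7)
My approach is, for each of the three variant types, to show that the adaptive-similarity class of \texttt{A-Hop} is expressive enough to realize a similarity $s(\bxi,\bx)$ whose $\bxi$-argmax agrees with the MAP pattern $\bxi_{k^\ast}=\argmax_{\bxi\in\bXi} p_{\mV}(\bxi\mid\bx)$, and then to convert this similarity ordering into the distance ordering on the readout $\by$ demanded by Definition~\ref{def:correct-ret}. Under a uniform prior on $\bXi$, Eq.~\ref{eqn:likelihood} reduces the goal to matching $\log p_{\mV}(\bx\mid\bxi)$ up to an additive $\bxi$-independent constant, so the core of the proof is exhibiting explicit $(\bw,\beta)$ that achieve this under each of $\mV_{\text{noisy}}$, $\mV_{\text{masked}}$, and $\mV_{\text{biased}}$.

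For the noisy case, since $\log p_{\mV_{\text{noisy}}}(\bx\mid\bxi)=-\|\bxi-\bx\|_2^2/(2\sigma)+c(\bx)$ under isotropy, I would pick $\bw_{\dis}=(0,\ldots,0,1)^\top$, $\beta_{\dis}=1/(2\sigma)$, $\beta_{\dot}=0$; then $\bw_{\dis}^\top \bU\tilde\bq=\sum_i \tilde\bq_i=\sum_i \bq_i=\dis(\bxi,\bx)$ recovers the log-likelihood exactly, and the anisotropic subcase follows after absorbing the per-dimension variances into a richer $\bw_{\dis}$ via the ordered structure of $\tilde\bq$. For the masked case I would assume $\mathcal G$ is continuous so that $\bx_i=\bxi_i$ iff dimension $i$ was not masked; then under $\dis$ one has $\bq_i=0$ on matches and $\bq_i<0$ on mismatches, so $\tilde\bq$ has exactly $M$ leading zeros, where $M$ is the match count. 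Because Definition~\ref{def:masked-var} makes $\log p_{\mV_{\text{masked}}}(\bx\mid\bxi)$ an affine function of $M$ up to $\bxi$-independent terms (at least when $\log p_{\mathcal G}$ is approximately constant in $\bx_i$, e.g.\ uniform $\mathcal G$), a telescoping or indicator-like choice of $\bw_{\dis}$ reads $M$ off the footprint and closes this case.

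The biased case is the main obstacle. Its likelihood is a Dirac-type indicator supported only on $\bxi=\bx-\bd$, so matching $\log p_{\mV}$ as a continuous function is impossible; instead I only require $s(\bxi,\bx)$ to strictly peak at $\bxi_{k^\ast}$. The key observation is that at $\bxi=\bxi_{k^\ast}$ one has $\bq_i=-\bd_i^2$ regardless of the pair, so $\ftpt_{\dis}(\bxi_{k^\ast},\bx)$ collapses to a fixed footprint determined entirely by $\bd$, while every wrong candidate $\bxi_j$ produces a $\tilde\bq$ with a structurally different profile. The proof then reduces to exhibiting a $\bw_{\dis}$ (augmented by a $\dot$-based tie-breaker when two distinct wrong patterns happen to share identical $\dis$-footprints) that separates the unique correct footprint from the finite family of wrong ones; this is precisely what Eq.~\ref{eqn:loss} drives, and a linear-separability argument over this finite family, possibly under a mild non-degeneracy assumption on $\bXi$, closes the construction.

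Once such an $s$ is fixed, the final step converts the similarity ordering into the distance criterion: since the common scale $\beta$ in $\softmax(\bs(\bXi,\bx))$ can be made arbitrarily large, the softmax concentrates at index $k^\ast$, so the convex combination $\by=\bXi\,\softmax(\bs(\bXi,\bx))$ lies arbitrarily close to $\bxi_{k^\ast}$; under any positive pairwise separation of the stored patterns, this forces $\argmin_{\bxi'\in\bXi}\|\by-\bxi'\|_2=\bxi_{k^\ast}$, completing the proof.
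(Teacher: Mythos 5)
Your high-level plan mirrors the paper's actual proof (explicit per-variant weight constructions that make the similarity reproduce $\log p_{\mV}(\bx\mid\bxi)$ up to a $\bxi$-independent constant, then converting the similarity ordering into the distance criterion), but three of your steps fail concretely where the paper deviates from you. First, the anisotropic noisy subcase cannot be "absorbed into a richer $\bw_{\dis}$ via the ordered structure of $\tilde\bq$": sorting destroys the correspondence between footprint positions and coordinate indices, so a fixed $\bw$ applied to $\bU\tilde\bq$ weights a dimension by its query-dependent \emph{rank}, not by its variance $\bsig_i$. The paper's Lemma~\ref{lem:opt-noisy} handles exactly this by switching to the \emph{unsorted} footprint $\bU\bq$ and choosing $\bu^\top=\bw^\top\bU$ with $\bu_i=-\bsig_i^{-1}$. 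Second, for the masked variant no linear functional of $\tilde\bq$ "reads off" the match count $M$: writing $s=\sum_j u_j\tilde\bq_j$ with $\tilde\bq_j\le 0$, a wrong candidate with $M-1$ matches but mismatch magnitudes tending to $0$ drives $s\to 0$, beating the correct candidate with $M$ matches and bounded-away mismatches, whereas the MAP ranking depends on $M$ alone. The paper proves this case (Lemma~\ref{lem:opt-masked}) only by passing to the \emph{discrete} footprint $\boldsymbol\delta(\tilde\bq)$ with $\bw=\mathbf 1$, and explicitly remarks that a continuous similarity cannot achieve optimality here; your "indicator-like choice of $\bw_{\dis}$" is gesturing at this but is not realizable inside the linear-in-$\tilde\bq$ class you stay in.

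Third, your biased-case key observation is false as stated, because the sorted $\dis$ footprint is sign-blind: if $\bxi_j=\bxi_{k^\ast}+2\bd$ happens to be stored, then for the query $\bx=\bxi_{k^\ast}+\bd$ one gets $\bx-\bxi_j=-\bd$, so the wrong candidate produces \emph{exactly} the footprint $\bU\,\widetilde{(-\bd_i^2)}$ of the correct one --- there is no "structurally different profile", and your linear-separability argument over wrong footprints cannot start (the $\dot$-based tie-breaker might rescue it, but you leave that unproven and need a non-degeneracy hypothesis the theorem does not assume). The paper's Lemma~\ref{lem:opt-biased} avoids this entirely with a sign-sensitive \emph{unsorted} construction $s(\bxi,\bx)=\bw^\top\bU(\bx-\bxi)-\Vert\bx-\bxi\Vert_2^2$ with $\bu=2\bd$, which equals $-\Vert\bx-\bxi-\bd\Vert_2^2$ up to a constant and is exact for arbitrary $\bXi$ and $\bd$. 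Finally, your concluding softmax-concentration step does not deliver \emph{optimal} correct retrieval: Definition~\ref{def:opt-correct-ret} quantifies over all $(\bxi,\bx)\follows\mV(\bXi)$, and in the noisy case the top-two similarity gap is not bounded away from zero over the support, so no fixed finite inverse temperature concentrates uniformly; near decision boundaries the convex combination $\by$ can have a different nearest stored pattern than $\bxi_{k^\ast}$. The paper sidesteps this by taking $\sep(\cdot)=\argmax(\cdot)$ outright and notes this choice is crucial --- the $\beta\to\infty$ limit you invoke is precisely that, and you should commit to it rather than to a fixed large $\beta$.
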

\revise{First, \texttt{A-Hop} is theoretically capable of achieving \textit{optimal correct retrieval} (its retrieval is always \textit{correct}, see Definition~\ref{def:opt-correct-ret}) for noisy, masked, and biased variants when weights $\bw$'s and the adaptive similarity $\bs(\bXi, \bx)$ are ideally configured.
A complete walk-through and proof to Theorem~\ref{thm:ret-dyn} is presented in Appendix \ref{apd:proof-optimal}.
While optimal correct retrieval is not always guaranteed for continuous and parameter-efficient adaptive similarity (e.g., $s(\bxi, \bx) = \bw^\top \ftpt(\bxi, \bx)$), we discuss the tradeoff between learnability and optimality in Appendix \ref{apd:optimal}, and show that learnable adaptive similarity can attain high retrieval accuracy with large probability.
Together, this theoretical analysis and the empirical results in ablation study (Appendix~\ref{apd:ablation}) validate the design choice illustrated in Fig~\ref{fig:main}.}

\begin{theorem}{\texttt{A-Hop}'s decreasing, convergent, and bounded energy function}{energy}
Energy $E(\bx)$ will be monotonically decreasing, convergent, bounded from below, for isotropic noisy, and biased variants, if the following energy is used:
$$
E(\bx) = -\lse\left( \bs(\bXi, \bx) \right)
$$
\end{theorem}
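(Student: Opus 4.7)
The plan is to reduce $E(\bx) = -\lse(\bs(\bXi,\bx))$ to the standard modern Hopfield energy and invoke the concave--convex procedure (CCCP) argument of \citep{M-Hop}. By the construction used in Theorem~\ref{thm:ret-dyn}, the ideal adaptive similarity for the isotropic noisy case admits the form $\bs_k(\bXi,\bx) = -\alpha\Vert\bxi_k-\bx\Vert_2^2 + C$ for some $\alpha>0$ and $C\in\mathbb{R}$, realised by placing all weight on the full-dimensional footprint entry $\sim^{(d)}$ of the $\dis$ base (which is permutation-invariant and hence sort-free). The biased case follows by the analogous construction after absorbing the bias $\bd$ into the effective memory centres $\bxi_k+\bd$, giving $\bs_k(\bXi,\bx) = -\alpha\Vert\bxi_k+\bd-\bx\Vert_2^2 + C$.

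Expanding the squared distance (noisy case; biased is identical after the shift) yields
$$
E(\bx) = \alpha\Vert\bx\Vert_2^2 - \lse_k\!\bigl(-\alpha\Vert\bxi_k\Vert_2^2 + 2\alpha\,\bxi_k^{\!\top}\bx\bigr) - C.
$$
This admits the CCCP decomposition $E = f - g$ with $f(\bx)=\alpha\Vert\bx\Vert_2^2$ and $g(\bx)=\lse_k(-\alpha\Vert\bxi_k\Vert_2^2 + 2\alpha\,\bxi_k^{\!\top}\bx)$, both convex (the latter because lse of affine arguments is convex). Solving the CCCP stationarity condition $\nabla f(\by)=\nabla g(\bx)$ gives $2\alpha\by = 2\alpha\bXi\,\softmax(\bs(\bXi,\bx))$, i.e., $\by = \mT(\bx)$ exactly as in Equation~\ref{eqn:ret-dyn}. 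The standard CCCP inequality (combining the subgradient lower bounds for $f$ at $\by$ and for $g$ at $\bx$) then yields $E(\mT(\bx)) \le E(\bx)$ for every $\bx$.

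For the lower bound I would apply $\lse_k(a_k) \le \log N + \max_k a_k$, from which a rearrangement gives
$$
E(\bx) \;\ge\; \alpha\min_k\Vert\bx-\bxi_k\Vert_2^2 - C - \log N \;\ge\; -C - \log N.
$$
Monotone decrease combined with this uniform lower bound then delivers convergence of $\{E(\bx^{(t)})\}$ along the iterates by the monotone convergence theorem, completing the three claims of the theorem.

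The main obstacle lies in the first step: the adaptive similarity is built from the sorted footprint, so $\bs(\bXi,\bx)$ is in general only piecewise smooth and need not be globally quadratic. The reduction works cleanly for the noisy case because the optimum is achieved on the sort-invariant top entry $\sim^{(d)}$, but for the biased case one must check that the shifted quadratic $-\Vert\bxi_k+\bd-\bx\Vert_2^2$ is in fact realisable by a combination of the $\dis$ and $\dot$ footprints that likewise avoids the sort-induced kinks, or alternatively argue that the sort order is constant on a neighbourhood of each iterate so that CCCP applies piecewise. Failing either, a fallback argument must verify monotonic decrease directly by checking convexity of $g(\bx)=\lse(\bs(\bXi,\bx))$ whenever the components $\bs_k(\bXi,\cdot)$ admit a decomposition into an affine part plus a shared convex term absorbable into $f$.
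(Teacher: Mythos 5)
Your proposal matches the paper's proof essentially step for step: the paper likewise restricts the analysis to the smooth unified similarity $s(\bxi,\bx) = -\Vert\bx-\bxi\Vert_2^2 + \bb^\top(\bx-\bxi)$ (your shifted quadratic with $\bb = 2\bd$ and $C = \Vert\bd\Vert_2^2$), applies CCCP to the same decomposition $\Vert\bx\Vert_2^2 - \lse(\bA\bx+\bc)$ to identify the (de-biased) update as the surrogate minimizer and obtain monotone decrease, and derives the identical lower bound --- your $-C-\log N$ equals the paper's $-\ln N - \Vert\bb\Vert_2^2/4$. Your closing worry about sort-induced kinks is resolved exactly as the paper resolves it: for these two variant families the optimal similarity is realized by sort-free forms (the full-dimensional, permutation-invariant entry for isotropic noise, and the unsorted footprint $\bU\bq$ in the biased construction of Lemma~\ref{lem:opt-biased}), so the quadratic reduction and CCCP apply globally.
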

\revise{Second, consist with existing associative memories, \texttt{A-Hop} guarantees a monotonically decreasing, convergent energy that can be bounded from below by $-\ln n - \Vert \mathbf b \Vert_2^2 / 4$ during the iterative retrieval process.
Notably, among the models compared in Table~\ref{tab:summary}, this is the only energy function that can be bounded from below even as $\Vert \bxi \Vert_2 \to \infty$, which guarantees robust retrieval behavior for unbounded patterns.
A detailed proof to Theorem~\ref{thm:energy} is provided in Appendix~\ref{apd:proof-energy}.}

\section{Experiments} \label{sec:experiments}

We evaluate the effectiveness of \texttt{A-Hop} on tasks including memory retrieval, tabular classification, image classification, and multiple instance learning, demonstrating that \texttt{A-Hop} achieves state-of-the-art performance on these tasks.
A further ablation study validates our design choice of adaptive similarity (Appendix~\ref{apd:ablation}).
Due to space constraints, full descriptions of baselines, metrics, datasets, and implementation details are moved to Appendix~\ref{apd:experiments}.

\subsection{Memory Retrieval} \label{sec:mem-ret}

\begin{figure}[!hb]
\centering
\includegraphics{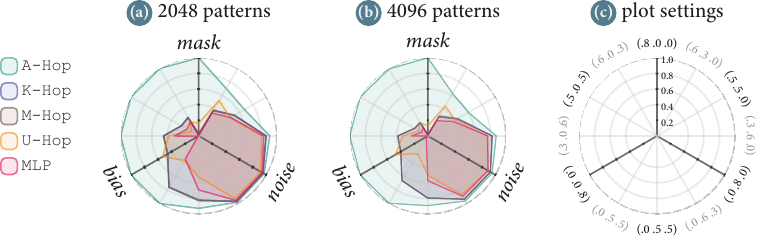}
\vspace{-2mm}
\caption{Retrieval accuracy ($\uparrow$) of 64-dimensional synthetic memory patterns.}
\label{fig:ret-acc}
\end{figure}

Prior work on Hopfield networks primarily assesses retrieval accuracy (Definition~\ref{def:emp-ret-acc}) under two settings:
(1) masking half of the dimensions and (2) adding Gaussian noise.
\revise{However, real-world data often exhibits compounded corruptions.}
To rigorously stress-test retrieval correctness, we introduce \textit{mixed variant} parameterized by a triplet $(d_{\text{mask}}, d_{\text{noise}}, d_{\text{bias}}) \in [0, 1]^3$ that controls the intensity (difficulty) of masking, noise, and bias, respectively (see Appendix~\ref{apd:memory} for formal definitions).

We evaluate \texttt{A-Hop} against existing baselines on 12 mixed variant settings (listed in Fig.~\ref{fig:ret-acc} (c)) with 64-dimensional ($d = 64$) random memory patterns at scales of $N = 2048$ (Fig.~\ref{fig:ret-acc} (a)) and $N = 4096$ (Fig.~\ref{fig:ret-acc} (b)).
\revise{Furthermore, Table~\ref{tab:ret-acc} details performance under high-intensity corruption using either $N = 2048, d = 64$ synthetic vectors or $N = 2048, d = 784$ MNIST digits as memory patterns.}

\revise{As shown in Fig.~\ref{fig:ret-acc}, existing associtive memory baselines perform poorly when masking or bias is involved as their retrieval accuracy is less than 20\% on the \textit{mask} axis and less than 60\% on the \textit{bias} axis while \texttt{A-Hop} achieve near-perfect accuracy on these settings.
It is because fixed proximity-based similarity cannot distinguish between irrelevance and missing data (masking), and have no chance to learn the biased term added to memory pattern.
In Table~\ref{tab:ret-acc}, baselines exhibit a sharp performance collapse as difficulty increases, while \texttt{A-Hop} maintains robust retrieval gaining a 6\% to 20\% accuracy increment.
\texttt{A-Hop}'s adaptive similarity creates a noticable empirical gap by achieving higher retrieval accuracy and lower retrieval error in all settings.
This highlights its robustness and impressive adaptability to align similarity to the underlying variant distribution through learning.}

\begin{table}[!htb]
\centering
\caption{Retrieval accuracy ($\uparrow$) and error ($\downarrow$) between models. Each cell contains the mean accuracy or error with standard deviation in a smaller font. Results of the best-performing model are \textbf{bolded}. Difficulty $d$ stands for mixed variant setting with $(d_{\text{mask}}, d_{\text{noise}}, d_{\text{bias}}) = (d, d, d)$.}
\label{tab:ret-acc}
\vspace{-0.5\baselineskip}
\footnotesize
\setlength{\extrarowheight}{1pt}
\begin{tabular}{c|C{1cm}C{1cm}|C{1cm}C{1cm}|C{1cm}C{1cm}|C{1cm}C{1cm}}
\toprule
Dataset & 				\multicolumn{4}{c|}{Synthetic} & 												\multicolumn{4}{c}{MNIST} \\
\midrule
Difficulty & 			\multicolumn{2}{c|}{0.4} & 				\multicolumn{2}{c|}{0.5} & 				\multicolumn{2}{c|}{0.6} & 				\multicolumn{2}{c}{0.7} \\
\midrule
Metrics & 				Accuracy & 			Error & 			Accuracy & 			Error & 			Accuracy & 			Error & 			Accuracy & 			Error \\
\midrule
\texttt{M-Hop} &
						\stat{.520}{.02} & 	\stat{.176}{.01} & 	\stat{.195}{.03} & 	\stat{.300}{.01} & 	\stat{.875}{.01} & 	\stat{.013}{.00} & 	\stat{.661}{.02} & 	\stat{.068}{.00} \\
\texttt{U-Hop} &
						\stat{.260}{.04} & 	\stat{.417}{.02} & 	\stat{.059}{.01} & 	\stat{.554}{.01} & 	\stat{.540}{.03} & 	\stat{.143}{.01} & 	\stat{.176}{.02} & 	\stat{.347}{.01} \\
\texttt{K-Hop} &
						\stat{.487}{.03} & 	\stat{.295}{.02} & 	\stat{.177}{.02} & 	\stat{.764}{.02} & 	\stat{.764}{.02} & 	\stat{.064}{.01} & 	\stat{.526}{.02} & 	\stat{.164}{.01} \\
\texttt{K$_2$-Hop\footnotemark} &
						\stat{.521}{.02} & 	\stat{.176}{.01} & 	\stat{.195}{.02} & 	\stat{.298}{.01} & 	\stat{.878}{.01} & 	\stat{.013}{.00} & 	\stat{.660}{.01} & 	\stat{.068}{.01} \\
\rowcolor{gray!20}
\texttt{A-Hop} &
						\btat{.724}{.02} & 	\btat{.106}{.01} & 	\btat{.360}{.02} & 	\btat{.227}{.01} & 	\btat{.939}{.01} & 	\btat{.005}{.00} & 	\btat{.849}{.01} & 	\btat{.015}{.01} \\
\bottomrule
\end{tabular}
\end{table}
\footnotetext{\texttt{K$_2$-Hop} is \texttt{K-Hop} whose kernel is optmized by Equation~\ref{eqn:loss}, rather than the original separation loss.}

\subsection{Tabular Classification}

\begin{table}[!htb]
\vspace{-0.5\baselineskip}
\centering
\caption{Predictive performance ($\uparrow$) between models on tabular data. Each cell contains mean accuracy or AUC-ROC score with standard deviation in a smaller font. Results of the best-performing associative memory are \textbf{bolded}, and the best other model is \underline{underlined}.}
\label{tab:tab-acc}
\vspace{-0.5\baselineskip}
\footnotesize
\setlength{\extrarowheight}{1pt}
\begin{tabular}{c|C{1.5cm}C{1.5cm}C{1.5cm}C{1.5cm}C{1.5cm}}
\toprule
Model &					Adult & 				Bank & 					Vaccine & 				Purchase &			 	Heart \\
\midrule
\texttt{M-Hop} & 		\stat{.8080}{.001} & 	\stat{.9085}{.003} & 	\stat{.7975}{.001} & 	\stat{.8822}{.001} & 	\stat{.6325}{.002} \\
\texttt{K$_2$-Hop} & 	\stat{.8172}{.003} & 	\stat{.9092}{.002} & 	\stat{.7971}{.003} & 	\stat{.8825}{.002} & 	\stat{.6473}{.002} \\
\rowcolor{gray!20}
\texttt{A-Hop} &		\btat{.8634}{.002} & 	\btat{.9139}{.002} & 	\btat{.8042}{.002} & 	\btat{.9007}{.001} & 	\btat{.7315}{.002} \\
\midrule
Extra Trees & 			\stat{.8595}{.004} & 	\stat{.9098}{.003} & 	\stat{.7932}{.002} & 	\stat{.8916}{.002} &	\stat{.7175}{.003} \\
Random Forest & 		\stat{.8592}{.002} & 	\stat{.9132}{.003} &	\stat{.7918}{.003} & 	\stat{.9002}{.001} & 	\stat{.7254}{.002} \\
AdaBoost &				\stat{.8597}{.003} &	\stat{.9094}{.001} & 	\stat{.8011}{.002} & 	\stat{.8865}{.001} & 	\stat{.7294}{.001} \\
XGBoost & 				\utat{.8640}{.002} & 	\utat{.9152}{.003} & 	\utat{.8034}{.002} & 	\utat{.9032}{.003} & 	\utat{.7370}{.003} \\
\bottomrule
\end{tabular}
\end{table}

We investigate the utility of \texttt{A-Hop} as a memory-based classifier for tabular data (setup details in Appendix \ref{apd:tabular}).
\revise{Unlike image data, tabular data is often heterogeneous, containing a mix of continuous and categorical features with varying scales and sparsity.}

\texttt{A-Hop} consistently outperforms all competing associative memories (\texttt{M-Hop} and \texttt{K$_2$-Hop}), demonstrating that adaptive similarity is crucial for handling the mixed feature types inherent in tabular data.
Furthermore, \texttt{A-Hop} surpasses standard deep learning baselines (Extra Trees, Random Forest, AdaBoost) in all datasets.
While the gradient-boosting method XGBoost \citep{XGBoost} retains a slight edge on 4 out of 5 datasets, \revise{\texttt{A-Hop} significantly narrows the gap compared to prior approaches, offering a differentiable alternative to tree ensembles}.
Significantly, the performance gap between \texttt{A-Hop} and other memory models is widest on the Adult (about 5\%) and Heart (about 10\%) datasets.
\revise{We hypothesize that these datasets contain more subtle and complicated variant distribution where the ``informativeness'' of dimensions varies per query, but the results suggest that \texttt{A-Hop} can find the most informative subspace more accurately and can retrieve relevant prototypes even when the distance in the original feature space is misleading.}

\subsection{Image Classification and Multiple Instance Learning}

\begin{table}[!htbp]
\centering
\caption{Classification accuracy ($\uparrow$) of each model on images, and AUC-ROC score ($\uparrow$) of each model in multiple instance learning task. Each cell contains accuracy or AUC-ROC score with standard deviation in a smaller font. Results of the best-performing associative memory are \textbf{bolded}.}
\label{tab:cplx-acc}
\vspace{-0.5\baselineskip}
\footnotesize
\setlength{\extrarowheight}{1pt}
\begin{tabular}{c|C{1.15cm}C{1.15cm}C{1.15cm}|c|C{1.15cm}C{1.15cm}C{1.15cm}C{1.15cm}}
\toprule
 \multicolumn{4}{c|}{\textit{Image Classification}} & 				\multicolumn{5}{c}{\textit{Multiple Instance Learning}} \\
\midrule
Dataset &			CIFAR10 & 				CIFAR100 & 				{\scriptsize Tiny ImageNet} &
Dataset &			Tiger & 				Fox	& 					Elephant & 				UCSB \\
\midrule
\texttt{M-Hop} &	\stat{.5123}{.003} & 	\stat{.2464}{.003} &	\stat{.1095}{.002} &
\texttt{M-Hop} &	\stat{.8924}{.005} & 	\stat{.6327}{.013} & 	\stat{.9344}{.009} & 	\stat{.8815}{.022} \\
\texttt{K-Hop} & 	\stat{.5489}{.002} & 	\stat{.2877}{.002} & 	\stat{.1164}{.002} &
\texttt{S-Hop} & 	\stat{.8923}{.006} & 	\stat{.6433}{.015} & 	\stat{.9365}{.002} & 	\stat{.8794}{.024} \\
\rowcolor{gray!20}
\texttt{A-Hop} &	\btat{.5637}{.003} & 	\btat{.2904}{.002} & 	\btat{.1213}{.002} &
\texttt{A-Hop} &	\btat{.9030}{.007} & 	\btat{.6753}{.013} & 	\btat{.9451}{.004} & 	\btat{.8935}{.022} \\
\bottomrule
\end{tabular}
\end{table}

Following established protocols, we evaluate \texttt{A-Hop} on image classification \citep{K-Hop} and multiple instance learning \citep{M-Hop, S-Hop} by integrating it as a component within larger and more complicated deep neural network architectures (i.e., \texttt{HopfieldLayer}, \texttt{HopfieldPooling} \citep{M-Hop}).

As shown in Table~\ref{tab:cplx-acc}, \texttt{A-Hop} consistently achieves the highest accuracy and AUC-ROC scores among all Hopfield variants.
This demonstrates that the benefits of adaptive similarity extend to complex, high-dimensional data and can enhance the performance of sophisticated models like the Vision Transformer.
\revise{While the absolute gains in these complex architectures are naturally smaller than in isolated retrieval tasks (as the associative memory is not the core mechanism), the consistent superiority of \texttt{A-Hop} validates its role as a general-purpose, robust associative memory layer.
Nevertheless, the consistent improvement confirms that optimizing the similarity measure remains a valuable factor for enhancing performance in complex deep learning systems.}

\subsection{Ablation Study}

Due to page limit, the ablation study is moved to Appendix~\ref{apd:ablation}.

\section{Conclusion}

We reframe associative memory retrieval as a problem of \textit{correct retrieval} under a task- and context-dependent variant distribution, motivating a similarity measure that approximates the likelihood that a stored pattern generated the query.
Building on this principle, we propose adaptive similarity, prove its optimality for three canonical variant families (noisy, masked, biased), and instantiate it in a new adaptive Hopfield network, \texttt{A-Hop}.
This perspective clarifies why fixed, pre-defined similarities are inherently limited: they cannot align to the prevailing variant distribution and thus struggle to guarantee correctness, whereas adaptivity enables the model to capture the underlying variant distribution through samples, shifting towards correctness.

Empirically, \texttt{A-Hop} establishes state-of-the-art performance among Hopfield networks across memory retrieval, tabular classification, image classification, and multiple instance learning.
The gains are most pronounced under mixed variant settings where adaptive similarity maintains impressively high retrieval accuracy and low error.
In downstream tasks, \texttt{A-Hop} consistently improves over prior Hopfield variants.
Ultimately, adaptive similarity is a key principle for advancing associative memories, paving the way for more powerful and resilient memory systems.
\clearpage

\section{Ethics Statement}
This work adheres to the ICLR Code of Ethics.
In this study, no human subjects or animal experimentation was involved.
All datasets used were sourced in compliance with relevant usage guidelines, ensuring no violation of privacy.
We have taken care to avoid any biases or discriminatory outcomes in our research process.
No personally identifiable information was used, and no experiments were conducted that could raise privacy or security concerns.
We are committed to maintaining transparency and integrity throughout the research process.

\section{Reproducibility Statement}

\begin{description}
	\item[Code] We provide code to help understand this work, and is publicly available at: \url{https://anonymous.4open.science/r/Adaptive-Hopfield-Network-C137/}.
	\item[Datasets] All datasets are either included in the repo, or a description for how to download and preprocess the dataset is provided. All datasets are public and raise no ethical concerns.
	\item[Hyperparameters] All parameters of our proposed framework are in Appendix \ref{apd:experiments}.
	\item[Environment] Details of our experimental setups are provided in Appendix \ref{apd:experiments}.
	\item[Random Seed] we do not set a random seed specifically for all random behavior, with the random seed determined PyTorch.
\end{description}

\section{LLM Usage}
Large Language Models (LLMs) were used to aid polishing of the manuscript.
Specifically, we used an LLM to assist in refining the language, improving readability, and ensuring clarity in various sections of the paper.
The model helped with tasks such as sentence rephrasing, grammar checking, and enhancing the overall flow of the text.

It is important to note that the LLM was not involved in the ideation, research methodology, or experimental design.
All research concepts, ideas, and analyses were developed and conducted by the authors.
The contributions of the LLM were solely focused on improving the linguistic quality of the paper, with no involvement in the scientific content or data analysis.

\bibliography{citation}

@article{Hopfield,
	title = {Neural networks and physical systems with emergent collective
	         computational abilities.},
	volume = {79},
	url = {https://www.pnas.org/doi/abs/10.1073/pnas.79.8.2554},
	doi = {10.1073/pnas.79.8.2554},
	language = {EN},
	number = {8},
	journal = {Proceedings of the National Academy of Sciences},
	author = {Hopfield, J. J.},
	month = apr,
	year = {1982},
	note = {Publisher: Proceedings of the National Academy of Sciences},
	pages = {2554--2558},
}

@inproceedings{D-Hop,
	title = {Dense {Associative} {Memory} for {Pattern} {Recognition}},
	volume = {29},
	url = {
	       https://papers.nips.cc/paper_files/paper/2016/hash/eaae339c4d89fc102edd9dbdb6a28915-Abstract.html
	       },
	urldate = {2025-05-27},
	booktitle = {Advances in {Neural} {Information} {Processing} {Systems}},
	publisher = {Curran Associates, Inc.},
	author = {Krotov, Dmitry and Hopfield, John J.},
	year = {2016},
}

@inproceedings{M-Hop,
	title = {Hopfield Networks is All You Need},
	author = {Hubert Ramsauer and Bernhard Sch{\"a}fl and Johannes Lehner and
	          Philipp Seidl and Michael Widrich and Lukas Gruber and Markus
	          Holzleitner and Thomas Adler and David Kreil and Michael K Kopp and
	          G{\"u}nter Klambauer and Johannes Brandstetter and Sepp Hochreiter},
	booktitle = {International Conference on Learning Representations},
	year = {2021},
	url = {https://openreview.net/forum?id=tL89RnzIiCd},
}

@inproceedings{K-Hop,
	title = {Uniform {Memory} {Retrieval} with {Larger} {Capacity} for {Modern}
	         {Hopfield} {Models}},
	url = {https://proceedings.mlr.press/v235/wu24i.html},
	language = {en},
	booktitle = {Proceedings of the 41st {International} {Conference} on {
	             Machine} {Learning}},
	publisher = {PMLR},
	author = {Wu, Dennis and Hu, Jerry Yao-Chieh and Hsiao, Teng-Yun and Liu,
	          Han},
	month = jul,
	year = {2024},
	note = {ISSN: 2640-3498},
	pages = {53471--53514},
}

@inproceedings{U-Hop,
	title = {Universal {Hopfield} {Networks}: {A} {General} {Framework} for {
	         Single}-{Shot} {Associative} {Memory} {Models}},
	shorttitle = {Universal {Hopfield} {Networks}},
	url = {https://proceedings.mlr.press/v162/millidge22a.html},
	language = {en},
	booktitle = {Proceedings of the 39th {International} {Conference} on {
	             Machine} {Learning}},
	publisher = {PMLR},
	author = {Millidge, Beren and Salvatori, Tommaso and Song, Yuhang and
	          Lukasiewicz, Thomas and Bogacz, Rafal},
	month = jun,
	year = {2022},
	note = {ISSN: 2640-3498},
	pages = {15561--15583},
}

@article{S-Hop,
	title = {On {Sparse} {Modern} {Hopfield} {Model}},
	volume = {36},
	url = {
	       https://proceedings.neurips.cc/paper_files/paper/2023/hash/57bc0a850255e2041341bf74c7e2b9fa-Abstract-Conference.html
	       },
	language = {en},
	journal = {Advances in Neural Information Processing Systems},
	author = {Hu, Jerry Yao-Chieh and Yang, Donglin and Wu, Dennis and Xu,
	          Chenwei and Chen, Bo-Yu and Liu, Han},
	month = dec,
	year = {2023},
	pages = {27594--27608},
}

@article{Exp-Hop,
	title = {On a {Model} of {Associative} {Memory} with {Huge} {Storage} {
	         Capacity}},
	volume = {168},
	issn = {1572-9613},
	url = {https://doi.org/10.1007/s10955-017-1806-y},
	doi = {10.1007/s10955-017-1806-y},
	language = {en},
	number = {2},
	urldate = {2025-09-17},
	journal = {Journal of Statistical Physics},
	author = {Demircigil, Mete and Heusel, Judith and Löwe, Matthias and Upgang,
	          Sven and Vermet, Franck},
	month = jul,
	year = {2017},
	pages = {288--299},
}

@inproceedings{N-Hop,
	title = {Nonparametric Modern Hopfield Models},
	author = {Jerry Yao-Chieh Hu and Bo-Yu Chen and Dennis Wu and Feng Ruan and
	          Han Liu},
	booktitle = {Forty-second International Conference on Machine Learning},
	year = {2025},
	url = {https://openreview.net/forum?id=xkV3uCQtJm},
}

@article{Opt-K-Hop,
	title = {Provably {Optimal} {Memory} {Capacity} for {Modern} {Hopfield} {
	         Models}: {Transformer}-{Compatible} {Dense} {Associative} {Memories}
	         as {Spherical} {Codes}},
	volume = {37},
	shorttitle = {Provably {Optimal} {Memory} {Capacity} for {Modern} {Hopfield}
	              {Models}},
	url = {
	       https://proceedings.neurips.cc/paper_files/paper/2024/hash/82846e19e6d42ebfd4ace4361def29ae-Abstract-Conference.html
	       },
	language = {en},
	urldate = {2025-09-17},
	journal = {Advances in Neural Information Processing Systems},
	author = {Hu, Jerry Yao-Chieh and Wu, Dennis and Liu, Han},
	month = dec,
	year = {2024},
	pages = {70693--70729},
	file = {Full Text PDF:/Users/shurongwang/Zotero/storage/8FIFAD7L/Hu et al.
	        2024.pdf:application/pdf},
}

@article{STanHop,
	title = {{STanHop}: {Sparse} {Tandem} {Hopfield} {Model} for {Memory}-{
	         Enhanced} {Time} {Series} {Prediction}},
	volume = {2024},
	shorttitle = {{STanHop}},
	url = {
	       https://proceedings.iclr.cc/paper_files/paper/2024/hash/832b20b65f655587e9c0447860406a82-Abstract-Conference.html
	       },
	language = {en},
	urldate = {2025-09-18},
	journal = {International Conference on Representation Learning},
	author = {Wu, Yu-Hsuan and Hu, Jerry Yao-Chieh and Li, Weijian and Chen,
	          Bo-Yu and Liu, Han},
	month = may,
	year = {2024},
	pages = {30886--30925},
}

@article{AM-hippo,
	title = {Targeted enhancement of cortical-hippocampal brain networks and
	         associative memory},
	volume = {345},
	url = {https://www.science.org/doi/full/10.1126/science.1252900},
	doi = {10.1126/science.1252900},
	number = {6200},
	urldate = {2025-09-18},
	journal = {Science},
	author = {Wang, Jane X. and Rogers, Lynn M. and Gross, Evan Z. and Ryals,
	          Anthony J. and Dokucu, Mehmet E. and Brandstatt, Kelly L. and
	          Hermiller, Molly S. and Voss, Joel L.},
	month = aug,
	year = {2014},
	note = {Publisher: American Association for the Advancement of Science},
	pages = {1054--1057},
}

@article{AM-learn,
	title = {Neuronal correlate of visual associative long-term memory in the
	         primate temporal cortex},
	volume = {335},
	copyright = {1988 Springer Nature Limited},
	issn = {1476-4687},
	url = {https://www.nature.com/articles/335817a0},
	doi = {10.1038/335817a0},
	language = {en},
	number = {6193},
	urldate = {2025-09-18},
	journal = {Nature},
	author = {Miyashita, Yasushi},
	month = oct,
	year = {1988},
	note = {Publisher: Nature Publishing Group},
	pages = {817--820},
	file = {Full Text PDF:/Users/shurongwang/Zotero/storage/8XJICX3V/Miyashita
	        1988.pdf:application/pdf},
}

@article{AM-animal,
	title = {Theories of {Associative} {Learning} in {Animals}},
	volume = {52},
	issn = {0066-4308, 1545-2085},
	url = {
	       https://www.annualreviews.org/content/journals/10.1146/annurev.psych.52.1.111
	       },
	doi = {10.1146/annurev.psych.52.1.111},
	abstract = {▪ Abstract Theories of associative learning are concerned with
	            the factors that govern association formation when two stimuli
	            are presented together. In this article we review the relative
	            merits of the currently influential theories of associative
	            learning. Some theories focus on the role of attention in
	            association formation, but differ in the rules they propose for
	            determining whether or not attention is paid to a stimulus. Other
	            theories focus on the nature of the association that is formed,
	            but differ as to whether this association is regarded as
	            elemental, configural, or hierarchical. Recent developments
	            involve modifications to existing theories in order to account
	            for associative learning between two stimuli, A and B, when A is
	            accompanied, not by B, but by a stimulus that has been paired
	            with B. The implications of the theories for understanding how
	            humans derive causal judgments and solve categorization problems
	            is considered.},
	language = {en},
	number = {Volume 52, 2001},
	urldate = {2025-09-18},
	journal = {Annual Review of Psychology},
	author = {Pearce, John M. and Bouton, Mark E.},
	month = feb,
	year = {2001},
	note = {Publisher: Annual Reviews},
	pages = {111--139},
}

@inproceedings{AM-cluster,
	title = {End-to-end Differentiable Clustering with Associative Memories},
	author = {Saha, Bishwajit and Krotov, Dmitry and Zaki, Mohammed J and Ram,
	          Parikshit},
	booktitle = {Proceedings of the 40th International Conference on Machine
	             Learning},
	pages = {29649--29670},
	year = {2023},
	editor = {Krause, Andreas and Brunskill, Emma and Cho, Kyunghyun and
	          Engelhardt, Barbara and Sabato, Sivan and Scarlett, Jonathan},
	volume = {202},
	series = {Proceedings of Machine Learning Research},
	month = {23--29 Jul},
	publisher = {PMLR},
	pdf = {https://proceedings.mlr.press/v202/saha23a/saha23a.pdf},
	url = {https://proceedings.mlr.press/v202/saha23a.html},
	abstract = {Clustering is a widely used unsupervised learning technique
	            involving an intensive discrete optimization problem. Associative
	            Memory models or AMs are differentiable neural networks defining
	            a recursive dynamical system, which have been integrated with
	            various deep learning architectures. We uncover a novel
	            connection between the AM dynamics and the inherent discrete
	            assignment necessary in clustering to propose a novel
	            unconstrained continuous relaxation of the discrete clustering
	            problem, enabling end-to-end differentiable clustering with AM,
	            dubbed ClAM. Leveraging the pattern completion ability of AMs, we
	            further develop a novel self-supervised clustering loss. Our
	            evaluations on varied datasets demonstrate that ClAM benefits
	            from the self-supervision, and significantly improves upon both
	            the traditional Lloyd’s k-means algorithm, and more recent
	            continuous clustering relaxations (by upto 60% in terms of the
	            Silhouette Coefficient).},
}

@misc{AM-survey,
	title = {Modern {Methods} in {Associative} {Memory}},
	url = {http://arxiv.org/abs/2507.06211},
	doi = {10.48550/arXiv.2507.06211},
	abstract = {Associative Memories like the famous Hopfield Networks are
	            elegant models for describing fully recurrent neural networks
	            whose fundamental job is to store and retrieve information. In
	            the past few years they experienced a surge of interest due to
	            novel theoretical results pertaining to their information storage
	            capabilities, and their relationship with SOTA AI architectures,
	            such as Transformers and Diffusion Models. These connections open
	            up possibilities for interpreting the computation of traditional
	            AI networks through the theoretical lens of Associative Memories.
	            Additionally, novel Lagrangian formulations of these networks
	            make it possible to design powerful distributed models that learn
	            useful representations and inform the design of novel
	            architectures. This tutorial provides an approachable
	            introduction to Associative Memories, emphasizing the modern
	            language and methods used in this area of research, with
	            practical hands-on mathematical derivations and coding notebooks.
	            },
	urldate = {2025-09-19},
	publisher = {arXiv},
	author = {Krotov, Dmitry and Hoover, Benjamin and Ram, Parikshit and Pham,
	          Bao},
	month = jul,
	year = {2025},
	note = {arXiv:2507.06211 [cs]},
}

@article{ERT,
	title = {Extremely randomized trees},
	volume = {63},
	issn = {1573-0565},
	url = {https://doi.org/10.1007/s10994-006-6226-1},
	doi = {10.1007/s10994-006-6226-1},
	abstract = {This paper proposes a new tree-based ensemble method for
	            supervised classification and regression problems. It essentially
	            consists of randomizing strongly both attribute and cut-point
	            choice while splitting a tree node. In the extreme case, it
	            builds totally randomized trees whose structures are independent
	            of the output values of the learning sample. The strength of the
	            randomization can be tuned to problem specifics by the
	            appropriate choice of a parameter. We evaluate the robustness of
	            the default choice of this parameter, and we also provide insight
	            on how to adjust it in particular situations. Besides accuracy,
	            the main strength of the resulting algorithm is computational
	            efficiency. A bias/variance analysis of the Extra-Trees algorithm
	            is also provided as well as a geometrical and a kernel
	            characterization of the models induced.},
	language = {en},
	number = {1},
	urldate = {2024-11-21},
	journal = {Machine Learning},
	author = {Geurts, Pierre and Ernst, Damien and Wehenkel, Louis},
	month = apr,
	year = {2006},
	pages = {3--42},
}

@article{RF,
	title = {Random {Forests}},
	volume = {45},
	copyright = {2001 Kluwer Academic Publishers},
	issn = {1573-0565},
	url = {https://link.springer.com/article/10.1023/A:1010933404324},
	doi = {10.1023/A:1010933404324},
	abstract = {Random forests are a combination of tree predictors such that
	            each tree depends on the values of a random vector sampled
	            independently and with the same distribution for all trees in the
	            forest. The generalization error for forests converges a.s. to a
	            limit as the number of trees in the forest becomes large. The
	            generalization error of a forest of tree classifiers depends on
	            the strength of the individual trees in the forest and the
	            correlation between them. Using a random selection of features to
	            split each node yields error rates that compare favorably to
	            Adaboost (Y. Freund \& R. Schapire, Machine Learning: Proceedings
	            of the Thirteenth International conference, ***, 148–156), but
	            are more robust with respect to noise. Internal estimates monitor
	            error, strength, and correlation and these are used to show the
	            response to increasing the number of features used in the
	            splitting. Internal estimates are also used to measure variable
	            importance. These ideas are also applicable to regression.},
	language = {en},
	number = {1},
	urldate = {2024-06-11},
	journal = {Machine Learning},
	author = {Breiman, Leo},
	month = oct,
	year = {2001},
	note = {Company: Springer Distributor: Springer Institution: Springer Label:
	        Springer Number: 1 Publisher: Kluwer Academic Publishers},
	pages = {5--32},
}

@inproceedings{XGBoost,
	author = {Chen, Tianqi and Guestrin, Carlos},
	title = {XGBoost: A Scalable Tree Boosting System},
	year = {2016},
	isbn = {9781450342322},
	publisher = {Association for Computing Machinery},
	address = {New York, NY, USA},
	url = {https://doi.org/10.1145/2939672.2939785},
	doi = {10.1145/2939672.2939785},
	abstract = {Tree boosting is a highly effective and widely used machine
	            learning method. In this paper, we describe a scalable end-to-end
	            tree boosting system called XGBoost, which is used widely by data
	            scientists to achieve state-of-the-art results on many machine
	            learning challenges. We propose a novel sparsity-aware algorithm
	            for sparse data and weighted quantile sketch for approximate tree
	            learning. More importantly, we provide insights on cache access
	            patterns, data compression and sharding to build a scalable tree
	            boosting system. By combining these insights, XGBoost scales
	            beyond billions of examples using far fewer resources than
	            existing systems.},
	booktitle = {Proceedings of the 22nd ACM SIGKDD International Conference on
	             Knowledge Discovery and Data Mining},
	pages = {785–794},
	numpages = {10},
	keywords = {large-scale machine learning},
	location = {San Francisco, California, USA},
	series = {KDD '16},
}

@article{AdaBoost,
	title = {A {Decision}-{Theoretic} {Generalization} of {On}-{Line} {Learning}
	         and an {Application} to {Boosting}},
	volume = {55},
	issn = {0022-0000},
	url = {https://www.sciencedirect.com/science/article/pii/S002200009791504X},
	doi = {https://doi.org/10.1006/jcss.1997.1504},
	abstract = {In the first part of the paper we consider the problem of
	            dynamically apportioning resources among a set of options in a
	            worst-case on-line framework. The model we study can be
	            interpreted as a broad, abstract extension of the well-studied
	            on-line prediction model to a general decision-theoretic setting.
	            We show that the multiplicative weight-update Littlestone–Warmuth
	            rule can be adapted to this model, yielding bounds that are
	            slightly weaker in some cases, but applicable to a considerably
	            more general class of learning problems. We show how the
	            resulting learning algorithm can be applied to a variety of
	            problems, including gambling, multiple-outcome prediction,
	            repeated games, and prediction of points in Rn. In the second
	            part of the paper we apply the multiplicative weight-update
	            technique to derive a new boosting algorithm. This boosting
	            algorithm does not require any prior knowledge about the
	            performance of the weak learning algorithm. We also study
	            generalizations of the new boosting algorithm to the problem of
	            learning functions whose range, rather than being binary, is an
	            arbitrary finite set or a bounded segment of the real line.},
	number = {1},
	journal = {Journal of Computer and System Sciences},
	author = {Freund, Yoav and Schapire, Robert E.},
	year = {1997},
	pages = {119--139},
}

@inproceedings{Attention,
	title = {Attention is {All} you {Need}},
	volume = {30},
	url = {
	       https://proceedings.neurips.cc/paper_files/paper/2017/hash/3f5ee243547dee91fbd053c1c4a845aa-Abstract.html
	       },
	abstract = {The dominant sequence transduction models are based on complex
	            recurrent orconvolutional neural networks in an encoder and
	            decoder configuration. The best performing such models also
	            connect the encoder and decoder through an attentionm echanisms.
	            We propose a novel, simple network architecture based solely onan
	            attention mechanism, dispensing with recurrence and convolutions
	            entirely.Experiments on two machine translation tasks show these
	            models to be superiorin quality while being more parallelizable
	            and requiring significantly less timeto train. Our single model
	            with 165 million parameters, achieves 27.5 BLEU
	            onEnglish-to-German translation, improving over the existing best
	            ensemble result by over 1 BLEU. On English-to-French translation,
	            we outperform the previoussingle state-of-the-art with model by
	            0.7 BLEU, achieving a BLEU score of 41.1.},
	urldate = {2025-06-18},
	booktitle = {Advances in {Neural} {Information} {Processing} {Systems}},
	publisher = {Curran Associates, Inc.},
	author = {Vaswani, Ashish and Shazeer, Noam and Parmar, Niki and Uszkoreit,
	          Jakob and Jones, Llion and Gomez, Aidan N and Kaiser, Ł ukasz and
	          Polosukhin, Illia},
	year = {2017},
}

@inproceedings{DynFrs,
	author = {Wang, Shurong and Shen, Zhuoyang and Qiao, Xinbao and Zhang,
	          Tongning and Zhang, Meng},
	booktitle = {International Conference on Representation Learning},
	editor = {Y. Yue and A. Garg and N. Peng and F. Sha and R. Yu},
	pages = {10636--10657},
	title = {DynFrs: An Efficient Framework for Machine Unlearning in Random
	         Forest},
	url = {
	       https://proceedings.iclr.cc/paper_files/paper/2025/hash/1caf09c9f4e6b0150b06a07e77f2710c-Abstract-Conference.html
	       },
	volume = {2025},
	year = {2025},
}

@inproceedings{CCCP,
	title = {On the {Convergence} of the {Concave}-{Convex} {Procedure}},
	volume = {22},
	url = {
	       https://papers.nips.cc/paper_files/paper/2009/hash/8b5040a8a5baf3e0e67386c2e3a9b903-Abstract.html
	       },
	abstract = {The concave-convex procedure (CCCP) is a
	            majorization-minimization algorithm that solves d.c. (difference
	            of convex functions) programs as a sequence of convex programs.
	            In machine learning, CCCP is extensively used in many learning
	            algorithms like sparse support vector machines (SVMs),
	            transductive SVMs, sparse principal component analysis, etc.
	            Though widely used in many applications, the convergence behavior
	            of CCCP has not gotten a lot of specific attention. Yuille and
	            Rangarajan analyzed its convergence in their original paper,
	            however, we believe the analysis is not complete. Although the
	            convergence of CCCP can be derived from the convergence of the
	            d.c. algorithm (DCA), their proof is more specialized and
	            technical than actually required for the specific case of CCCP.
	            In this paper, we follow a different reasoning and show how
	            Zangwills global convergence theory of iterative algorithms
	            provides a natural framework to prove the convergence of CCCP,
	            allowing a more elegant and simple proof. This underlines
	            Zangwills theory as a powerful and general framework to deal with
	            the convergence issues of iterative algorithms, after also being
	            used to prove the convergence of algorithms like
	            expectation-maximization, generalized alternating minimization,
	            etc. In this paper, we provide a rigorous analysis of the
	            convergence of CCCP by addressing these questions: (i) When does
	            CCCP find a local minimum or a stationary point of the d.c.
	            program under consideration? (ii) When does the sequence
	            generated by CCCP converge? We also present an open problem on
	            the issue of local convergence of CCCP.},
	urldate = {2025-09-25},
	booktitle = {Advances in {Neural} {Information} {Processing} {Systems}},
	publisher = {Curran Associates, Inc.},
	author = {Lanckriet, Gert and Sriperumbudur, Bharath K.},
	year = {2009},
}

@misc{Adult,
	author = {Becker,Barry and Kohavi,Ronny},
	title = {{Adult}},
	year = {1996},
	howpublished = {UCI Machine Learning Repository},
	note = {{DOI}: https://doi.org/10.24432/C5XW20},
}

@article{Bank,
	title = {A data-driven approach to predict the success of bank telemarketing
	         },
	journal = {Decision Support Systems},
	volume = {62},
	pages = {22-31},
	year = {2014},
	issn = {0167-9236},
	doi = {https://doi.org/10.1016/j.dss.2014.03.001},
	url = {https://www.sciencedirect.com/science/article/pii/S016792361400061X},
	author = {Sérgio Moro and Paulo Cortez and Paulo Rita},
	keywords = {Bank deposits, Telemarketing, Savings, Classification, Neural
	            networks, Variable selection},
	abstract = {We propose a data mining (DM) approach to predict the success of
	            telemarketing calls for selling bank long-term deposits. A
	            Portuguese retail bank was addressed, with data collected from
	            2008 to 2013, thus including the effects of the recent financial
	            crisis. We analyzed a large set of 150 features related with bank
	            client, product and social-economic attributes. A semi-automatic
	            feature selection was explored in the modeling phase, performed
	            with the data prior to July 2012 and that allowed to select a
	            reduced set of 22 features. We also compared four DM models:
	            logistic regression, decision trees (DTs), neural network (NN)
	            and support vector machine. Using two metrics, area of the
	            receiver operating characteristic curve (AUC) and area of the
	            LIFT cumulative curve (ALIFT), the four models were tested on an
	            evaluation set, using the most recent data (after July 2012) and
	            a rolling window scheme. The NN presented the best results
	            (AUC=0.8 and ALIFT=0.7), allowing to reach 79% of the subscribers
	            by selecting the half better classified clients. Also, two
	            knowledge extraction methods, a sensitivity analysis and a DT,
	            were applied to the NN model and revealed several key attributes
	            (e.g., Euribor rate, direction of the call and bank agent
	            experience). Such knowledge extraction confirmed the obtained
	            model as credible and valuable for telemarketing campaign
	            managers.},
}

@inproceedings{Vaccine-1,
	author = {P. Bull and I. Slavitt and G. Lipstein},
	title = {Harnessing the power of the crowd to increase capacity for data
	         science in the social sector},
	booktitle = {ICML \#Data4Good Workshop},
	year = {2016},
}

@misc{Vaccine-2,
	author = {DrivenData},
	title = {Flu shot learning: Predict h1n1 and seasonal flu vaccines},
	year = {2019},
	howpublished = {\url{
	                https://www.drivendata.org/competitions/66/flu-shot-learning/data/
	                }},
}

@misc{Purchase,
	author = {Sakar,C. and Kastro,Yomi},
	title = {{Online Shoppers Purchasing Intention Dataset}},
	year = {2018},
	howpublished = {UCI Machine Learning Repository},
	note = {{DOI}: https://doi.org/10.24432/C5F88Q},
}

@misc{Heart,
	author = {Kaggle},
	title = {Cardiovascular Disease dataset},
	year = {2018},
	howpublished = {\url{
	                https://www.kaggle.com/datasets/sulianova/cardiovascular-disease-dataset
	                }},
}

@article{CIFAR,
	title = {Learning multiple layers of features from tiny images},
	author = {Krizhevsky, Alex and Hinton, Geoffrey and others},
	year = {2009},
	publisher = {Toronto, ON, Canada},
}

@article{TinyImageNet,
	title = {Tiny imagenet visual recognition challenge},
	author = {Le, Yann and Yang, Xuan},
	journal = {CS 231N},
	volume = {7},
	number = {7},
	pages = {3},
	year = {2015},
}

@inproceedings{ImageNet,
	title = {Imagenet: A large-scale hierarchical image database},
	author = {Deng, Jia and Dong, Wei and Socher, Richard and Li, Li-Jia and Li,
	          Kai and Fei-Fei, Li},
	booktitle = {2009 IEEE conference on computer vision and pattern recognition
	             },
	pages = {248--255},
	year = {2009},
	organization = {Ieee},
}

@inproceedings{UCSB,
	author = {Elisa Drelie Gelasca and Jiyun Byun and Boguslaw Obara and B.S.
	          Manjunath},
	title = {Evaluation and Benchmark for Biological Image Segmentation},
	booktitle = {IEEE International Conference on Image Processing},
	location = {San Diego, CA},
	month = {Oct},
	year = {2008},
	url = {http://vision.ece.ucsb.edu/publications/elisa_ICIP08.pdf},
}
\bibliographystyle{iclr2026_conference}
\clearpage

\appendix
\section{Appendix}
\etocsettocstyle{\section*{Appendix Contents}}{}
\localtableofcontents

\clearpage

\subsection{Notations} \label{apd:notations}

\begin{table}[H]
\centering
\caption{Notations and symbolds used in this work.}
\small
\vspace{-0.5\baselineskip}
\setlength{\extrarowheight}{1pt}
\begin{tabular}{c|L{11.5cm}}
\toprule
Symbol & Description \\
\midrule
$\bxi$, $\bxi_k$ &
	A specific memory pattern ($d \times 1$) (or memory, stored memory pattern, stored pattern). \\
$\bXi$ &
	The $d \times N$ memory matrix, with each memory pattern being its column vector. \\
$d$ &
	The dimensionality of memory patterns. \\
$N$ &
	The number of stored memory patterns. \\
\midrule
$\sim(\bxi, \bx)$ &
	The similarity function that measures how strong the association are between the inputs (or similarity, similarity measure, measure, association). \\
$\sep(\bs)$ &
	The separation function, turning the output of $\sim(\cdot, \cdot)$ (logits) to a probability distribution. \\
$\mod(\bXi)$ &
	The modulation function that governs how memory patterns are stored and learned. \\
$E(\bx)$ &
	The energy landscape, defined on the same vector space as memory patterns. \\
$\mT(\bx)$ &
	The retrieval dynamics, defined on the same vector space as memory patterns. \\
$\bp$ &
	The probability distribution vector produced by $\sep(\cdot)$. \\
$\bs$ &
	The similarity score vector produced by $\sim(\cdot, \cdot)$. \\
\midrule
$\bx$ &
	The query vector. Also, the input to the associative memory \\
$\by$ &
	The retrieval result vector. Also, the output of the associative memory. \\
$\mV(\bXi)$ &
	The variant distribution on memory matrix $\bXi$, governs how queries are generated. Each query $\bx$ is sampled from this distribution together with its origin memory pattern $\bxi$. \\
$p_{\mV}(\bxi, \bx)$ &
	The joint probability density function that measures the likelihood that $\bxi$ and $\bx$ are observed together. \\
$p_{\mV}(\bxi | \bx)$ &
	The conditional probability density function (posterior) that measures the likelihood that $\bx$ originates from $\bx$ when observed $\bx$. \\
$p_{\mV}(\bx | \bxi)$ &
	The joint probability density function (likelihood) that measures the likelihood that $\bxi$ generates $\bx$ when observed $\bxi$. \\
\midrule
$\bq$ &
	The dimension-wise similarity vector whose value of the $i$-th index measures the similarity between the value of $i$-th index in $\bx$ and $\bxi$. \\
$\tilde\bq$ &
	The sorted version of $\bq$ (sorted in ascending order). \\
$\bU$ &
	The upper right triangle matrix of ones. \\
$\dis(\bxi, \bx)$ &
	The (negative and squared) Euclidean distance similarity $-\Vert \bx - \bxi \Vert_2^2$. \\
$\dot(\bxi, \bx)$ &
	The dot product similarity $\bx^\top \bxi$. \\
$\sim^{(k)}(\bxi, \bx)$ &
	The $k$-optimal similarity function that finds a $k$-dimensional subspace that maximizes the similarity $\sim(\cdot, \cdot)$ of the inputs within that subspace. \\
$\ftpt_{\sim}(\bxi, \bx)$ &
	The similarity footprint function that generates the rich descriptor between $\bxi$ and $\bx$ with $\sim(\cdot, \cdot)$ being the base similarity (or footprint). \\
$s_{\sim}(\bxi, \bx)$ &
	The adaptive similarity function adopting $\ftpt_{\sim}(\cdot, \cdot)$ with $\sim(\cdot, \cdot)$ as the base similarity. \\
$\bs_{\sim}(\bXi, \bx)$ &
	The vectorized form of the adaptive similarity function $s_{\sim}(\cdot, \cdot)$, and returns a vector that measures the adaptive similarity between $\bxi_i$ and $\bx$ for $i \in [N]$. \\
$s(\bxi, \bx)$ &
	The final adaptive similarity function that aggregate multiple $s_{\sim_k}(\cdot, \cdot)$ for different base similarity $\sim(\cdot, \cdot)$ / footprint. \\
$\bs(\bXi, \bx)$ &
	The vectorized form of the final adaptive similarity function $s(\cdot, \cdot)$, and returns a vector that measures the adaptive similarity between $\bxi_i$ and $\bx$ for $i \in [N]$. \\
$\bw$ &
	The weight vector that turns the footprint into a scalar, which is designed to extract information from the rich descriptor. \\
$\beta$ &
	Scalar used to aggregate different adaptive similarities $\bs_{\sim}(\cdot, \cdot)$. \\
$\mathcal L(\bxi, \mV)$ &
	The loss function used to optimize $\bw$'s and $\beta$'s \\
\midrule
$[n]$ &
	The set of integers less than or equal to $n$. \\
$\sgn(x)$ &
	Return the sign ($-1$ or $+1$) of the input. \\
$\delta(x)$ &
	The Dirac delta that returns $1$ when the input is $0$ and returns $0$ otherwise. \\
$\bv^\top$ &
	Transpose of a vector / matrix. \\
$\diag(\bv)$ &
	Transform vector $\bv$ to a diagonal matrix. \\
$\bv_D$ &
	A sub-vector of $\bv$ containing only the elements corresponding to indices in $D$. \\
$\Vert \bv \Vert_p$ &
	The $\ell_p$ norm. \\
$\lse(\bv)$ &
	The log-sum-exp function. \\
\bottomrule
\end{tabular}
\end{table}

\clearpage

\subsection{Theorems} \label{apd:theorems}

We define the retrieval accuracy that estimates the retrieval performance of an associative memory under a certain variant distribution.

\begin{definition}{Retrieval accuracy}{ret-acc}
Retrieval accuracy for an associative memory with retrieval dynamics $\mT(\cdot)$ is the probability that correct retrieval is met:
$$
\begin{aligned}
& \E_{(\bxi, \bx) \follows \mV(\bXi)} \left[ \delta \bigg( \argmin_{\bxi' \in \bXi} \{ \Vert \mT(\bx) - \bxi' \Vert_2 \} - \argmax_{\bxi' \in \bXi} \{ p_{\mV}(\bxi' | \bx) \} \bigg) \right] \\
= & \Pr_{(\bxi, \bx) \follows \mV(\bXi)} \left[ \argmin_{\bxi' \in \bXi} \{ \Vert \mT(\bx) - \bxi' \Vert_2 \} = \argmax_{\bxi' \in \bXi} \{ p_{\mV}(\bxi' | \bx) \} \right]
\end{aligned}
$$
\end{definition}

However, Definition~\ref{def:ret-acc} is usually intractable as $p_{\mV}(\bx | \bxi)$ is unknown and complicated.
Therefore, we define empirical retrieval accuracy based on samples drawn from $\mV(\bXi)$, which is computable, and used in our experiments (Section~\ref{sec:experiments}).

\begin{definition}{Empirical retrieval accuracy}{emp-ret-acc}
Empirical retrieval accuracy for an associative memory with retrieval dynamics $\mT(\cdot)$ can be estimated by performing abundant retrieval tests:
$$
\begin{aligned}
& \E_{(\bxi, \bx) \follows \mV(\bXi)} \left[ \delta \bigg( \argmin_{\bxi' \in \bXi} \{ \Vert \mT(\bx) - \bxi' \Vert_2 \} - \bxi \bigg) \right] \\
= & \Pr_{(\bxi, \bx) \follows \mV(\bXi)} \left[ \argmin_{\bxi' \in \bXi} \{ \Vert \mT(\bx) - \bxi' \Vert_2 \} = \bxi \right]
\end{aligned}
$$
\end{definition}

\begin{theorem}{Equivalence form for decomposable base similarity}{equiv}
For decomposable similarity measure $\sim(\cdot, \cdot)$ satisfying:
\begin{equation} \label{eqn:decomposable}
\sim(\bxi, \bx) = \sum_{i=1}^d \sim(\bxi_i, \bx_i)
\end{equation}
Let $\bq_i = \sim(\bxi_i, \bx_i)$ ($1 \le i \le d$), and $\tilde\bq_j$ ($1 \le j \le d$) be the $j$-th largest element in $\bq$.
We have:
$$
\sim^{(k)}(\bxi, \bx) = \sum_{j=1}^k \tilde\bq_j
$$
\end{theorem}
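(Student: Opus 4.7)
The plan is to reduce the combinatorial optimization defining $\sim^{(k)}$ to the classical problem of choosing the $k$ largest entries of a fixed real vector. First I would invoke the decomposability hypothesis (Eq.~\ref{eqn:decomposable}) applied to the sub-vectors $\bxi_D$ and $\bx_D$ for any $D \subseteq [d]$ with $|D| = k$. This gives $\sim(\bxi_D, \bx_D) = \sum_{i \in D} \sim(\bxi_i, \bx_i) = \sum_{i \in D} \bq_i$, which rewrites the definition as $\sim^{(k)}(\bxi, \bx) = \max_{D \subseteq [d],\, |D| = k} \sum_{i \in D} \bq_i$. After this reduction, the quantities $\bxi, \bx$ and the specific form of the base similarity play no further role; only the scalar vector $\bq \in \mathbb R^d$ matters.

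Next I would show that this maximum is attained by the index set of the $k$ largest entries of $\bq$, via a standard exchange argument. Let $T \subseteq [d]$ with $|T| = k$ be the set of indices of the top-$k$ values of $\bq$ (ties broken arbitrarily), and let $D^*$ be any maximizer. If $D^* \ne T$, pick $i \in D^* \setminus T$ and $j \in T \setminus D^*$; by definition of $T$ we have $\bq_j \ge \bq_i$, so replacing $i$ by $j$ in $D^*$ does not decrease $\sum_{i \in D^*} \bq_i$. Iterating this swap at most $k$ times transforms $D^*$ into $T$ without loss of optimality, so $\sum_{i \in T} \bq_i$ realizes the maximum. By the very definition of $\tilde\bq$ as the decreasing rearrangement of $\bq$, this maximum equals $\sum_{j=1}^{k} \tilde\bq_j$, which is the claimed identity.

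There is essentially no hard step here. The only real care needed is bookkeeping: making the application of decomposability to sub-vectors precise, and handling ties in the sort so that the exchange argument is valid without assuming distinct entries. Once that is done, the theorem is a direct consequence of the fact that a linear objective over $\binom{[d]}{k}$ is maximized by the top-$k$ coordinates, and the stated formula $\ftpt_{\sim}(\bxi, \bx) = \bU \tilde\bq$ from Eq.~\ref{eqn:ftpt} then follows immediately by stacking the identity across $k = 1, \dots, d$.
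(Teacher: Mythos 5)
Your proposal is correct, and it takes a genuinely different route from the paper. The paper proves the identity by induction on $k$: it defines the optimal index set $D^{(k)}$ and the corresponding value set $Q^{(k)}$, and shows $Q^{(k)} = Q^{(k-1)} \cup \{\tilde\bq_k\}$ by a three-case contradiction argument ruling out any other element $a$ being added at step $k$. You instead collapse the problem in one shot: decomposability reduces $\sim^{(k)}(\bxi,\bx)$ to $\max_{|D|=k}\sum_{i\in D}\bq_i$, and a standard exchange argument (swap any $i \in D^\ast\setminus T$ for $j \in T\setminus D^\ast$, using $\bq_j \ge \bq_i$) shows the top-$k$ set $T$ is optimal. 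Your version has two concrete advantages. First, it is explicit about ties, which the paper's case analysis handles loosely (its case (2) asserts that $\tilde\bq_{k-1} \ge a > \tilde\bq_k$ forces $k$ elements strictly larger than $\tilde\bq_k$, which requires care when entries repeat). Second, the paper's induction implicitly assumes the nested structure $Q^{(k)} \supseteq Q^{(k-1)}$ --- that the level-$k$ optimum is obtained by adding a single element to the level-$(k-1)$ optimum --- and only then rules out wrong choices of that element; your exchange argument needs no such structural assumption, since it transforms an \emph{arbitrary} maximizer into $T$ without loss. What the paper's approach buys in exchange is the nestedness statement itself, i.e.\ that the optimal subspaces can be taken as a chain $D^{(1)} \subseteq D^{(2)} \subseteq \cdots \subseteq D^{(d)}$, which is mildly informative for interpreting the footprint but is not needed for the theorem. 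Your closing observation that $\ftpt_{\sim}(\bxi,\bx) = \bU\tilde\bq$ follows by stacking the identity over $k = 1,\dots,d$ matches how the paper uses the result.
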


\begin{proof}
Let $D^{(k)} \triangleq \argmax_{D \subseteq [d], |D| = k} \{ \sim(\bxi_D, \bx_D) \}$, and define $Q^{(k)} \triangleq \{ \tilde\bq_i \mid i \in D^{(k)} \}$, then we know:
$$
\sim^{(k)}(\bxi, \bx) = \sum_{i \in D^{(k)}} \bq_i = \sum_{q \in Q^{(k)}} q.
$$
Then, we try to prove by induction that $Q^{(k)} = Q^{(k-1)} \cup \{ \tilde\bq_k \}$ for $1 \le k \le d$, and we let $Q^{(0)} = \varnothing$.
For, $k = 1$, we can see that $Q^{(1)} = \argmax_{i \in [d]} \{ \bq_i \} = \tilde\bq_i$ satisfying the induction hypothesis.
Next, for $1 < k \le d$, we assume that the hypothesis holds for all $1 \le k' < k$, we can see that
$$
Q^{(k-1)} = Q^{(k-2)} \cup \{ \tilde\bq_{k-1} \} = Q^{(k-3)} \cup \{ \tilde\bq_{k-1}, \tilde\bq_{k-2} \} = \bigcup_{j=1}^{k-1} \{ \tilde\bq_{j} \}
$$
Now, suppose the followings:
\hfill \hfill \linebreak
(1) We can find $a \in \bq_{1 \cdots d}$ s.t. $a > \tilde\bq_{k-1}$, $Q^{(k)} = Q^{(k-1)} \cup \{ a \}$. This not possible because $\{ \tilde\bq_1, \cdots, \tilde\bq_{k-2}, a \}$ would be a better choice for $Q^{(k-1)}$, and this contradicts with the assumption.
\hfill \hfill \linebreak
(2) We can find $a \in \bq_{1 \cdots d}$ s.t. $\tilde\bq_{k-1} \ge a > \tilde\bq_{k}$, $Q^{(k)} = Q^{(k-1)} \cup \{ a \}$. This is not possible because there would be $k$ elements in $\bq_{1 \cdots d}$ that is larger than $\tilde\bq_k$, so that $\tilde\bq_k$ is the $(k+1)$-largest element, which contradicts with the definition of $\bq_k$.
\hfill \hfill \linebreak
(3) We can find $a \in \bq_{1 \cdots d}$ s.t. $\tilde\bq_{k} > a$, $Q^{(k)} = Q^{(k-1)} \cup \{ a \}$. This is not possible because $\{ \tilde\bq_1, \cdots, \tilde\bq_{k-1}, \tilde\bq_{k} \}$ is a better choice for $Q^{(k)}$ compared to $\{ \tilde\bq_1, \cdots, \tilde\bq_{k-1}, a \}$ because $\tilde\bq_{k} > a$.

Therefore, by contradiction, we can see that $D^{(k)} = D^{(k-1)} \cup \{ \tilde\bq^{(k)} \}$, and the hypothesis holds due to induction. Consequently,
$$
Q^{(k)} = \bigcup_{j=1}^k \tilde\bq_j
\quad\Longrightarrow\quad
\sim^{(k)}(\bxi, \bx) = \sum_{q \in Q^{(k)}} q = \sum_{j=1}^k \tilde\bq_j.  
$$
\end{proof}

\subsubsection{Optimal Correct Retrieval} \label{apd:proof-optimal}

We now start to prove Theorem~\ref{thm:ret-dyn}.

Let us begin with a simple variant --- the isotropic noisy variant.

\begin{lemma}{Optimal correct retrieval for isotropic noisy variant}{opt-iso-noisy}
\texttt{A-Hop} achieves optimal correct retrieval (Definition~\ref{def:opt-correct-ret}) for isotrophic noisy variant $\mV_{\text{noisy}}(\bXi)$ (Definition~\ref{def:noisy-var}, $(\bx - \bxi) \follows \mathcal N(\mathbf 0, \sigma \bI)$ for $(\bxi, \bx) \follows \mV_{\text{noisy}}(\bXi)$ some $\sigma \in \mathbb R$) for arbitrary memory matrix $\bXi \in \mathbb R^{d \times N}$.
\end{lemma}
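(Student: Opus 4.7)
The plan is to reduce optimal correct retrieval under isotropic noisy variants to Euclidean nearest-neighbor retrieval, and then exhibit a configuration of \texttt{A-Hop}'s adaptive similarity whose softmax readout realizes this nearest neighbor. Assuming a uniform prior over memory patterns, the Bayes decomposition in Eq.~\ref{eqn:likelihood} combined with the isotropic Gaussian likelihood from Definition~\ref{def:noisy-var} (with $\bsig = \sigma \mathbf{1}$) reduces $\argmax_{\bxi' \in \bXi} p_{\mV}(\bxi' | \bx)$ to $\argmin_{\bxi' \in \bXi} \|\bxi' - \bx\|_2^2$. Hence the ground-truth origin is exactly the Euclidean nearest neighbor of the query among the stored patterns.

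\textbf{Design of $\bs(\bXi, \bx)$.} Take $\dis(\bxi, \bx) = -\|\bxi - \bx\|_2^2$ as the base similarity. Since $\dis$ is decomposable, Theorem~\ref{thm:equiv} gives the last entry of its footprint as the full distance, that is $\sim^{(d)}(\bxi, \bx) = \dis(\bxi, \bx)$. Choosing weights with $\bw_d = 1$ and $\bw_i = 0$ for $i < d$, and setting $\beta_{\dot} = 0$, the adaptive score becomes $\bs(\bXi, \bx)_k = -\|\bxi_k - \bx\|_2^2$. Thus the softmax logits are ordered exactly by Euclidean distance to $\bx$, with the unique maximum at the true nearest neighbor $\bxi^{*}$.

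\textbf{Readout argument.} Since the noise is a continuous Gaussian, the Voronoi boundaries of $\bXi$ carry probability zero, so for almost every sampled pair $(\bxi, \bx) \follows \mV(\bXi)$ there is a strict gap $\Delta(\bx) > 0$ between the largest similarity and all others. Taking $\beta_{\dis}$ sufficiently large, the softmax concentrates on the index of $\bxi^{*}$ with error of order $\exp(-\beta_{\dis} \Delta(\bx))$, so $\by$ falls arbitrarily close to $\bxi^{*}$ and in particular inside its Voronoi cell. This equates the left-hand side of Eq.~\ref{eqn:correct_ret} with the right-hand side, certifying correct retrieval.

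\textbf{Main obstacle.} The delicate point is selecting a single $\beta_{\dis}$ that works over the entire unbounded Gaussian query space: $\Delta(\bx)$ vanishes as $\bx$ approaches a Voronoi boundary, so no finite $\beta_{\dis}$ succeeds for every $\bx$ simultaneously. I would resolve this either by invoking the $\beta_{\dis} \to \infty$ limit (the theorem asserts existence of a careful design, not an explicit tight $\beta$), or by showing that the set of queries on which a given $\beta_{\dis}$ fails shrinks to measure zero as $\beta_{\dis} \to \infty$, which is the natural almost-sure reading of optimal correct retrieval under a continuous variant distribution.
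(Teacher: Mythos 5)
Your proposal is, in substance, the paper's own proof: the identical Bayes reduction under a uniform prior collapses $\argmax_{\bxi' \in \bXi}\{p_{\mV}(\bxi'|\bx)\}$ to $\argmax_{\bxi' \in \bXi}\{-\Vert \bxi' - \bx \Vert_2^2\}$, and the identical footprint construction with a one-hot weight vector extracts the full negative squared distance as the similarity score (you set $\bw_d = 1$ under the main text's decreasing-sort convention; the paper's proof sets $\bw_1 = 1$ under the appendix's ascending-sort convention --- these select the same quantity $\sim^{(d)}(\bxi,\bx) = \dis(\bxi,\bx)$). The only divergence is the readout step, and there the paper simply never encounters your ``main obstacle'': it takes $\sep(\cdot) = \argmax(\cdot)$ outright rather than $\softmax$ at finite temperature --- a choice the framework permits (Table~\ref{tab:summary} lists the separation function as ``multiple,'' and the proof of Theorem~\ref{thm:ret-dyn} states explicitly that $\argmax$ is crucial) --- so that $\mT(\bx)$ is exactly the nearest stored pattern and both sides of Equation~\ref{eqn:correct_ret} evaluate identically to $\argmax_{\bxi_k \in \bXi}\{-\Vert \bxi_k - \bx \Vert_2^2\}$, with no temperature limit needed. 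Your fallback (a), the $\beta_{\dis} \to \infty$ limit, is precisely this and is the version you should commit to; your fallback (b), the almost-sure reading, proves strictly less than Definition~\ref{def:opt-correct-ret}, whose quantifier is ``for any $(\bxi, \bx) \follows \mV(\bXi)$,'' so a finite-$\beta$ softmax design genuinely fails the stated lemma on queries near Voronoi boundaries --- a tension the paper itself acknowledges in Appendix~\ref{apd:optimal}, where it concedes that $\softmax$ obstructs optimality proofs while being necessary for learnability. (One shared loose end: neither your argument nor the paper's handles exact ties, where $\argmax$ is ill-defined; this measure-zero set is silently ignored in both.)
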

\begin{proof}
We claim that the optimal correct retrieval is achieved when using $\sep(\cdot) = \argmax(\cdot)$, and $\ftpt_{\text{dis}}(\bxi, \bx)$ only (i.e., $\beta_1 = 1$ and $\beta_2 = 0$ in Equation~\ref{eqn:ret-dyn}). That is, the retrieval dynamics should be:
$$
\begin{aligned}
\mT(\bx) &= \argmax_{\bxi' \in \bXi} \left\{ \bw^\top \ftpt_{\dis}(\bxi', \bx) \right\} \\
&= \argmax_{\bxi' \in \bXi} \left\{ -\Vert \bxi' - \bx \Vert_2^2 \right\}
\end{aligned}
$$
This step can be satisfied by setting $\bw_1 = 1$, and $\bw_i = 0$ for $2 \le i \le d$.
Then, optimal retrieval is achieved only when Equation~\ref{eqn:correct_ret} is met.
We first estimate the right-hand side of Equation~\ref{eqn:correct_ret}:
$$
\begin{aligned}
\argmax_{\bxi' \in \bXi} \left\{ p_{\mV_{\text{noisy}}}(\bxi | \bx) \right\}
&= \argmax_{\bxi' \in \bXi} \left\{ \ln p_{\mV_{\text{noisy}}}(\bx | \bxi) \right\} \\
&= \argmax_{\bxi' \in \bXi} \left\{ -\frac d2 \ln 2\pi\sigma - \frac 1{2\sigma} \Vert \bx - \bxi' \Vert_2^2 \right\} \\
&= \argmax_{\bxi' \in \bXi} \left\{ -\Vert \bxi' - \bx \Vert_2^2 \right\}
\end{aligned}
$$
The first step comes from Equation~\ref{eqn:likelihood}, and assuming that the prior $p(\bxi)$ is uniform (which is often the case for memory retrieval) or can be easily obtained from samples.
And the second step comes from Definition~\ref{def:noisy-var}.
We can see that the derived results coincide with the retrieval dynamics derived before.
Therefore, plugging the retrieval dynamics to the left-hand side of Equation~\ref{eqn:correct_ret} gives:
$$
\begin{aligned}
\argmin_{\bxi' \in \bXi} \left\{ \Vert \mT(\bx) - \bxi' \Vert_2 \right\}
&= \argmin_{\bxi' \in \bXi} \left\{ \big\Vert \argmax_{\bxi''} \left\{ -\Vert \bxi'' - \bx \Vert_2^2 \right\} - \bxi' \big\Vert_2 \right\} \\
&= \sum_{k=1}^N \delta\left( \big\Vert \argmax_{\bxi''} \left\{ -\Vert \bxi'' - \bx \Vert_2^2 \right\} - \bxi_k \big\Vert_2 \right) \cdot \bxi_k \\
&= \sum_{k=1}^N \delta\left( \max_{\bxi''} \left\{ -\Vert \bxi'' - \bx \Vert_2^2 \right\} - \big[ -\Vert \bxi_k - \bx \Vert_2^2 \big] \right) \cdot \bxi_k \\
&= \argmax_{\bxi_k \in \bXi} \{ -\Vert \bxi_k - \bx \Vert_2^2 \}
\end{aligned}
$$
The second step holds as there always exists a $\bxi' \in \bXi$ that let $\Vert \argmax_{\bxi''} \left\{ -\Vert \bxi'' - \bx \Vert_2^2 \right\} - \bxi' \Vert_2 = 0$, since the resulting vector of the $\argmax(\cdot) \in \bXi$, and $\bxi'$ iterates every column vector of $\bXi$, thus must have coincided with resulting vector, and the thrid step holds for a similar reason.

Therefore, we show that the left-hand side and right-hand side of Equation~\ref{eqn:correct_ret} are the same ($\argmax_{\bxi' \in \bXi} \{ - \Vert \bxi_k - \bx \Vert_2^2 \}$).
Thus, the requirement for correct retrieval is met for all $(\bxi, \bx) \follows \mV(\bXi)$, yielding optimal correct retrieval.
\end{proof}

If we adopt a footprint that does not sort the dimension-wise similarity vector $\bq$ by substituting $\tilde\bq$ in Equation~\ref{eqn:ftpt} to $\bq$ and gives $\ftpt_{\dis'}(\bxi, \bx) = \bU \bq$, we can prove the optimality for the standard noisy variant defined in Definition~\ref{def:noisy-var}, which is more general than Lemma~\ref{lem:opt-iso-noisy}.
However, the footprint $\ftpt_{\dis}(\bxi, \bx) = \bU \tilde \bq$ achieves high empirical retrieval accuracy, but it is harder to estimate analytically.

\begin{lemma}{Optimal retrieval for noisy variant}{opt-noisy}
\texttt{A-Hop} achieves optimal correct retrieval (Definition~\ref{def:opt-correct-ret}) for noisy variant $\mV_{\text{noisy}}(\bXi)$ (Definition~\ref{def:noisy-var} for arbitrary memory matrix $\bXi \in \mathbb R^{d \times N}$.
\end{lemma}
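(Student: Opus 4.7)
The plan is to mirror the structure of Lemma~\ref{lem:opt-iso-noisy}: exhibit a choice of weights and separation function that makes the adaptive similarity proportional (up to an additive constant) to $\ln p_{\mV_{\text{noisy}}}(\bx \mid \bxi)$, and then invoke the same ``nearest-neighbour collapses to $\argmax$'' argument to match the two sides of Equation~\ref{eqn:correct_ret}. Taking logs of the anisotropic Gaussian likelihood in Definition~\ref{def:noisy-var} and discarding the $\bxi$-independent normalizer yields
$$
\argmax_{\bxi' \in \bXi} \{ p_{\mV_{\text{noisy}}}(\bxi' \mid \bx) \} = \argmax_{\bxi' \in \bXi} \Bigl\{ -\sum_{i=1}^d \bsig_i^{-1} (\bx_i - \bxi'_i)^2 \Bigr\},
$$
so the task reduces to realising this weighted squared distance as an adaptive similarity of the form $\bw^\top \ftpt(\bxi', \bx)$.

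Following the remark preceding the lemma, I would adopt the unsorted variant $\ftpt_{\dis'}(\bxi, \bx) = \bU \bq$ with $\bq_i = -(\bx_i - \bxi_i)^2$ (i.e.\ replace $\tilde \bq$ in Equation~\ref{eqn:ftpt} by $\bq$), so that each coordinate of the footprint retains its dimensional identity and the $k$-th entry equals the partial sum $-\sum_{i=1}^k (\bx_i - \bxi_i)^2$. The key algebraic identity is the swap
$$
\bw^\top \bU \bq = \sum_{i=1}^d \bq_i \Bigl( \sum_{k=i}^d \bw_k \Bigr),
$$
which means any desired per-coordinate weight $c_i$ can be matched by the telescoping choice $\bw_d = c_d$ and $\bw_i = c_i - c_{i+1}$ for $1 \le i < d$. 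Taking $c_i = \bsig_i^{-1}$ then gives $\bw^\top \ftpt_{\dis'}(\bxi, \bx) = -\sum_i \bsig_i^{-1}(\bx_i - \bxi_i)^2$, which is exactly the anisotropic Mahalanobis log-likelihood up to a constant.

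With $\sep(\cdot) = \argmax(\cdot)$, $\beta_{\dis} = 1$ and $\beta_{\dot} = 0$, the retrieval dynamics becomes $\mT(\bx) = \argmax_{\bxi' \in \bXi} \{ \bw^\top \ftpt_{\dis'}(\bxi', \bx) \}$, which by the previous paragraph agrees with $\argmax_{\bxi' \in \bXi} p_{\mV_{\text{noisy}}}(\bxi' \mid \bx)$. To finish Equation~\ref{eqn:correct_ret}, I would reuse verbatim the closing block of the proof of Lemma~\ref{lem:opt-iso-noisy}: since $\mT(\bx)$ already returns one of the stored columns, the nearest-neighbour operator $\argmin_{\bxi' \in \bXi} \Vert \mT(\bx) - \bxi' \Vert_2$ collapses to the same $\argmax$ (the Dirac-delta expansion used there applies unchanged). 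Therefore, for every $(\bxi, \bx) \follows \mV_{\text{noisy}}(\bXi)$ both sides of Equation~\ref{eqn:correct_ret} coincide, yielding optimal correct retrieval.

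The main obstacle is conceptual rather than computational: the sorted footprint $\bU \tilde \bq$ that \texttt{A-Hop} uses in practice discards the per-dimension addressability, whereas an anisotropic diagonal covariance demands a coordinate-specific reweighting to land on the true MAP. The paragraph right before the lemma explicitly licenses swapping $\tilde \bq$ for $\bq$ when analysing the general noisy case, which is exactly the freedom that makes the telescoping weight construction above work; without that substitution the argument would be stuck, because a sorted footprint cannot discriminate between coordinates of equal base similarity but different variances $\bsig_i$.
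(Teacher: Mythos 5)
Your proposal is correct and follows essentially the same route as the paper's proof: the same reduction of the MAP objective to $\argmax_{\bxi'}\{-\sum_i \bsig_i^{-1}(\bx_i-\bxi'_i)^2\}$, the same use of the unsorted footprint $\bU\bq$ licensed by the remark preceding the lemma, the same exploitation of the invertibility of $\bU$ to realize arbitrary per-coordinate weights via a telescoping $\bw$, and the same $\argmax$-collapse ending borrowed from Lemma~\ref{lem:opt-iso-noisy}. The only discrepancy is the indexing convention for the triangular matrix (you telescope as $\bw_d = \bsig_d^{-1}$, $\bw_i = \bsig_i^{-1}-\bsig_{i+1}^{-1}$, while the paper writes $\bw_1 = -\bsig_1^{-1}$, $\bw_i = \bsig_{i-1}^{-1}-\bsig_i^{-1}$ with the sign absorbed in the weights rather than in $\bq$), which is immaterial since the paper itself is inconsistent about whether $\bU$ is lower-left or upper-right triangular.
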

\begin{proof}
Following the same spirit in the proof of Lemma~\ref{lem:opt-iso-noisy}.
One can see that the right-hand side (RHS) of Equation~\ref{eqn:correct_ret} is (similar to Lemma~\ref{lem:opt-iso-noisy}):
$$
\begin{aligned}
\argmax_{\bxi' \in \bXi} \left\{ p_{\mV_{\text{noise}}}(\bxi' | \bx) \right\}
&= \argmax_{\bxi' \in \bXi} \left\{ -\frac 12 \ln \left[ (2\pi)^d |\diag(\bsig)| \right] - \frac 12 (\bx - \bxi')^\top \diag(\bsig)^{-1} (\bx - \bxi')  \right\} \\
&= \argmax_{\bxi' \in \bXi} \left\{ -\sum_{i=1}^d \frac{(\bxi'_i - \bx)^2}{\bsig_i} \right\}
\end{aligned}
$$
While the left-hand side (LHS) of Equation~\ref{eqn:correct_ret} is (step 1 follows how RHS is resolved in Lemma~\ref{lem:opt-iso-noisy}):
$$
\begin{aligned}
\argmin_{\bxi' \in \bXi} \left\{ \Vert \mT(\bx) - \bxi' \Vert_2 \right\}
&= \argmax_{\bxi_k \in \bXi} \left\{ \bw^\top \bU \, (\bxi_k - \bx)^2 \right\} \\
&= \argmax_{\bxi_k \in \bXi} \left\{ \bu^\top (\bxi_k - \bx)^2 \right\} \\
&= \argmax_{\bxi_k \in \bXi} \left\{ \sum_{i=1}^d \bu_i (\bxi_{k,i} - \bx_i)^2 \right\} \\
\end{aligned}
$$
Here, $\bv^2 \in \mathbb R^d$ denotes a dimension-wise square operation over $\bv \in \mathbb R^d$, so that $(\bxi_{k} - \bx)^2_i = (\bxi_{k,i} - \bx_i)^2$.
Also, we let $\bu^\top = \bw^\top \bU$ for simplicity.
One can see that LHS equals RHS when:
$$
\forall i,\, i \in [d],\, \bu_i = -\frac{1}{\bsig_i}
$$
Since $\bU$ is a full-rank matrix, so that $\bw^\top = \bu^\top \bU^{-1}$ holds.
When we set $\bw$ in the following way:
$$
\bw_i = \begin{cases}
- \bsig_1^{-1} & 					i = 1 \\
\bsig_{i-1}^{-1} - \bsig_i^{-1} & 	2 \le i \le d
\end{cases}
$$
LHS and RHS of Equation~\ref{eqn:correct_ret} are the same, satisfying the requirement for correct retrieval.
Furthermore, we can tell that Lemma~\ref{lem:opt-iso-noisy} is a special case of this lemma.
\end{proof}

\begin{lemma}{Optimal retrieval for masked variant}{opt-masked}
\texttt{A-Hop} achieves optimal correct retrieval (Definition~\ref{def:opt-correct-ret}) for masked variant $\mV_{\text{masked}}(\bXi)$ (Definition~\ref{def:masked-var} for arbitrary  memory matrix $\bXi \in \mathbb R^{d \times N}$, and for a uniform generator $\mathcal G$ ($p_{\mathcal G}(\cdot)$ is a constant).
\end{lemma}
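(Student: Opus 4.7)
The plan is to follow the blueprint of Lemma~\ref{lem:opt-iso-noisy} and Lemma~\ref{lem:opt-noisy}: first simplify the right-hand side of Equation~\ref{eqn:correct_ret} into an explicit optimization, then exhibit a specific weight vector $\bw$ for the $\dis$ footprint whose $\argmax$ coincides with that optimization. Taking $\sep = \argmax$, the LHS reduces to $\argmax_{\bxi' \in \bXi} \bw^\top \ftpt_{\dis}(\bxi', \bx)$ exactly as in the noisy cases.

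For the RHS, substitute the masked likelihood from Definition~\ref{def:masked-var}. For a uniform generator $\mathcal G$, $p_{\mathcal G}(\bx_i)$ is a positive constant, so taking logarithms and discarding terms independent of $\bxi'$ yields
\[
\argmax_{\bxi' \in \bXi}\, p_{\mV_{\text{masked}}}(\bxi' \mid \bx) \;=\; \argmin_{\bxi' \in \bXi}\, m(\bxi', \bx),
\]
where $m(\bxi', \bx) = \sum_{i=1}^d [1 - \delta(\bxi'_i - \bx_i)]$ is the Hamming distance. Writing $m^{\star}(\bxi') := d - m(\bxi', \bx)$ for the number of exact coordinate matches, the RHS asks for the pattern maximizing $m^{\star}(\bxi')$.

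The key structural observation is that the $\dis$ footprint encodes $m^{\star}(\bxi')$ through its sequence of leading zeros. Since $\bq_i = -(\bxi'_i - \bx_i)^2 \le 0$ with equality iff $\bxi'_i = \bx_i$, the decreasingly sorted vector $\tilde\bq$ begins with exactly $m^{\star}(\bxi')$ zeros followed by strictly negative entries, so $\ftpt_{\dis}(\bxi', \bx)_k = 0$ precisely when $k \le m^{\star}(\bxi')$. To turn this structure into a linear functional whose $\argmax$ ranks patterns by match count, set $\bu_k = \epsilon^k$ for a small $\epsilon > 0$ and recover $\bw$ via the telescoping relation $\bw_k = \bu_k - \bu_{k+1}$ (with $\bu_{d+1} = 0$), so that $\bu = \bU^\top \bw$. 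Then
\[
\bw^\top \ftpt_{\dis}(\bxi', \bx) \;=\; \sum_{k=1}^d \epsilon^k \tilde\bq_k \;=\; \sum_{k = m^{\star}(\bxi') + 1}^d \epsilon^k \tilde\bq_k,
\]
which has magnitude $O(\epsilon^{m^{\star}(\bxi') + 1})$ and is strictly negative unless $m^{\star}(\bxi') = d$. Hence for sufficiently small $\epsilon$, a larger $m^{\star}$ produces a strictly less negative value, and $\argmax_{\bxi'} \bw^\top \ftpt_{\dis}(\bxi', \bx)$ tracks $\argmax_{\bxi'} m^{\star}(\bxi') = \argmin_{\bxi'} m(\bxi', \bx)$.

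The remaining step is to show that a single threshold $\epsilon_\star > 0$ works uniformly across all queries. Because $\mathcal G$ has bounded support, the non-zero entries of $\tilde\bq$ admit a problem-wide magnitude bound, so there exists $\epsilon_\star$ such that for every $\epsilon \in (0, \epsilon_\star)$ and every $\bx$ the ordering $m^{\star}(\bxi_a) > m^{\star}(\bxi_b)$ implies $\bw^\top \ftpt_{\dis}(\bxi_a, \bx) > \bw^\top \ftpt_{\dis}(\bxi_b, \bx)$. Replaying the $\argmin$--$\argmax$ reduction of Lemma~\ref{lem:opt-iso-noisy} then makes the LHS and RHS of Equation~\ref{eqn:correct_ret} coincide, yielding optimal correct retrieval. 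The hard part is precisely this uniform-in-$\bx$ choice of $\epsilon_\star$, which is exactly what the bounded, uniform $\mathcal G$ assumption buys; ties in $m^{\star}$ correspond to genuine ties on both sides of Equation~\ref{eqn:correct_ret} and therefore do not threaten correctness.
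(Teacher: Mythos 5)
Your reduction of the right-hand side of Equation~\ref{eqn:correct_ret} to maximizing the match count $m^{\star}(\bxi')$ matches the paper, but your construction then diverges in a way that fails: you keep the \emph{continuous} footprint score $\bw^\top \bU \tilde\bq$ with geometric weights $\bu_k = \epsilon^k$, and the claimed uniform-in-$\bx$ threshold $\epsilon_\star$ does not exist. Bounded support of $\mathcal G$ gives you an upper bound $M$ on the magnitudes $|\tilde\bq_k|$, but what the argument actually needs is a \emph{lower} bound on the nonzero mismatch magnitudes, and there is none: for a uniform (continuous) $\mathcal G$ and an arbitrary real-valued memory matrix $\bXi$, a masked coordinate lands within distance $\eta$ of a wrong pattern's coordinate with positive probability for every $\eta > 0$. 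Concretely, take $\bxi_b$ with $m_b < m_a$ but all of whose mismatches have magnitude $\eta^2$, while $\bxi_a$'s mismatches have magnitude of order $M$; then the scores compare as roughly $-\eta^2\epsilon^{m_b+1}$ versus $-M\epsilon^{m_a+1}$, and the ordering flips unless $\epsilon^{m_a-m_b} < \eta^2/M$. Since $\eta$ can be arbitrarily small over the support of $\mV_{\text{masked}}(\bXi)$, no fixed $\epsilon > 0$ works for all queries, and Definition~\ref{def:opt-correct-ret} demands correctness for \emph{every} $(\bxi,\bx) \follows \mV(\bXi)$, not merely with high probability. The paper itself flags this obstruction immediately after the lemma: no continuous similarity can achieve optimal correct retrieval for masked variants unless one additionally assumes a separation such as $\Vert \bxi - \bx \Vert_2 \ge \varepsilon$; your score is a continuous function of $\bx$ (sorting, the cumulative-sum map $\bU$, and the linear form are all continuous), so it falls squarely under that impossibility.

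The paper's proof sidesteps this by abandoning continuity at exactly the point where you tried to preserve it: it defines the discrete adaptive similarity $s_{\dis}(\bxi,\bx) = \bw^\top \boldsymbol\delta(\tilde\bq) = \sum_{i=1}^d \bw_i\,\delta(\tilde\bq_i)$, which counts exact matches directly, and then $\bw = \mathbf 1$ makes the left-hand side of Equation~\ref{eqn:correct_ret} literally equal to $\argmax_{\bxi'}\sum_i \delta(\bx_i - \bxi'_i)$, i.e.\ the reduced right-hand side, with no limiting or thresholding argument needed. Everything else in your proposal (the $\sep = \argmax$ reduction of the LHS, the log-likelihood simplification using constant $p_{\mathcal G}$ and $\ln p_{\text{masked}} + \ln p_{\mathcal G} < 0$, the tie handling) is sound and agrees with the paper; the single repair is to replace your $\epsilon$-weighted continuous functional with the Dirac-delta similarity, accepting the loss of learnability that the paper discusses in Appendix~\ref{apd:optimal}.
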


\begin{proof}
As in Lemma~\ref{lem:opt-iso-noisy}, we first reformulate the RHS of Equation~\ref{eqn:correct_ret}, and find the suitable choice for $\bw$ to make LHS of Equation~\ref{eqn:correct_ret} equal RHS.
We let $\bq$ be the dimension-wise similarity vector with $\bq_i = -(\bxi_i - \bx_i)^2$, and $\tilde\bq$ the vector that sort $\bq$ in ascending order.
The RHS can be expanded as:
$$
\begin{aligned}
&\argmax_{\bxi' \in \bXi} \left\{ p_{\mV_{\text{masked}}}(\bxi' | \bx) \right\} \\
=& \argmax_{\bxi' \in \bXi} \left\{ \ln p_{\text{masked}} \cdot \sum_{i=1}^d [1 - \delta(\bx_i - \bxi'_i)] + \sum_{i=1}^d [1 - \delta(\bx_i - \bxi_i)] \ln p_{\mathcal G}(\bx'_i) \right\} \\
=& \argmax_{\bxi' \in \bXi} \left\{ (\ln p_{\text{masked}} + \ln p_{\mathcal G}) \cdot \sum_{i=1}^d [1 - \delta(\bx_i - \bxi'_i)] \right\} \\
=& \argmax_{\bxi' \in \bXi} \left\{ -d + \sum_{i=1}^d \delta(\bx_i - \bxi'_i) \right\} \\
=& \argmax_{\bxi' \in \bXi} \left\{ \sum_{i=1}^d \delta(\tilde\bq_i) \right\} \\
\end{aligned}
$$
Here, the second step is valid as $p_{\mathcal G}$ is a constant, and we term this constant $p_{\mathcal G}$. As $p_{\mathcal G}$ is a constant, it must be less than or equal to $1$ to make $p_{\mathcal G}(\cdot)$ a valid probability density function.
Additionally, we know $0 \le p_{\text{masked}} < 1$ from definition, and therefore, $\ln p_{\text{masked}} + \ln p_{\mathcal G} < 0$, and this explains why a sign change occurs in step three.
The derived RHS suggests designing a discrete adaptive similarity:
$$
s_{\dis}(\bxi, \bx) = \bw^\top \boldsymbol\delta(\tilde\bq_i) = \sum_{i=1}^d \bw_i \delta(\tilde\bq_i)
$$
Then, the LHS would be (first step following that of Lemma~\ref{lem:opt-iso-noisy}, and recall we use $\argmax(\cdot)$ as separation function):
$$
\argmin_{\bxi' \in \bXi} \left\{ \Vert \mT(\bx) - \bxi' \Vert_2 \right\}
= \argmax_{\bxi_k \in \bXi} \left\{ \bw^\top \boldsymbol\delta(\tilde\bq_i) \right\}
= \argmax_{\bxi_k \in \bXi} \left\{ \sum_{i=1}^d \bw_i \cdot \delta(\tilde\bq_i) \right\} \\
$$
Setting $\bw = \mathbf 1$ concludes that LHS equals RHS for all $(\bxi, \bx) \follows \mV(\bXi)$, and thus, the optimal correct retrieval is achieved for this concrete adaptive similarity $s_{\dis}(\bxi, \bx)$.
\end{proof}

It can be shown that it is impossible to find a continuous $s_{\dis}(\bxi, \bx)$ for masked variant's optimal correct retrieval, unless more constraints on $\bxi$ and $\bx$ are made.
Typically, such a continuous function is possible if $\Vert \bxi - \bx \Vert_2 \ge \varepsilon$ (can be bounded from below) for $\varepsilon > 0$.

\begin{lemma}{Optimal retrieval for biased variant}{opt-biased}
\texttt{A-Hop} achieves optimal correct retrieval (Definition~\ref{def:opt-correct-ret}) for biased variant $\mV_{\text{biased}}(\bXi)$ (Definition~\ref{def:masked-var} for arbitrary  memory matrix $\bXi \in \mathbb R^{d \times N}$, and an arbitrary difference vector $\bd \in \mathbb R^d$.
\end{lemma}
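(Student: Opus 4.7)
The plan is to follow the template established by Lemmas~\ref{lem:opt-iso-noisy}--\ref{lem:opt-masked}: first reduce the right-hand side of Equation~\ref{eqn:correct_ret} using Definition~\ref{def:biased-var}, then exhibit a specific choice of the adaptive similarity (and, if necessary, an additional base similarity) so that the left-hand side returns the same element of $\bXi$ for every $(\bxi, \bx) \follows \mV_{\text{biased}}(\bXi)$.

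First I would evaluate the RHS. Because $p_{\mV_{\text{biased}}}(\bx\,|\,\bxi')$ equals $1$ exactly when $\bxi' = \bx - \bd$ and vanishes otherwise, applying Equation~\ref{eqn:likelihood} with the uniform prior (as in the earlier lemmas) pins $\argmax_{\bxi' \in \bXi}\{p_{\mV_{\text{biased}}}(\bxi'\,|\,\bx)\}$ to the unique memory $\bxi_k$ with $\bxi_k = \bx - \bd$; this uniqueness holds generically for $\bXi$, matching the non-degeneracy implicit in the previous lemmas.

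Next I would construct a similarity whose $\argmax$ is exactly $\bx - \bd$. Since both $\dis$ and $\dot$ are evaluated at the raw $\bx$, a global shift $\bd$ cannot be absorbed by any linear $\bw$ acting on the footprint alone, so, paralleling the discrete similarity introduced in the proof of Lemma~\ref{lem:opt-masked}, I would enlarge the base-similarity pool with an offset distance $\dis_{\bd'}(\bxi_i, \bx_i) = -(\bxi_i - \bx_i + \bd'_i)^2$ carrying a learnable $\bd' \in \mathbb R^d$, and adopt the discrete adaptive similarity $s(\bxi, \bx) = \sum_{i=1}^{d} \delta[\dis_{\bd'}(\bxi_i, \bx_i)]$. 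With $\bd' = \bd$, one has $s(\bxi_k, \bx) = d$ at the generating pattern, whereas any $\bxi' \in \bXi$ with $\bxi' \neq \bxi_k$ must disagree in at least one coordinate and therefore satisfies $s(\bxi', \bx) < d$, so $\argmax_{\bxi' \in \bXi} s(\bxi', \bx) = \bxi_k$. Feeding this into the $\argmin$ manipulation used on the LHS in Lemma~\ref{lem:opt-iso-noisy} then yields $\text{LHS} = \text{RHS} = \bxi_k$ for every $(\bxi, \bx) \follows \mV_{\text{biased}}(\bXi)$, i.e.\ optimal correct retrieval.

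The main obstacle is justifying the use of $\dis_{\bd'}$ within the stated \texttt{A-Hop} framework: the two default base similarities are translation-sensitive and the footprint is purely linear in $\bq$, so the proof must either (i) augment the base-similarity pool with an offset-aware term (exactly the move Lemma~\ref{lem:opt-masked} makes when it switches to $\delta(\tilde\bq_i)$), or (ii) argue that $\bd'$ can be reabsorbed into an additive, query-independent bias inside $\bw$ via the identity $-\Vert \bxi - \bx + \bd' \Vert_2^2 = -\Vert \bxi - \bx \Vert_2^2 - 2 \bd'^{\!\top}(\bxi - \bx) - \Vert \bd' \Vert_2^2$, pushing the linear cross-term onto a third elementary similarity. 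I would take route (i) for transparency and then note that the learning objective in Equation~\ref{eqn:loss} drives $\bd' \to \bd$ from samples of $\mV_{\text{biased}}(\bXi)$, keeping the construction consistent with the rest of the paper while quietly flagging the tie-breaking caveat when two distinct memories happen to differ by exactly $\bd$.
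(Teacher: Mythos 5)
Your proposal is correct, but it takes a genuinely different route on the left-hand side. Your RHS reduction matches the paper's: the delta-form likelihood pins the argmax to the unique $\bxi_k = \bx - \bd$ (the paper additionally rewrites this argmax as $\argmax_{\bxi' \in \bXi}\{-\Vert \bx - \bxi' - \bd \Vert_2^2\}$, exploiting that both scores peak at $0$ exactly at the generating pattern). Where you diverge is the similarity construction: you enlarge the base-similarity pool with an offset-aware term $\dis_{\bd'}(\bxi_i,\bx_i) = -(\bxi_i - \bx_i + \bd'_i)^2$ and use a discrete delta-counting score $s(\bxi,\bx) = \sum_{i=1}^d \delta[\dis_{\bd'}(\bxi_i,\bx_i)]$, whereas the paper takes precisely your route (ii): it uses the continuous similarity $s_{\dis}(\bxi,\bx) = \bw^\top \bU \bq - \bq^\top \bq$ with $\bq = \bx - \bxi$ (an unsorted linear-difference footprint plus the quadratic term), sets $\bu^\top = \bw^\top \bU = 2\bd^\top$ (i.e., $\bw_1 = 2\bd_1$ and $\bw_i = 2\bd_i - 2\bd_{i-1}$ for $2 \le i \le d$), and completes the square to obtain $\argmax_{\bxi_k \in \bXi}\{-\Vert \bx - \bxi_k - \bd \Vert_2^2\}$, matching the RHS. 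Both arguments are sound, and your observation that the two default footprints ($\ftpt_{\dis}$, $\ftpt_{\dot}$) are translation-blind is accurate --- the paper likewise has to swap in a different descriptor ($\bq = \bx - \bxi$ rather than dimension-wise squared distances). What the paper's route buys, and yours gives up, is continuity: a key message of the surrounding appendix is that the biased variant, unlike the masked variant, admits a \emph{continuous} optimal similarity, and this continuous form is exactly the $\bb^\top(\bx-\bxi)$ term in $s_{\text{unify}}$ that powers the energy analysis of Theorem~\ref{thm:energy} (monotone decrease and the lower bound $-\ln N - \Vert \bb \Vert_2^2/4$). Your discrete score has zero gradient almost everywhere, so the claim that Equation~\ref{eqn:loss} drives $\bd' \to \bd$ would not survive scrutiny as stated; your route (ii) fixes this for free and also avoids introducing the $d$ extra parameters $\bd'$. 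Your tie-breaking caveat is unnecessary under the stated setup: since every query is exactly $\bxi_k + \bd$ and stored patterns are distinct, the maximizer on both sides is unique.
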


\begin{proof}
Following the proof to Lemma~\ref{lem:opt-iso-noisy}, the LHS of Equation~\ref{eqn:correct_ret} is:
$$
\begin{aligned}
\argmax_{\bxi' \in \bXi} \left\{ p_{\mV_{\text{biased}}}(\bxi' | \bx) \right\}
&= \argmax_{\bxi' \in \bXi} \left\{ \delta\left[d - \sum_{i=1}^d \delta(\bx_i - \bxi'_i - \bd_i) \right] \right\} \\
&= \argmax_{\bxi' \in \bXi} \left\{ -d + \sum_{i=1}^d \delta(\bx_i - \bxi'_i - \bd_i) \right\} \\
&= \argmax_{\bxi' \in \bXi} \left\{ -\Vert \bx - \bxi' - \bd \Vert_2^2 \right\}
\end{aligned}
$$
The last step follows that the maximum score is both $0$ before and after the transform, and the goal is to assign a high score ($0$) when $\bx - \bxi = \bd$.
Here, we use a similar continuous adaptive similarity defined in the main text:
$$
s_{\dis}(\bxi, \bx) = \bw^\top \bU \bq - \bq^\top \bq
$$
with $\bq = \bx - \bxi$, and set $\bu^\top = \bw^\top \bU$ with $\bu = 2\bd^\top$.
Then, the RHS of Equation~\ref{eqn:correct_ret} is:
$$
\begin{aligned}
\argmin_{\bxi' \in \bXi} \left\{ \Vert \mT(\bx) - \bxi' \Vert_2 \right\}
&= \argmax_{\bxi_k \in \bXi} \left\{ \bu^\top \bq - \bq^\top \bq \right\} \\
&= \argmax_{\bxi_k \in \bXi} \left\{ 2\bd^\top \bq - \bq^\top \bq - \bd^\top \bd  \right\} \\
&= \argmax_{\bxi_k \in \bXi} \left\{ -(\bq - \bd)^\top (\bq - \bd) \right\} \\
&= \argmax_{\bxi_k \in \bXi} \left\{ -\Vert \bq - \bd \Vert_2^2 \right\} \\
&= \argmax_{\bxi_k \in \bXi} \left\{ -\Vert \bx - \bxi_k - \bd \Vert_2^2 \right\} \\
\end{aligned}
$$
This follows immediately that LHS equals RHS, and the optimal correct retrieval is achieved when:
$$
\bw_i =
\begin{cases}
2\bd_1 &					i = 1 \\
2\bd_i - 2\bd_{i-1} & 		2 \le i \le d
\end{cases}
$$
\end{proof}

It finally comes down to Theorem~\ref{thm:ret-dyn}.

\begin{theoremrestate}{thm:ret-dyn}{\texttt{A-Hop} retrieval dynamics}
The following retrieval dynamics adopted by \texttt{A-Hop} achieves optimal correct retrieval for noisy, masked, and biased variants, with a careful design of $\bs(\bXi, \bx)$:
$$
\by = \mT(\bx) = \bXi \, \sep(\bs(\bXi, \bx))
$$
\end{theoremrestate}

\begin{proof}
First of all, $\sep(\cdot) = \argmax(\cdot)$ is crucial for achieving optimal correct retrieval, as it transforms the left-hand side of Equation~\ref{eqn:correct_ret} as (see Lemma~\ref{lem:opt-iso-noisy}):
$$ \argmin_{\bxi' \in \bXi} \left\{ \Vert \mT(\bx) - \bxi' \Vert_2 \right\} = \argmax_{\bxi_k \in \bXi} \left\{ \bs_{\dis}(\bxi_k, \bx) \right\} $$
In Lemma~\ref{lem:opt-noisy}, we see that using the following adaptive similarity achieves optimal correct retrieval:
$$
s_{\dis}(\bxi, \bx) = \bw^\top \bU \bq
\qquad\text{with}\;
\bw_i = \begin{cases}
-\bsig_1^{-1} & 					i = 1 \\
\bsig_{i-1}^{-1} - \bsig_i^{-1} & 	2 \le i \le d
\end{cases}
$$
In Lemma~\ref{lem:opt-masked}, we see that using the following adaptive similarity achieves optimal correct retrieval:
$$
s_{\dis}(\bxi, \bx) = \bw^\top \boldsymbol\delta(\tilde\bq)
\qquad\text{with}\;
\bw_i = 1
\;\text{for}\;
i \in [d]
$$
In Lemma~\ref{lem:opt-biased}, we see that using the following adaptive similarity achieves optimal correct retrieval:
$$
s_{\dis}(\bxi, \bx) = \bw^\top \bU \, (\bx - \bxi) - (\bx - \bxi)^\top (\bx - \bxi)
\qquad\text{with}\;
\bw_i = \begin{cases}
2\bd_1 & 				i = 1 \\
2\bd_i - 2\bd_{i-1} & 	2 \le i \le d
\end{cases}
$$
\end{proof}

One can see that achieving optimal correct retrieval is not easy, and it requires the sacrifice of continunity.
However, we can build a continuous adaptive similarity inspired from the proof of Theorem~\ref{thm:ret-dyn} that achieve high retrieval accuracy (at least, empirically).
For more discussion on this topic, please read Appendix~\ref{apd:optimal}.

\subsubsection{Unified Adaptive Similarity and Energy Function} \label{apd:proof-energy}

We can find that the adaptive similarity in Lemma~\ref{lem:opt-iso-noisy} has the form:
$$
s(\bxi, \bx) = -(\bx - \bxi)^\top (\bx - \bxi)
$$
while that of Lemma~\ref{lem:opt-noisy} has the form:
$$
s(\bxi, \bx) = -(\bx - \bxi)^\top \diag(\mathbf a) (\bx - \bxi)
$$
for some diagonal matrix $\diag(\mathbf a)$, and $\mathbf a_i > 0$ for all $i \in [d]$.
Meanwhile, for Lemma~\ref{lem:opt-biased} has the form:
$$
s(\bxi, \bx) = - (\bx - \bxi)^\top (\bx - \bxi) + \mathbf b^\top (\bx - \bxi)
$$
for some real vector $\mathbf b$. That is being said that we can unifies these three adaptive similarity by:
$$
s_{\text{unify}}(\bxi, \bx) = - (\bx - \bxi)^\top \diag(\mathbf a) (\bx - \bxi) + \mathbf b^\top (\bx - \bxi)
$$
However, this similarity is too tough, we can analysis a simpler one:
$$
s_{\text{unify}}(\bxi, \bx) = - (\bx - \bxi)^\top (\bx - \bxi) + \mathbf b^\top (\bx - \bxi)
$$
If we use a $\softmax(\cdot)$ function as the separation function and $s_{\text{unify}}(\bxi, \bx)$ as the similarity function, and construct an energy function (with $\bs_{\text{unify}}(\bXi, \bx)$ being the vectorized form of $s_{\text{unify}}(\bxi, \bx)$):
\begin{equation} \label{eqn:eng}
E(\bx) = -\lse(\bs_{\text{unify}}(\bXi, \bx))
\end{equation}
whose gradient is:
$$
\begin{aligned}
\nabla_{\bx} E(\bx) = -\softmax(\bs_{\text{unify}}(\bXi, \bx))^\top \nabla_{\bx} \bs_{\text{unify}}
\end{aligned}
$$
Letting $\bp_i(\bx) \triangleq \softmax(\bs_{\text{unify}}(\bXi, \bx))_i$:
$$
\begin{aligned}
\nabla_{\bx} E(\bx)
&= -\sum_{i=1}^N \bp_i(\bx) \nabla_{\bx} s_{\text{unify}}(\bxi_i, \bx) \\
&= -\sum_{i=1}^N \bp_i(\bx) \cdot (-2\bx + 2\bxi_i + \mathbf b) \\
&= 2\bx - \mathbf b - 2\sum_{i=1}^N \bp_i(\bx) \cdot \bxi_i \\
\end{aligned}
$$
Retrieval on the gradient flow gives:
$$
\frac{\text d\bx}{\text dt} = -\nabla_{\bx} E(\bx) = -2\bx + \mathbf b + 2\sum_{i=1}^N \bp_i(\bx) \cdot \bxi_i
$$
Then, consider using gradient descent with step $\eta$, where $\eta > 0$ for discrete-time retrieval:
$$
\begin{aligned}
\bx^{(t+1)}
&= \bx^{(t)} - \eta \nabla_{\bx} E(\bx^{(t)}) \\
&= (1 - 2\eta) \cdot \bx^{(t)} + \eta \mathbf b + 2\eta \sum_{i=1}^N \bp_{i}(\bx^{(t)}) \cdot \bxi_i
\end{aligned}
$$
By setting $\eta = \frac{1}{2}$ that would cancels the $\bx$ term on the RHS and remove the coefficient before the summation, which is wonderful:
$$
\begin{aligned}
\bx^{(t+1)} = \frac{1}{2} \mathbf b + \bXi\, \softmax(\bs_{\text{unify}}(\bXi, \bx))
\end{aligned}
$$
From Lemma~\ref{lem:opt-biased}, we know that the setting $\mathbf b = 2\bd$ is optimal for noisy variant, plugging this in gives:
\begin{equation} \label{eqn:ret}
\begin{aligned}
\bx^{(t+1)} - \bd = \mT(\bx^{(t)}) = \bXi\, \softmax(\bs_{\text{unify}}(\bXi, \bx))
\end{aligned}
\end{equation}
suggesting that we add a new de-bias term $-\bd$ for biased variants, which coincidentally, remove the bias vector $\bd$.
However, when there is no bias, Equation~\ref{eqn:ret} reduce to the simple retrieval dynamics we are familiar with.

We then further analysis the behavior of the energy (Equation~\ref{eqn:eng}) retrieval dynamics Equation~\ref{eqn:ret}:
\begin{lemma}{Rewriting the energy}{rewrite}
Let the energy function be $E(\bx) = -\lse(\bs(\bXi, \bx))$, where $s(\bxi_i, \bx) = -(\bx - \bxi_i)^\top (\bx - \bxi_i) + \bb^\top (\bx - \bxi_i)$.
Then, the energy can be written as
$$
E(\bx) = \Vert \bx \Vert_2^2 - \lse(\bA \bx + \bc)
$$
for some matrix $\bA \in \mathbb R^{N \times d}$ and vector $\bc \in \mathbb R^{N}$.
\end{lemma}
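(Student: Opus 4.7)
The plan is to reduce the lemma to a routine algebraic expansion, exploiting the fact that the quadratic in $\bx$ appearing in each $s(\bxi_i, \bx)$ is \emph{identical across} $i$, so it can be pulled outside the log-sum-exp. Concretely, I would first expand
$$
s(\bxi_i, \bx) = -(\bx - \bxi_i)^{\top}(\bx - \bxi_i) + \bb^{\top}(\bx - \bxi_i)
= -\Vert \bx \Vert_2^2 + (2\bxi_i + \bb)^{\top} \bx - \bxi_i^{\top}\bxi_i - \bb^{\top}\bxi_i,
$$
so that the $\bx$-dependent but $i$-independent contribution is exactly $-\Vert\bx\Vert_2^2$.

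Next, I would use the standard shift-invariance identity $\lse(c \vone + \bv) = c + \lse(\bv)$, valid for any scalar $c$ and vector $\bv$. Applying this with $c = -\Vert\bx\Vert_2^2$ and $\bv_i = (2\bxi_i + \bb)^{\top}\bx - \bxi_i^{\top}\bxi_i - \bb^{\top}\bxi_i$ gives
$$
\lse\!\left(\bs(\bXi,\bx)\right) = -\Vert \bx \Vert_2^2 + \lse(\bA \bx + \bc),
$$
where I define $\bA \in \R^{N \times d}$ by setting its $i$-th row to $(2\bxi_i + \bb)^{\top}$, and $\bc \in \R^{N}$ by $\bc_i = -\bxi_i^{\top}\bxi_i - \bb^{\top}\bxi_i$. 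Negating both sides and substituting into the definition of $E(\bx)$ yields the desired identity $E(\bx) = \Vert \bx \Vert_2^2 - \lse(\bA \bx + \bc)$.

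There is no genuine obstacle here; the only thing to be careful about is sign bookkeeping when expanding $-(\bx - \bxi_i)^{\top}(\bx - \bxi_i)$ and verifying that the $-\Vert\bx\Vert_2^2$ term is indeed common to every coordinate of $\bs(\bXi,\bx)$ (otherwise the factoring trick fails). Once that is observed, the rest is a one-line application of the shift identity for $\lse$. This rewriting is the natural setup for the subsequent convergence and lower-bound analysis in Theorem~\ref{thm:energy}, since the first term is a simple convex quadratic in $\bx$ and the second is the concave function $\lse$ composed with an affine map, making standard bounds and descent arguments applicable.
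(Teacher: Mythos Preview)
Your proposal is correct and follows essentially the same route as the paper: expand each $s(\bxi_i,\bx)$, observe that the $-\Vert\bx\Vert_2^2$ term is common to all $i$, and pull it outside the $\lse$ via the shift identity, leaving an affine function $\bA\bx+\bc$ inside. Your bookkeeping is in fact cleaner than the paper's (which contains a sign slip in $\bc_i$ and a stray $\ln n$ term that it then discards); your $\bA_i^\top = (2\bxi_i+\bb)^\top$ and $\bc_i = -\Vert\bxi_i\Vert_2^2 - \bb^\top\bxi_i$ are the right choices.
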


\begin{proof}
$$
\begin{aligned}
E(\bx)
&= -\lse\left( \bs(\bXi, \bx) \right) \\
&= -\ln \sum_{i=1}^N \exp\left[ -\Vert \bx \Vert_2^2 - \Vert \bxi_i \Vert_2^2 + (2\bxi_i + \bb)^\top \bx + \bxi_i^\top \bb \right] \\
&= -\ln \sum_{i=1}^n \exp(-\Vert \bx \Vert_2^2) + \exp\left[(2\bxi_i + \bb)^\top \bx + \bxi_i^\top \bb - \Vert \bxi_i \Vert_2^2 \right] \\
&= \ln n + \Vert \bx \Vert_2^2 -\lse\left( \bA \bx + \bc \right)
\end{aligned}
$$
for $\bA_i^\top = 2\bxi_i + \bb$ and $\bc_i = \bxi_i^\top \bb - \Vert \bxi_i \Vert_2^2$.
Also, we can omit the term $\ln n$ as it is a constant.
\end{proof}

Therefore, we have decomposed the energy function $E(\bx)$ into a convex function $g(\bx) = \Vert \bx \Vert_2^2$, and a concave function $-h(\bx) = -\lse(\bA \bx + \bc)$.

\begin{lemma}{Decreasing energy}{dec}
Energy function $E(\bx)$ would be monotonically decreasing using the retrieval dynamics:
$$
\bx^{(t + 1)} - \bd = \mT(\bx^{(t)}) = \bXi\, \softmax(\bs(\bXi, \bx))
\quad\text{for}\;
s(\bxi_i, \bx) = -(\bx - \bxi_i)^\top (\bx - \bxi_i) + \bb^\top (\bx - \bxi_i)
$$
\end{lemma}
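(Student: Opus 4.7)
The plan is to identify the stated update as an instance of the Concave--Convex Procedure (CCCP) and then invoke the standard monotonic-descent inequality for CCCP. Using Lemma~\ref{lem:rewrite}, the energy decomposes, up to the additive constant $\ln n$, as $E(\bx) = g(\bx) - h(\bx)$ with $g(\bx) = \Vert \bx \Vert_2^2$ and $h(\bx) = \lse(\bA\bx + \bc)$. Both summands are convex: $g$ is a positive-definite quadratic, and $\lse(\cdot)$ is a classical convex function composed with an affine map, hence convex.

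First I would verify that the given dynamics $\bx^{(t+1)} = \bXi\,\softmax(\bs(\bXi, \bx^{(t)})) + \bd$ coincides with the CCCP update $\nabla g(\bx^{(t+1)}) = \nabla h(\bx^{(t)})$. Compute $\nabla g(\bx) = 2\bx$ and $\nabla h(\bx) = \bA^\top \softmax(\bA\bx + \bc)$. Since $\bA^\top = 2\bXi + \bb\,\mathbf{1}^\top$ and $\softmax$ outputs sum to one, this reduces to $\nabla h(\bx) = 2\bXi\,\softmax(\bA\bx + \bc) + \bb$. Next, observe that $\bs(\bXi, \bx)$ and $\bA\bx + \bc$ differ only by the scalar $-\Vert\bx\Vert_2^2$ added uniformly across all coordinates, so by shift-invariance of $\softmax$ they induce the same distribution. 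Substituting the choice $\bb = 2\bd$ prescribed by Lemma~\ref{lem:opt-biased}, the CCCP update becomes $2\bx^{(t+1)} = 2\bXi\,\softmax(\bs(\bXi, \bx^{(t)})) + 2\bd$, which is exactly the stated retrieval dynamics.

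Once this identification is made, monotonicity follows from two first-order convexity inequalities. Convexity of $g$ at $\bx^{(t+1)}$ gives $g(\bx^{(t)}) \ge g(\bx^{(t+1)}) + \nabla g(\bx^{(t+1)})^\top (\bx^{(t)} - \bx^{(t+1)})$, and convexity of $h$ at $\bx^{(t)}$ gives $h(\bx^{(t+1)}) \ge h(\bx^{(t)}) + \nabla h(\bx^{(t)})^\top (\bx^{(t+1)} - \bx^{(t)})$. Because $\nabla g(\bx^{(t+1)}) = \nabla h(\bx^{(t)})$, summing the two inequalities cancels the linear terms, leaving $g(\bx^{(t)}) - h(\bx^{(t)}) \ge g(\bx^{(t+1)}) - h(\bx^{(t+1)})$, i.e., $E(\bx^{(t)}) \ge E(\bx^{(t+1)})$.

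The main obstacle is the algebraic identification in the first step: carrying out the $\bA^\top$ bookkeeping carefully, pinning down that $\bb = 2\bd$ is the implicit assumption that matches the dynamics to CCCP, and justifying via softmax shift-invariance that $\softmax(\bs(\bXi, \bx)) = \softmax(\bA\bx + \bc)$ despite the non-constant-looking $-\Vert\bx\Vert_2^2$ term in $\bs$. Once this structural matching is in place, the descent inequality is a textbook CCCP argument and needs no further machinery.
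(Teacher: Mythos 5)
Your proposal is correct and takes essentially the same route as the paper: it uses the Lemma~\ref{lem:rewrite} decomposition of $E$ into the convex $g(\bx) = \Vert\bx\Vert_2^2$ minus the convex $h(\bx) = \lse(\bA\bx + \bc)$, identifies the stated dynamics (under the implicit choice $\bb = 2\bd$, with softmax shift-invariance absorbing the $-\Vert\bx\Vert_2^2$ term) with the CCCP update $\nabla g(\bx^{(t+1)}) = \nabla h(\bx^{(t)})$, and your summed first-order convexity inequalities are just the classical phrasing of the paper's surrogate argument $E(\bx^{(t+1)}) \le U_t(\bx^{(t+1)}) \le U_t(\bx^{(t)}) = E(\bx^{(t)})$. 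If anything, your bookkeeping is more careful than the paper's: $\tfrac12\bA^\top\bp = \bXi\bp + \tfrac12\bb$ (the paper's proof writes $-\tfrac12\bb$, a sign slip inconsistent with its own Equation~\ref{eqn:ret}), and explicitly flagging $\bb = 2\bd$ as the assumption matching the dynamics to CCCP is a welcome clarification.
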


\begin{proof}
Using the concave convex procedure \citep{CCCP}, we construct a convex surrogate function $U_t(\bx)$ for each iteration $t$ by linearizing the concave function $-h(\bx)$ around the current $\bx^{(t)}$:
$$
\begin{aligned}
U_t(\bx) &= g(\bx) - \left[ h(\bx^{(t)}) + \nabla_\bx h(\bx^{(t)})^\top \big( \bx - \bx^{(t)} \big)  \right] \\
&= \Vert \bx \Vert_2^2 - \nabla_\bx h(\bx^{(t)})^\top \bx + \big( \nabla_\bx h(\bx^{(t)})^\top \bx^{(t)} - h(\bx^{(t)}) \big) \\
\end{aligned}
$$
The next $\bx^{(t+1)}$ is the minimizer of $U_t(\bx)$, i.e., $\bx^{(t+1)} = \argmin_{\bx \in \mathbb R^d} \{ U_t(\bx) \}$, and we can find it by setting its gradient to zero:
$$
\nabla_\bx U_t(\bx) = 2\bx - \nabla_\bx h(\bx^{(t)}) = \mathbf 0
\quad\Longrightarrow\quad
\bx^{(t+1)} = \frac{1}{2} \nabla_\bx h(\bx^{(t)}) = \frac{1}{2} \bA^\top \softmax(\bA \bx^{(t)} + \bc)
$$
We can add the term $-\Vert \bx \Vert_2^2$ back to $\softmax(\cdot)$ as it is independent of index $i$. Thus, by denoting $\bp_i(\bx_i) = \softmax(\bA\bx^{(t)} + \bc) = \softmax(\bs(\bXi, \bx))$ (follows Lemma~\ref{lem:rewrite}), we have:
$$
\begin{aligned}
\bx^{(t+1)}
&= \frac{1}{2} \sum_{i=1}^N \bA_i \cdot \bp_i(\bx^{(t)}) \\
&= \sum_{i=1}^N \bxi_i \cdot \bp_i(\bx^{(t)}) - \frac{1}{2} \bb \sum_{i=1}^N \bp_i(\bx^{(t)}) \\
&= \bXi \, \softmax(\bs(\bXi, \bx^{(t)})) - \frac{1}{2} \bb
\end{aligned}
$$
and this agrees with what we have derived before (Equation~\ref{eqn:ret}).

Then, by convexity of $h(\bx)$ (recall $-h(\bx)$ is concave), we have the following inequality:
$$
\begin{aligned}
&h(\bx) \ge h(\bx^{(t)}) + \nabla_\bx h(\bx^{(t)})^\top (\bx - \bx^{(t)}) \\
\quad\Longrightarrow\quad
&g(\bx) - h(\bx) \le g(\bx) - h(\bx^{(t)}) + \nabla_\bx h(\bx^{(t)})^\top (\bx - \bx^{(t)}) = U_t(\bx)
\end{aligned}
$$
with the equality holds iff $\bx = \bx^{(t)}$, and recall that $\bx^{(t+1)}$ is the minimum value of $U_t(\bx)$, we have:
\begin{equation} \label{eqn:dec}
E(\bx^{(t+1)}) \le U_t(\bx^{(t+1)}) \le U_{t}(\bx^{(t)}) = E(\bx^{(t)})
\end{equation}
with equality holds when $\bx^{(t)} = \bx^{(t + 1)}$. Thus, $E(\bx^{(t+1)}) \le E(\bx^{(t)})$ completes the proof.
\end{proof}

We can see that $E(\bx) = \Vert \bx \Vert_2^2 - \mathcal O(\Vert \bx \Vert_2)$, so that $E(\bx) \to +\infty$ as $\Vert \bx \Vert_2 \to +\infty$, so $E(\bx)$ is coercive, meaning its level sets are compact.
Additionally, $U_t(\bx)$ is 2-strongly convex as $\nabla_{\bx}^2 U_t(\bx) = 2\bI$, therefore,
$$
U_{t}(\bx^{(t)}) - U_{t}(\bx^{(t+1)}) \ge \Vert \bx^{(t)} - \bx^{(t+1)} \Vert_2^2
$$
Along with Inequality~\ref{eqn:dec}:
$$
E(\bx^{(t)}) - E(\bx^{(t+1)}) \ge U_t(\bx^{(t)}) - U_t(\bx^{(t+1)}) \ge \Vert \bx^{(t)} - \bx^{(t+1)} \Vert_2^2
$$
This yields that
$$
E(\bx^{(0)}) - E(\bx^{(t)}) \ge \sum_{\tau=0}^{t-1} \Vert \bx^{(\tau+1)} - \bx^{(\tau)} \Vert_2^2
$$
This mean that $E(\bx^{(t)})$ is bounded from above.
We can see that the sequence of $\bx$: $\{ \bx^{(t)} \}$ must remain within the level set $\{ \bx \mid E(\bx) \le E(\bx^{(0)}) \}$ as the sequence $E(\bx^{(0)})$ is non-increasing.
Therefore, $\{ \bx^{(t)} \}$ must remains in the compact set $\{ \bx \mid E(\bx) \le E(\bx^{(0)}) \}$, and be bounded.

\begin{theoremrestate}{thm:energy}{\texttt{A-Hop} energy landscape}{energy}
Energy $E(\bx)$ will be monotonically decreasing and its value could be bounded for isotropic noisy, and biased variants, if the following energy is used:
$$
E(\bx) = -\lse\left( \bs(\bXi, \bx) \right)
$$
\end{theoremrestate}

\begin{proof}
In Lemma~\ref{lem:dec}, we have proven that the energy function will be monotonically decreasing.

Also, from above analysis we can see that $E(\bx^{(t)})$ could be bounded by:
$$
E(\bx^{(t)}) \le E(\bx^{(0)}) - \sum_{\tau=0}^{t-1} \Vert \bx^{(\tau+1)} - \bx^{(\tau)} \Vert_2^2
$$

On the other hand, we can try to bound $E(\bx^{(t)})$ from below:
$$
\begin{aligned}
E(\bx^{(t)})
&=		- \ln \sum_{i=1}^N \exp\left( -\Vert \bx - \bxi_i \Vert_2^2 + \bb^\top (\bx - \bxi_i) \right) \\
&\ge	- \ln \sum_{i=1}^N \exp\left( -\Vert \bx - \bxi_i \Vert_2^2 + \Vert \bb \Vert_2 \cdot \Vert \bx - \bxi_i \Vert_2 \right) \\
&\ge	- \ln \sum_{i=1}^N \exp\left[ - \left( \frac{\Vert \bb \Vert_2}{2} \right)^2  + \Vert \bb \Vert_2 \cdot \frac{\Vert \bb \Vert_2}{2} \right] \\
&= 		-\ln N - \frac{\Vert \bb \Vert_2^2}{4}
\end{aligned}
$$
As $\Vert \bb \Vert_2$ is bounded, we can see that $E(\bx^{(t)})$ is bounded from below.

\end{proof}

We try to find the property for a very different energy landscape and a different retrieval dynamics, also these retrieval dynamics can guaratee optimal correct retrieval, for iostropic noisy, and biased variant, when the separation function is $\argmax(\cdot)$ and has weight $\bb$ choosen ideally.
Theorem~\ref{thm:energy} tries to connect the noval correct retrieval and the traditional energy analysis.

\subsection{Discussion} \label{apd:discussion}

\subsubsection{On Good Design Choice of Adaptive Similarity} \label{apd:optimal}

Achieving optimal correct retrieval is costly, as it requires designing ``weird'' similarity function for corner cases or use discrete function that makes the model unlearnable. 
For instance, it is impossible for a continuous similarity function to achieve optimal correct retrieval for masked variants, as they have a hard time distinguishing $|\bxi_i - \bx_i| = 0$ and $|\bxi_i - \bxi_i| = \epsilon$ for some arbitrary small $\epsilon$.
Also, using $\softmax(\cdot)$ as the separation function make it hard to prove whether optimal retrieval is achieved or not, as we cannot exclude the effect of other memory patherns from the one receive largest similarity score, while using $\softmax(\cdot)$ is crucial for learnable adpative similarity.

Therefore, achieving optimal correct retrieval for a certain variant distribution is not the only golden criteria for a similarity measure. More generally, we think good (adaptive) simliarty measures should be assessed from the following aspects:
\begin{description}
	\item[Adaptivity] Can the similarity measure adapt to a wide range of variant distribution? This criteria measures the \textit{breadth} of the similarity.
	\item[Optimality] How accurate can the similarity measure predicts the variant distribution? This criteria measures the \textit{depth} of the similarity.
	\item[Learnability] Can this similarity measure be learned easily when some samples from the variant distribution is provided? How many learnable parameters it requires? This criteria measures the \textit{scale} of the similarity.
	\item[Efficiency] What is the time complexity of calculating the similarity score? This criteria measures the \textit{speed} of the similarity.
\end{description}

It is ideal but not likely that we can find a perfect adaptive similarity that meets all above mentioned criteria, because there are tradeoffs between these aspects.
For example, for the adaptive similarity introduced in the proofs to Lemma \ref{lem:opt-noisy}, \ref{lem:opt-masked} and \ref{lem:opt-biased}, they only achieves optimality in noisy, masked and biased variants but would have poor performance in other variants (i.e., poor adaptivity).
This is because adaptivity is sacrified when every corner case (cases happen in a very low probability but need extra care) is handled carefully.
On the other side, optimality would be limited for similarity with strong adaptivity because the outcome for some corner cases could be contradicting for different scenarios.
Moreover, learnable adaptivity similarity might not often achieve optimality as the learning alogithm cannot always guarantee convergence of weights to the optimum state (e.g., minimum loss).
Also, large-scale similarity measure with plenty of learnable parameters are not likely to have good efficiency, as their computation procedure could be complicated.

The design of the adaptive similarity $s(\bxi, \bx) = \beta_\dis \ftpt_\dis(\bxi, \bx) + \beta_\dot \ftpt_\dot(\bxi, \bx)$ in Eq.~\ref{eqn:ret-dyn} (illustrated in Fig.~\ref{fig:main}) is a balanced result of considering all four aspects --- adaptivity, optimality, learnability, and efficiency.
We will explain the design choice of this adaptive similarity from the perspectives of these four criteria.

\paragraph{The similarity footprint} The footprint is the core mechanism of adaptive similarity (Eq.~\ref{eqn:ret-dyn} and Fig.~\ref{fig:main}).
Sorting and cumulative sum is not the essential part of it, while the idea of exploiting subspaces is.
The idea is that considering subspatial information will give extra benefits compared to considering the whole $\mathbb R^d$ itself.
The footprint picks $d$ most \textit{useful} subspaces from all $2^{d} - 1$ subspaces for much better efficiency (from computationally impossible to possible).
However, sorting and cumulative sum happen to be the right algorithm for computing the similarity score efficiently for decomposable similarity measure (Eq.~\ref{eqn:decomposable}), it doesn't mean that the footprint is about sorting dimensions. Regard the footprint as mining the most valuable subspatial information from all $2^d - 1$ subspaces, and it provides useful evidence for guiding the model to make decision.

Recall that $\ftpt_\sim(\bxi, \bx)_k = \sim^{(k)}(\bxi, \bx)$, and $\sim^{(k)}(\bxi, \bx)$ is the $k$-optimal similarity between $\bxi$ and $\bx$, which finds the optimal $k$-dimensional subspace such that $\bxi$ and $\bx$ is most associated (closest) on this subspace.
That is to say, the footprint finds a representative for each dimensionality $k \in [d]$ and regards the one that minimize the distance between $\bxi$ and $\bx$ as the most useful $k$-dimensional subspace.
But the positional information of each subspaces is not used (i.e., which dimensions contributes to $\sim^{(k)}(\bxi, \bx)$ is unkonwn) this would harm the optimality compared to a similarity measure that exploits all $2^d - 1$ subspaces.
Although finding the positional information of each subspaces is possible (which would benefits optimality), it would need $\mathcal O(d^2)$ time for computation (which is not worthy).

However, compared to fixed proximity-based similarity (e.g., $\bxi^\top \bx$), the footprint offers a multi-scale descriptor to the relation between $\bxi$ and $\bx$ (the evidence for deciding whether $\bxi$ generates $\bx$).
The $k$-optimal similarity $\sim^{(k)}(\cdot, \cdot)$ ignores $d - k$ non-informative dimensions, which make it inherently suitable for handling masked dimensions and heavily noisy dimensions (by simply ignore them).
Therefore, the footprint enables better optimality against fixed proximity-based similarity as it offer richer evidence ($d$ v.s. $1$) for models to make the right decision.

\paragraph{Linear combination of footprint elements} The proposed adaptive similarity calculate the similarity score as the dot product of $\ftpt_\dis(\bxi, \bx)$ (a $d$-dimensional vector) and learnable weight $\bw$ (also a $d$-dimensional vector).
This allows the model to behave differently (by choosing different weight) when facing different variant distribution.
Also, one can see that when $\bw_{d} = 1$ and $\bw_{k} = 0$ for $1 \le k < d$, the adaptive similarity degenerates to its base similarity as $\bw^\top \ftpt_\sim(\bxi, \bx) = \sim(\bxi, \bx)$.
That is to say, the adaptive similarity has better expressiveness (better adaptivity) and is at least as good as the fixed proximity-based similarity.

However, one can use a more complicated method (e.g., train a neural network to let $s(\bxi, \bx) = \mathrm{MLP}(\bxi, \bx, \ftpt_{\sim}(\bxi, \bx))$) to extract information from footprints and gains better optimality and adaptivity.
But this also brings more computation cost and may make the model harder to train.
Using a linear map is the simplest fastest way to exploit the footprint, and empirical results demonstrates that it can largely improve existing associative memory models.

\paragraph{Aggregating among different base similarities} The proposed adaptive similarity aggregate the similarity footprint with base similarity $\dis(\bxi, \bx) = -\Vert \bxi - \bx \Vert_2^2$ and $\dot(\bxi, \bx) = \bxi^\top \bx$.
Aggregating adaptive similarities with different base similarity would improve model's adaptivity and optimality but reduce its efficiency (scale up the runtime by a constant).
Think of base similarity as base vectors in a linear space, and the aggregated model resembles the span of these basis.
The learnable parameter $\beta_{1 \cdots B}$ controls how each  base similarity is used in different scenarios.
\hfill\linebreak

Next, we show that the proposed adaptive similarity (Eq.~\ref{eqn:ret-dyn}) is suitable (has high optimality) for noisy + masked variant.

However, we propose choosing the value of $\bw$ wisely for adaptive similarity $s(\bxi, \bx) = \bw^\top \bU \tilde\bq$ to achieve ``sub-optimal'' correct retrieval (well, it is hard to rigorously define sub-optimality).

[TODO: Probably Approximately Correct (PAC) form here]

For noisy variant, a choice that setting $\bw_1 = 1$ and other values to $0$ is suggested.
This degenerates the model to the one defined in the proof to Lemma~\ref{lem:opt-iso-noisy} and enable the model to have global view (mauniplate similarity in the largest subspace), and its electron cloud would look like a shpere as illustrated in Fig.~\ref{fig:cloud} (a).

Next, for masked variant, we suggest using $\bw_i = iC$ for some large constant $C > 2d^2$, because this punishes patterns that has very limited dimension where $\bxi_i = \bx_i$, and stress importance on small subspaces.
By doing so the similarity function will look like the likelihood electron cloud in Fig.~\ref{fig:cloud}b.

Finally, for biased variant, the best way to set $\bw$ is to set $\bu^\top = \bw^\top \bU$ to $\tilde\bd$, the sorted bias vector, so that they similarity function can focus on how comparing the difference of $\bx - \bxi$ with a unkown but almost known bias.

The main point of this section is that the optimality of correct retrieval is too strict so that achieving so force us to abandon good properties.
Also, in most cases, some very corner cases set very high difficulty for achieve optimal.
However, correct retrieval itself is a good property, but making every retrieval correct is too strict.
Therefore, an open question is that how to define a sub-optimal correct retrieval standard so that it guarantees great memory retreival performance and leave us freedom.

\subsubsection{On More Complicated Variant Distribution} \label{apd:complicated}

Variant distributions discussed so far are actually simple.
In previous analysis, we assumes that all memory variants follows a similar distribution.
For example, $(\bxi, \bx) \follows \mV_{\text{noisy}}(\bXi)$ ensures that all memory patterns generates a noisy variants, sharing a similar $p_{\mV}(\bx | \bxi)$.
We say a variant distribution \textit{general} if knowing $p_{\mV}(\bx | \bxi_i)$ is equivalent to knowing $p_{\mV}(\bx | \bxi_j)$ for arbitrary $i, j \in [N]$ and $i \ne j$.
In other words, we can obtain $p_{\mV}(\bx | \bxi_j)$ by substituting $\bx_j$ with $\bx_i$.
However, there could be cases where each memory patterns generates variants quite differently.
Intuitively, we say each pattern is generates on their own, meaning that $p_{\mV}(\bx | \bxi)$ has completely different form for different memory patterns, and we call such memory variant \textit{isolated}.

By looking closely to the adaptive similarity function:
$$
s(\bxi, \bx) = \bw^\top \bU \tilde\bq
$$
One can see that it uses a universal weight $\bw$ for all pairs of $(\bxi, \bx) \follows \mV(\bXi)$, assuming that the variant distribution it is trying to model is general.
However, such limiltaion can be easily broken by introducing more weights.
For instance, we use separate weights for different memory patterns, i.e., for $N$ memory patterns, we spare weight $\bw_k$ to memory pattern $\bxi_k$.
Therefore, the adaptive similarity that can suit isolated variant distribution look like:
$$
s(\bxi_k, \bx) = \bw_k^\top \bU \tilde\bq
$$
This can fit each isolated likelihood $p_{\mV}(\bx | \bxi_k)$ as the weights are no longer shared.
But this rises more problem:
(1) it requires more samples, and it might be impossible in real-world scenarios.
(2) it requires samples generated by each memory pattern $\bxi_k$, as each individual weight $\bw_k$ is optimized by samples involving $\bxi_k$.

Additionally, the adaptive similarities are a family of similarity measures that can fit to the variant distribution through sampling, it is not solely Equation~\ref{eqn:ret-dyn}.
When proving the optimal correct retrieval for noisy, masked, and biased variants, we propose more adaptive similarity as theoretical tools.
We wrote Equation~\ref{eqn:ret-dyn} in the main text simply because it is the most effective ones for retrieving under noisy, masked, biased, and mixed settings, and it requires minimum trainable weight, as examined in ablation study (Appendix~\ref{apd:ablation}).

One more thing is that the priori $p_{\mV}(\bxi | \bx)$ are often ignored in this work.
Well, it should not be ignored in all cases. 
However, we can use a bias term $\bb \in \mathbb R^N$ and use $\bb_k$ capture the occurrance of pattern $\bxi_k$.
By assuming similarity is a logits of the posteriori $p_{\mV}(\bxi | \bx)$ (e.g., $\log p_{\mV}(\bxi | \bx)$), then we can see that $\argmax_{\bxi' \in \bXi} \{ \log p_{\mV}(\bxi | \bx) \} = \argmax_{\bxi' \in \bXi} \{ \log p_{\mV}(\bx | \bxi') + \log p_{\mV}(\bxi') \}$, and we set pass $\bs(\bXi, \bx) + \bb$ to the separation function so that $\bb$ handles the term $\log p_{\mV}(\bxi)$.

\subsection{Experiments} \label{apd:experiments}

\subsubsection{Baselines and Metrics}

For memory retrieval test, we compare our Adaptive Hopfield network \texttt{A-Hop} against:
Modern Hopfield network \texttt{M-Hop} \citep{M-Hop};
Universal Hopfield network \texttt{U-Hop} \citep{U-Hop} with $\sim(\bxi, \bx) = -\Vert \bxi - \bx \Vert_1$ and $\sep(\cdot) = \argmax(\cdot)$ as they report a leading performance for such configuration when masking out half of the dimensions in memory patterns (similar to masked variant, Definition~\ref{def:masked-var});
Kernelized Hopfield network \texttt{K-Hop} \citep{K-Hop} with the kernel optimized by separation loss proposed by \cite{K-Hop};
Kernelized Hopfield network \texttt{K$_2$-Hop} with the kernel optimized by loss defined by variant distribution (Equation~\ref{eqn:loss}) rather than the separation loss;
and Multi-Layer Perceptrons \texttt{MLP} that has 4 layers and a input dimensionality $d$, output dimensionality $N$, trying to fit $p_{\mV}(\bxi | \bx)$ but unsatisfactory.
We estimate the empirical retrieval accuracy of each model (Definition~\ref{def:emp-ret-acc}), and the mean squared error $\E_{(\bxi, \bx) \follows \mV(\bXi)} \left[ (\mT(\bx) - \bxi)^\top (\mT(\bx) - \bxi) \right]$.
We wrote a generator that can generate noisy, masked, biased, and mixed variants.

For tabular classification test, we compare the \texttt{A-Hop} with a memory-based classifier (Appendix~\ref{apd:tabular}) with:
\texttt{M-Hop} uses the same classifier framework but the unlearnable similarity function;
\texttt{K$_2$-Hop} that uses the same classifier framework but a different similarity function, and its kernel function is optimized by the classification loss;
Extremely Randomized Trees \citep{ERT}, or Extra Trees, a tree-based classifier;
Random Forest \citep{RF}, yes, the famous Random Forest classifier;
AdaBoost \citep{AdaBoost}, a classic boosting classifier;
and XGBoost \citep{XGBoost}, an enhanced Gradient Boosting Decision Tree.
All models are tuned with a 5-fold cross-validation on the training set.
We measure the test accuracy for the dataset with a positive sample rate $0.2 < \%\,\text{pos} < 0.8$, and the ROC-AUC score otherwise, following the settings in \cite{DynFrs}.

For image classification task, we follow the settings in \cite{K-Hop}, where they replaced the attention component with a \texttt{HopfieldLayer} \citep{M-Hop}.
We experiment by integrating \texttt{A-Hop}, \texttt{M-Hop}, and \texttt{K-Hop} into the \texttt{HopfieldLayer}.
Similarly, we follow the settings in \cite{M-Hop, S-Hop}, where they use \texttt{HopfieldPooling} for multiple instance learning.
We integrate \texttt{A-Hop}, \texttt{M-Hop}, and \texttt{K-Hop} to \texttt{HopfieldPooling}, and run a 5-fold cross-validation to report the mean ROC-AUC of all folds as the result.

For all experiments, we report the results with the mean and standard deviation of five runs.

\subsubsection{Datasets}

We used a total of 12 datasets to assess the performance of \texttt{A-Hop} on tasks including tabular classification (Adult, Bank, Vaccine, Purchase, and Heart), and image classification (CIFAR 10, CIFAR 100, and Tiny ImageNet), and multiple instance learning (Tiger, Fox, Elephant, UCSB).

\begin{description}
    \item[Adult] \citep{Adult} The prediction task for this dataset is to classify individuals' income levels as either above or below \$50,000 annually. The data was extracted by Barry Becker from the 1994 Census database.
    \item[Bank] \citep{Bank} To predict the success of a term deposit subscription, this dataset records the outcomes of telemarketing campaigns from a banking institution in Portugal.
    \item[Vaccine] \citep{Vaccine-1, Vaccine-2} We use this dataset from a DrivenData competition to predict if a person received a seasonal flu vaccine. The data consists of 26,707 survey responses detailing 36 behavioral and personal attributes.
    \item[Purchase] \citep{Purchase} The objective with this dataset is to forecast the online shopping intentions of visitors to an e-commerce website, determining whether a user will proceed with a purchase.
    \item[Heart] \citep{Heart} This dataset facilitates the prediction of cardiovascular disease presence. It contains health-related data from 70,000 patients, as provided by Ulianova.
    \item[CIFAR 10] \citep{CIFAR} is a classic image recognition dataset consisting of 60,000 $32\times 32$ color images in 10 classes, with 6,000 images per class.
    \item[CIFAR 100] \citep{CIFAR} is a more challenging version of CIFAR 10, containing the same number of images but split into 100 fine-grained classes.
    \item[Tiny ImageNet] \citep{TinyImageNet} is a subset of the ImageNet dataset designed for educational purposes. It contains 100,000 images from 200 classes, downsized to $64\times 64$ pixels.
    \item[Tiger, Fox, Elephant] \citep{ImageNet} These are specific class subsets extracted from the large-scale ImageNet database, often used for fine-grained image classification tasks.
    \item[UCSB] \citep{UCSB} This dataset is composed of 58 breast cancer histopathology images stained with Hematoxylin and Eosin (H\&E). The primary challenge it presents is the accurate segmentation of individual cells from the complex tissue background, which is a critical precursor to classifying cells as benign or malignant.
\end{description}

\subsubsection{Memory Retrieval} \label{apd:memory}

We first introduce the intensity of variant setting.
A triplet $(d_{\text{mask}}, d_{\text{noise}}, d_{\text{bias}}) \in [0, 1]^3$ is used to describe a variant setting.
This mean that we will first add a Gaussian noise $\mathbf n \follows \mathcal N(\mathbf 0, d_{\text{noise}} \bI)$ to a certain memory pattern $\bxi \in \bXi$ obtaining $\bx \leftarrow \bxi + \mathbf n$.
Then, we will choose $d \cdot d_{\text{mask}}$ indices, and set these indices of $\bx$ to random numbers choosen uniformly from $[-1, 1]$.
Finally, we will add a bias $\bd$ to $\bx \leftarrow \bx + \bd$ with $\bd_i = \bs_i \cdot d_{\text{bias}}$, where $\bs_i$ is a random sign sampled uniformly from $\{ -1, +1 \}$. 
Then, the generator return $(\bxi, \bx)$ as a sample of $\mV(\bXi)$.
Therefore, different triplet describle different variant settings, and thus, corresponds to different variant distribution $\mV(\bXi)$.

In the memory retrieval task, we use the retrieval dynamics written in Equation~\ref{eqn:ret-dyn}.
To learn the weight $\bw$'s and $\beta$'s we use optimizer \texttt{Adam} for 200 epoches, and use a learning rate 0.1 in all settings.
However, we argue that number of epoches (\texttt{N\_epoch}) and learning rate (\texttt{lr}) should be tuned when applying \texttt{A-Hop} to other memory retrieval settings.

For \texttt{K-Hop} and \texttt{K$_2$-Hop}, we tuned them carefully.
For the original \texttt{K-Hop}, it optimize a separation loss, and we try to make it as small as possible.
However, the retrieval accuracy of \texttt{K-Hop} is not satisfactory, and it is reasonable since it is not optimized for correct retrieval, but for $\epsilon$-retrieval.
We found that the domain of the memory pattern in their experiments is $[0, 1]^d$, and it is harmful to dot product-based methods (think of using only $2^{-d}$ of the space) as dot product highly relies on signs.
Therefore, for fair comparision, we sample the random vectors uniformly from $[-1, 1]^d$ and change the domain of pixels in MNIST images to $[-1, 1]$.

\subsubsection{Tabular Classification} \label{apd:tabular}

We develop a memory-based model for tabular classification that takes advantage of the excellent memory retrieval effectiveness of associative memories.
The insight is that classification is hierarchical, and we divide instances into \textit{cases}, and further classify cases into the final class.
For instance, there are type I diabetes and type II diabetes, where each type here resembles the idea of cases.
Another samples is that cats has plenty of breeds, and associative memory can capture specific idea of orange cat, or blue cat, while they have a hard time figuring out the generalize idea of cats.
So, we let associative memory match instances into cases by choosing some instances in the training set as the representatives of cases, or use $K$-means cluster to produce such representatives, and pass the retrieval probability to a multi-layer perceptron (MLP) for classifying cases.
That is, the model can be represented as $\mathbf y = \texttt{MLP}(\texttt{LayerNorm}(\sim(\bXi, \bx)))$, and this can be trained on a conventional machine learning fashion by minimizing classification loss:
\begin{equation} \label{eqn:class-loss}
\mathcal L(\mathcal D) = \E_{(\bx, \by) \follows \mathcal D}[(\by - \hat \by)^2]
\end{equation}

We no longer tune weights in adaptive memories using loss defined in Equation~\ref{eqn:loss}, as they can be tuned using the classification loss when participanting into going forward in the network.

We tuned the hyperparameters by grid searching on the training data:

\begin{table}[!htbp]
\centering
\caption{Hyperparameters tuned in tabular classification}
\small
\begin{tabular}{cc}
\toprule
Name & Domain \\
\midrule
\texttt{N\_epoch} & $\{ 50, 100, 150, 200, 250 \}$ \\
\texttt{batch\_size} & $\{ 100, 1000 \}$ \\
\texttt{lr} & $\{ 2\cdot10^{-3}, 10^{-3}, 5\cdot10^{-4}, 2\cdot10^{-4}, 10^{-4} \}$ \\
\texttt{init} & Cluster, No cluster \\
\bottomrule
\end{tabular}
\end{table}

\subsubsection{Image Classification and Multiple Instance Learning} \label{apd:image}

For image classification, we follow the settings in \cite{K-Hop}, and place the adaptive similarity to the Hopfield layer inside a image Transformer.
For \texttt{M-Hop} and \texttt{K-Hop}, we use the hyperparameter suggested in \cite{K-Hop}, and find the optimal number of epoch and learning rate for \texttt{A-Hop} via grid search.
We run five iterations of separate loss optimization for \texttt{A-Hop} before testing.
\begin{table}[!htbp]
\centering
\caption{Hyperparameters tuned in image classification}
\small
\begin{tabular}{cc}
\toprule
Name & Domain \\
\midrule
\texttt{N\_epoch} & $\{ 25, 40, 50 \}$ \\
\texttt{lr} & $\{ 10^{-3}, 10^{-4} \}$ \\
\bottomrule
\end{tabular}
\end{table}

For multiple instance learning, we follow the settings in \cite{S-Hop}, and use \texttt{HopfieldPooling} as the backbone.
Similarly, we place the adaptive similarity in the core Hopfield component replacing the fixed dot product.
All experiments are run on a 5-fold validation, and the ROC-AUC scored is taken as the mean of all folds.
\begin{table}[!htbp]
\centering
\caption{Hyperparameters tuned in multiple instance learning}
\small
\begin{tabular}{cc}
\toprule
Name & Domain \\
\midrule
\texttt{N\_epoch} & $\{ 10, 20, 40 \}$ \\
\texttt{lr} & $\{ 10^{-3}, 10^{-4}, 10^{-5} \}$ \\
\texttt{lr\_decay} & $\{ 0.9, 0.75 \}$ \\
\bottomrule
\end{tabular}
\end{table}

\subsubsection{Ablation Study} \label{apd:ablation}

We conduct four different ablation studies to see the effective of different components.

In the first experiment (Table~\ref{tab:ablation-1}), we tested if sorting the dimension-wise similarity vector $\bq$ and the upper-right triangle matrix $\bU$ is needed.
The results shows that both of them are necessay for high retrieval accuracy.

\begin{table}[H]
\centering
\caption{Retrieval accuracy ($\uparrow$) and error ($\downarrow$) between unsorted and sorted $\bq$. Each cell contains the mean accuracy or error with standard deviation in a smaller font. Results of the best-performing model are \textbf{bolded}.}
\label{tab:ablation-1}
\vspace{-0.5\baselineskip}
\footnotesize
\setlength{\extrarowheight}{1pt}
\begin{tabular}{cc|C{1.5cm}C{1.5cm}|C{1.5cm}C{1.5cm}}
\toprule
\multicolumn{2}{c|}{Conditions} & 			\multicolumn{2}{c|}{Synthetic ($d = 0.4$)} & 		\multicolumn{2}{c}{Synthetic ($d = 0.5$)} \\
\midrule
$\bq$ sorted? & use $\bU$? & 		Accuracy & 				Error & 				Accuracy & 				Error \\
\midrule
\ding{55} & 	\ding{55} & 		\stat{.5172}{.034} & 	\stat{.1900}{.017} & 	\stat{.2094}{.022} & 	\stat{.2658}{.005} \\
\ding{55} & 	\checkmark & 		\stat{.5444}{.007} & 	\stat{.1665}{.007} & 	\stat{.1888}{.015} & 	\stat{.2738}{.003} \\
\checkmark & 	\ding{55} & 		\stat{.6928}{.034} & 	\stat{.1173}{.010} & 	\stat{.3374}{.025} & 	\stat{.2324}{.006} \\
\checkmark & 	\checkmark & 		\btat{.7280}{.034} & 	\btat{.1033}{.011} & 	\btat{.3634}{.040} & 	\btat{.2207}{.011}  \\
\bottomrule
\end{tabular}
\end{table}

Next, we look for a better matrix $\bU$ (Table~\ref{tab:ablation-2}), which is the core of similarity measure.
Our results show that the upper-right triangle structure of $\bU$ is optimal, while making $\bU$ learnable and initialize it with the upper-right triangle yields the best result, but we does not adopt learnable $\bU$ in other experiments to keep minimum learnable parameters.
Another finding is that a randomized matrix $\bU$ is better than not having $\bU$ (when $\bU = \bI$).
Therefore, along with Table~\ref{tab:ablation-1}, we find that the footprint structure is essential to adaptive similarities. 

\begin{table}[H]
\centering
\caption{Retrieval accuracy ($\uparrow$) and error ($\downarrow$) between different configuration of $\bU$. Each cell contains the mean accuracy or error with standard deviation in a smaller font. Results of the best-performing model are \textbf{bolded}, and the second are \underline{underlined}.}
\label{tab:ablation-2}
\vspace{-0.5\baselineskip}
\footnotesize
\setlength{\extrarowheight}{1pt}
\begin{tabular}{cc|C{1.5cm}C{1.5cm}|C{1.5cm}C{1.5cm}}
\toprule
\multicolumn{2}{c|}{Conditions} & 			\multicolumn{2}{c|}{Synthetic ($d = 0.4$)} & 		\multicolumn{2}{c}{Synthetic ($d = 0.5$)} \\
\midrule
$\bU$ learnable? & initialization? & 		Accuracy & 				Error & 				Accuracy & 				Error \\
\midrule
\ding{55} & 	Random & 		\stat{.6928}{.015} & 	\stat{.1147}{.005} & 	\stat{.3340}{.019} & 	\stat{.2305}{.005}  \\
\ding{55} & 	$\bI$ & 		\stat{.6802}{.013} & 	\stat{.1194}{.004} & 	\stat{.3458}{.025} & 	\stat{.2298}{.008}  \\
\ding{55} & 	$\bU$ & 		\utat{.7242}{.016} & 	\utat{.1056}{.007} & 	\utat{.3600}{.024} & 	\stat{.2272}{.005}  \\
\checkmark & 	Random & 		\stat{.7176}{.027} & 	\stat{.1087}{.007} & 	\stat{.3414}{.023} & 	\stat{.2292}{.006}  \\
\checkmark & 	$\bI$ & 		\stat{.7114}{.019} & 	\stat{.1096}{.006} & 	\stat{.3528}{.021} & 	\utat{.2270}{.005}  \\
\checkmark & 	$\bU$ & 		\btat{.7280}{.034} & 	\btat{.1033}{.011} & 	\btat{.3634}{.040} & 	\btat{.2207}{.011}  \\
\bottomrule
\end{tabular}
\end{table}

We also tested the effect of different footprint with different base similarity (see Table~\ref{tab:ablation-3}).
Results shows that two footprints ($\dis$ and $\dot$) is always better than one alone, and we found that $\ftpt_{\dis}$ performs better than $\ftpt_{\dot}$ in this setting.
However, even though retrieval accuracy degrades by removing one of the footprint, using $\ftpt_{\dis}$ or $\ftpt_{\dot}$ only is still better than other Hopfield networks.

\begin{table}[H]
\centering
\caption{Retrieval accuracy ($\uparrow$) and error ($\downarrow$) between different usage of footprint. Each cell contains the mean accuracy or error with standard deviation in a smaller font. Results of the best-performing model are \textbf{bolded}.}
\label{tab:ablation-3}
\vspace{-0.5\baselineskip}
\footnotesize
\setlength{\extrarowheight}{1pt}
\begin{tabular}{cc|C{1.5cm}C{1.5cm}|C{1.5cm}C{1.5cm}}
\toprule
\multicolumn{2}{c|}{Conditions} & 			\multicolumn{2}{c|}{Synthetic ($d = 0.4$)} & 		\multicolumn{2}{c}{Synthetic ($d = 0.5$)} \\
\midrule
use $\ftpt_{\dis}$? & use $\ftpt_{\dot}$? & 		Accuracy & 				Error & 				Accuracy & 				Error \\
\midrule
\ding{55} & 	\checkmark & 		\stat{.5926}{.016} & 	\stat{.1517}{.005} & 	\stat{.2520}{.027} & 	\stat{.2515}{.004}  \\
\checkmark & 	\ding{55} & 		\stat{.6458}{.042} & 	\stat{.1317}{.011} & 	\stat{.3286}{.023} & 	\stat{.2326}{.007}  \\
\checkmark & 	\checkmark & 		\btat{.7242}{.016} & 	\btat{.1056}{.007} & 	\btat{.3600}{.024} & 	\btat{.2272}{.005}  \\
\bottomrule
\end{tabular}
\end{table}

Finally, we tested the number of samples needed for adaptive similarity to mimic the variant distribution.
It looks like training on only $512$ samples provides a good enough adaptive similarity for 2048 64-dimension memory patterns.

\begin{table}[H]
\centering
\caption{Retrieval accuracy ($\uparrow$) and error ($\downarrow$) between different number of samples provided for learning. Each cell contains the mean accuracy or error with standard deviation in a smaller font. Results of the best-performing model are \textbf{bolded}.}
\vspace{-0.5\baselineskip}
\footnotesize
\setlength{\extrarowheight}{1pt}
\begin{tabular}{c|C{1.5cm}C{1.5cm}|C{1.5cm}C{1.5cm}}
\toprule
\multicolumn{1}{c|}{Conditions} & 			\multicolumn{2}{c|}{Synthetic ($d = 0.4$)} & 		\multicolumn{2}{c}{Synthetic ($d = 0.5$)} \\
\midrule
number of samples? & 		Accuracy & 				Error & 				Accuracy & 				Error \\
\midrule
512 & 		\stat{.7204}{.029} & 	\stat{.1077}{.010} & 	\stat{.3541}{.028} & 	\stat{.2311}{.004}  \\
$\infty$ & 	\btat{.7242}{.016} & 	\btat{.1056}{.007} & 	\btat{.3600}{.024} & 	\btat{.2272}{.005}  \\
\bottomrule
\end{tabular}
\end{table}

\begin{table}[!htb]
\centering
\caption{Retrieval accuracy ($\uparrow$) between models in multi-modal memory retrieval task. Each cell contains the mean accuracy or error with standard deviation in a smaller font. Results of the best-performing model are \textbf{bolded}.}
\label{tab:multimodal}
\vspace{-0.5\baselineskip}
\footnotesize
\setlength{\extrarowheight}{1pt}
\begin{tabular}{cc|cccc}
\toprule
\multicolumn{2}{c|}{Model} & \multicolumn{4}{c}{Number of concepts} \\
\midrule
Phrase 1 & Phrase 2 & 128 & 256 & 512 & 1024 \\
\midrule
\texttt{M-Hop}		& \texttt{M-Hop}		& \stat{.173}{.02} & \stat{.092}{.01} & \stat{.044}{.01} & \stat{.021}{.00} \\
\texttt{K$_2$-Hop} 	& \texttt{K$_2$-Hop} 	& \stat{.574}{.04} & \stat{.349}{.02} & \stat{.146}{.02} & \stat{.081}{.02} \\
\texttt{K$_2$-Hop} 	& \texttt{A-Hop} 		& \stat{.636}{.03} & \stat{.465}{.02} & \stat{.276}{.03} & \stat{.183}{.02} \\
\texttt{A-Hop} 		& \texttt{K$_2$-Hop} 	& \stat{.904}{.02} & \stat{.709}{.03} & \stat{.492}{.04} & \stat{.328}{.02} \\
\rowcolor{gray!20}
\texttt{A-Hop} 		& \texttt{A-Hop} 		& \btat{.997}{.00} & \btat{.998}{.00} & \btat{.993}{.00} & \btat{.988}{.00} \\
\bottomrule
\end{tabular}
\end{table}

\begin{table}[!htb]
\centering
\caption{Training time ($\downarrow$) of \texttt{A-Hop} with and without sorting.}
\label{tab:timing}
\vspace{-0.5\baselineskip}
\footnotesize
\setlength{\extrarowheight}{1pt}
\begin{tabular}{c|ccc|ccc}
\toprule
sort? & \multicolumn{3}{c|}{Scaling dimensionality $d$} & \multicolumn{3}{c}{Scaling number of patterns $N$} \\
\midrule
$d$ & 64 & 128 & 256 & 64 & 64 & 64 \\
$N$ & 2048 & 2048 & 2048 & 2048 & 4096 & 8192 \\
\midrule
\multicolumn{7}{c}{Training time (ms, $\downarrow$)} \\
\midrule
\checkmark 	& 840 	& 1573 	& 3609 	& 840 	& 1531 	& 2949 	\\
\ding{55}	& 2093 	& 3059 	& 8224 	& 2093 	& 4011 	& 7983 	\\
ratio		& 2.49$\times$ & 1.95$\times$ & 2.27$\times$ & 2.49$\times$ & 2.62$\times$ & 2.71$\times$ \\
\midrule
\multicolumn{7}{c}{Inference time (ms, $\downarrow$)} \\
\midrule
\checkmark 	& .355 	& .368 	& .414 	& .355 	& .446 	& .570 	\\
\ding{55}	& .287 	& .318 	& .326 	& .287 	& .364 	& .467 	\\
ratio		& 1.23$\times$ & 1.15$\times$ & 1.27$\times$ & 1.23$\times$ & 1.23$\times$ & 1.22$\times$ \\
\bottomrule
\end{tabular}
\end{table}

\end{document}